
\documentclass{article}

\usepackage{microtype}
\usepackage{graphicx}
\usepackage{booktabs} 
\usepackage{caption}
\usepackage{subcaption}
\usepackage{wrapfig}
\usepackage{hyperref}
\usepackage{enumitem}



\usepackage{amsmath,amsfonts,bm}



 \newenvironment{myitemize}{\begin{list}{$\bullet$}
{\setlength{\topsep}{1mm}
\setlength{\itemsep}{0.25mm}
\setlength{\parsep}{0.25mm}
\setlength{\itemindent}{0mm}
\setlength{\partopsep}{0mm}
\setlength{\labelwidth}{15mm}
\setlength{\leftmargin}{4mm}}}{\end{list}}







\def\eqref#1{equation~\ref{#1}}









\def\1{\bm{1}}










\DeclareMathAlphabet{\mathsfit}{\encodingdefault}{\sfdefault}{m}{sl}
\SetMathAlphabet{\mathsfit}{bold}{\encodingdefault}{\sfdefault}{bx}{n}













\newcommand{\li}[1]{{{\color{magenta}\textbf{}}}}

\usepackage[arxiv]{icml2024}


\usepackage{amsmath}
\usepackage{amssymb}
\usepackage{mathtools}
\usepackage{amsthm}
\usepackage{makecell}
\usepackage[capitalize,noabbrev]{cleveref}

\theoremstyle{plain}
\newtheorem{theorem}{Theorem}
\newtheorem{proposition}{Proposition}
\newtheorem{lemma}{Lemma}
\newtheorem{corollary}{Corollary} 
\theoremstyle{definition}
\newtheorem{definition}{Definition}
\newtheorem{assumption}{Assumption}
\theoremstyle{remark}
\newtheorem{remark}{Remark}
\newtheorem{example}{Example}
\usepackage[textsize=tiny]{todonotes}

\icmltitlerunning{PAGAR: Taming Reward Misalignment in Inverse Reinforcement Learning-Based Imitation Learning with \underline{P}rotagonist \underline{A}ntagonist \underline{G}uided \underline{A}dversarial \underline{R}eward}

\begin{document}

\twocolumn[
\icmltitle{PAGAR: Taming Reward Misalignment in \\ 
Inverse Reinforcement Learning-Based Imitation Learning with \\ \underline{P}rotagonist \underline{A}ntagonist \underline{G}uided \underline{A}dversarial \underline{R}eward}



\icmlsetsymbol{equal}{*}

\begin{icmlauthorlist}
\icmlauthor{Weichao Zhou}{yyy}
\icmlauthor{Wenchao Li}{yyy}
\end{icmlauthorlist}

\icmlaffiliation{yyy}{Department of Electrical and Computer Engineering, Boston University, Boston, United States}

\icmlcorrespondingauthor{Weichao Zhou}{zwc662@bu.edu}

\icmlkeywords{Machine Learning, ICML}

\vskip 0.3in
]



\printAffiliationsAndNotice{}  

\begin{abstract}
Many imitation learning (IL) algorithms employ inverse reinforcement learning (IRL) to infer the intrinsic reward function that an expert is implicitly optimizing for, based on their demonstrated behaviors. 
However, in practice, IRL-based IL can fail to accomplish the underlying task due to a misalignment between the inferred reward and the objective of the task.
In this paper, we address the susceptibility of IL to such misalignment by introducing a semi-supervised reward design paradigm called Protagonist Antagonist Guided Adversarial Reward (PAGAR).
PAGAR-based IL trains a policy to perform well under mixed reward functions instead of a single reward function as in IRL-based IL.
We identify the theoretical conditions under which PAGAR-based IL can avoid the task failures caused by reward misalignment.
We also present a practical on-and-off policy approach to implementing PAGAR-based IL. 
Experimental results show that our algorithm outperforms standard IL baselines in complex tasks and challenging transfer settings.
\end{abstract}

\section{Introduction}\label{sec:intro}
The central principle of reinforcement learning (RL) is reward maximization~\cite{mnih2015,silver2016,bertsekas2009}. 
The effectiveness of RL thus hinges on having a proper reward function that drives the desired behaviors~\cite{silver2021}.
\li{Suggest to change to something like ``Effectiveness of RL hinges on having a proper reward ... ''} 
Inverse reinforcement learning (IRL)~\cite{ng2000, finn2017} is a well-known approach that aims to learn an expert's reward by observing the expert demonstrations.
IRL is also often leveraged as a subroutine in imitation learning (IL) where the learned reward function is used to train a policy via RL~\cite{abbeel2004,gail}. 
However, the reward function inferred via IRL can be misaligned with the true task objective. 
One common cause of the misalignment is reward ambiguity -- multiple reward functions can be consistent with expert demonstrations even when there are infinite data~\cite{ng2000,cao2021,skalse2022,skalse2022gaming}. 
Another crucial cause is false assumptions about how the preferences of the expert relate to the expert's demonstrations~\cite{skalse2023}.
Training a policy with a misaligned reward can result in reward hacking and task failures~\cite{ird,amodei2016,pan2022}.

In this paper, we consider a reward function as aligning with a task, which is unknown to the learner, if the performances of policies under this reward function accurately indicate whether the policies succeed or fail in accomplishing the task.
Especially, we consider tasks that can be specified by a binary mapping $\Phi:\Pi\rightarrow \{\texttt{true}, \texttt{false}\}$, where $\Pi$ is the policy space, $\Phi(\pi) = \texttt{true}$ indicates that a policy $\pi\in\Pi$ succeeds in the task, and $\Phi(\pi) = \texttt{false}$ indicates that $\pi$ fails.
An example of such a task could be ``reach a target state with a probability greater than $90\%$".
Given any reward function $r$, we use the utility function $U_r(\pi)=\mathbb{E}_{\tau\sim\pi}[r(\tau)]$ to measure the performance of $\pi$ under $r$, and use $\mathbb{U}_r=[\underset{\pi\in\Pi}{\min}\ U_r(\pi), \underset{\pi\in\Pi}{\max}\ U_r(\pi)]$ to represent the range of $U_r$ for $\pi\in \Pi$.
Then, we formally define task-reward alignment in Definition \ref{def:sec1_1}. 

\begin{definition}[\textbf{Task-Reward Alignment}]\label{def:sec1_1}
Assume that there is an unknown task specified by $\Phi$. 
A reward function $r$ is \textbf{aligned} with this task if and only if there exist two intervals $S_r, F_r\subset \mathbb{U}_r$ that satisfy the following two conditions: 
(1) $(S_r\neq \emptyset) \wedge (\sup S_r = \underset{\pi\in\Pi}{\max}\ U_r(\pi))\wedge (F_r\neq \emptyset \Rightarrow \inf S_r>\sup F_r)$,
(2) for any policy $\pi\in \Pi$, $U_{r}(\pi)\in S_r \Rightarrow \Phi(\pi)$ and $U_{r}(\pi)\in F_r\Rightarrow \neg \Phi(\pi)$.
Otherwise, $r$ is considered \textbf{misaligned}.
\end{definition}

The definition suggests that when a reward function $r$ is aligned with the task, it is guaranteed that any policy achieving a high enough utility under $r$ can accomplish the task, and any policy achieving a low utility fails the task.
As for the policies that achieve utilities between the $S$(success) and $F$(failure) intervals, there is no guarantee for their success or failure.
When the reward function is misaligned with the task, even the optimal policy under this reward function cannot accomplish the task.
We visualize Definition \ref{def:sec1_1} in Figure \ref{fig:intro_1} where  $r^+$ is aligned with the underlying task while $r^-$ is misaligned.
The optimal policy $\pi^-$ under $r^-$ fails the task since its utility $U_{r^+}(\pi^-)$ under $r^+$ is in the $F_{r^+}$ interval.
The optimal policy $\pi^+$ under $r^+$ can accomplish the task since its utility is the supremum of $S_{r^+}$.
As the task is unknown, there is no guarantee that the reward function learned via IRL aligns with the task, exposing IRL-based IL to potential task failure.
Our idea to tackle this challenge is to retrieve a set of reward functions under which the expert demonstrations perform well and then train the agent policy also to achieve high utilities under all those reward functions.
The rationale is to use expert demonstrations as weak supervision for locating the task-aligned reward functions.
From the perspective of imitation, we can gain from those reward functions a collective validation of the similarity between the agent policy and the expert.
For instance, in Figure \ref{fig:intro_1}, IRL may infer $r^-$ as the optimal reward function and $r^+$ as a sub-optimal reward function. 
IRL-based IL will only use $r^-$ to learn its optimal policy $\pi^-$, thus failing the task.
Our aim is to learn $\pi^+$, which can achieve high utilities under both $r^+$ and $r^-$, just like the expert demonstration $E$.

\begin{figure}[!t]
    \includegraphics[width=0.48\textwidth]{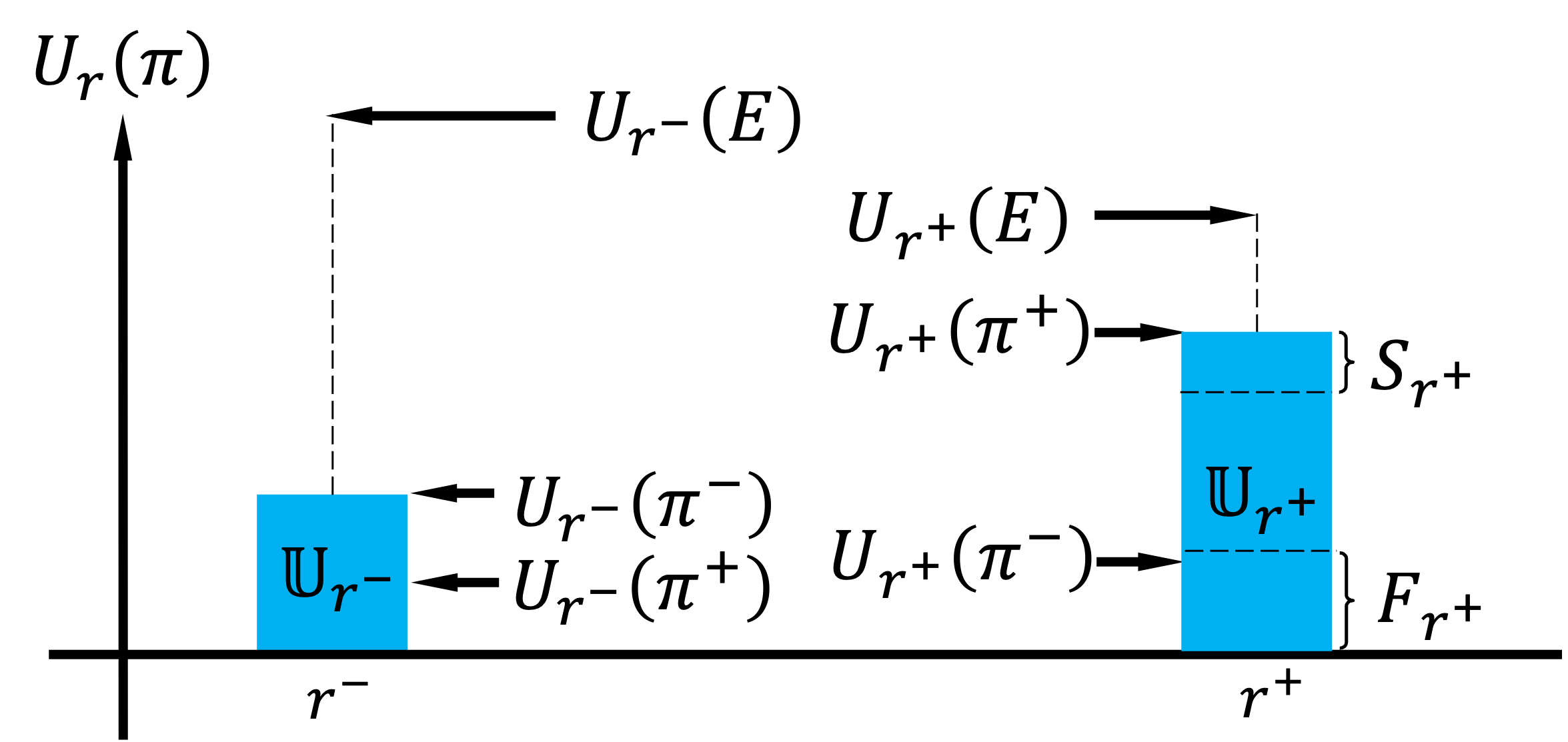}%
  \caption{
  Between the two reward functions, $r^-$ is misaligned with the task, and $r^+$ is aligned with the task.
  The vertical axis measures the ranges of $U_{r^+}(\pi)$ and $U_{r^-}(\pi)$ for $\forall \pi\in\Pi$. 
  $\mathbb{U}_{r^+}$ is the interval $[\underset{\pi\in\Pi}{\min}\ U_{r^+}(\pi), \underset{\pi\in\Pi}{\max}\ U_{r^+}(\pi)]$ where
  $S_{r^+}$ and $F_{r^+}$ are two disjoint intervals such that any policy achieving higher utility than $\inf S_{r^+}$ can succeed in the task, and that achieving lower utility than $\sup F_{r^+}$ fails.
  $\mathbb{U}_{r^-}$ is the interval $[\underset{\pi\in\Pi}{\min}\ U_{r^-}(\pi), \underset{\pi\in\Pi}{\max}\ U_{r^-}(\pi)]$ where the $S$ and $F$ intervals do not exist.
  The reward function $r^-$ is the optimal solution for IRL since $E$ maximally outperforms any other policy under $r^-$.
  IRL-based IL learns its optimal policy $\pi^-$, which, however, has a low utility $U_{r^+}(\pi^-)\leq \sup F_{r^+}$ under $r^+$, thus failing the task.
  In contrast, $\pi^+$ performs consistently well under $r^+$ and $r^-$ as $E$ does.
  }\label{fig:intro_1}
  \vspace{-3mm}
\end{figure}

We concretize our proposition in a novel semi-supervised reward design paradigm called {\underline{P}rotagonist \underline{A}ntagonist \underline{G}uided \underline{A}dversarial \underline{R}eward (PAGAR)}.
Treating the policy trained for the underlying task as a protagonist policy, PAGAR adversarially searches for reward functions to challenge the protagonist policy to be on par with the optimal policy under that reward function (which we call an antagonist policy).
We only consider the reward functions under which the expert demonstrations achieve high utility and iteratively train the protagonist policy with those reward functions. 
Experimental results show that our algorithm outperforms baselines on complex IL tasks with limited demonstrations.
We summarize our contributions below.
\begin{myitemize}
\item We propose PAGAR, a semi-supervised reward design paradigm for mitigating reward misalignment. 
\item We identify the technical conditions for PAGAR-based IL to avoid failures in the underlying task.
\item We develop an on-and-off-policy approach to implementing this paradigm in IL.  
\end{myitemize}

\section{Related Works}\label{sec:relatedworks}

IRL-based IL circumvents many challenges of traditional IL such as compounding error \cite{ross2010,ross2011,zhou2020runtime} by learning a reward function to interpret the expert behaviors \cite{ng1999,ng2000} and then learning a policy from the reward function via reinforcement learning (RL)\cite{sutton2018rl}. 
However, the learned reward function may not always align with the underlying task, leading to reward misspecification~\cite{pan2022,skalse2023}, reward hacking~\cite{skalse2022gaming}, and reward ambiguity~\cite{ng2000,cao2021}.
The efforts on alleviating reward ambiguity include Max-Entropy IRL  \cite{maxentirl}, Max-Margin IRL  \cite{abbeel2004,ratliff2006}, and Bayesian IRL  \cite{bayesirl2007}. 
GAN-based methods~\cite{gail,bayesgail2018,finn2016,vail,airl} use neural networks to learn reward functions from limited demonstrations.
However, those efforts target reward ambiguity but fall short of mitigating the general impact of reward misalignment, which can be caused by various reasons such as IRL making false assumptions about the human model \cite{skalse2022, hong2022sensitivity}.
Our approach seeks to mitigate the general impact of reward misalignment.
Other attempts to mitigate reward misalignment involve external information other than expert demonstrations~\cite{hejna2023inverse, cav2018,aaai2022,icml2022}.
Our work adopts the generic setting of IRL-based IL without involving additional information.
The idea of considering a reward set instead of focusing on a single reward function is supported by \cite{metelli2021} and \cite{lindner2022}.
However, we target reward ambiguity, not reward misalignment.
Our protagonist and antagonist setup is inspired by the concept of unsupervised environment design (UED) from \cite{paired}.
In this paper, we develop novel theories in the context of reward learning.
\section{Preliminaries}\label{sec:prelim}
\noindent \textbf{Reinforcement Learning (RL)} models the environment as a Markov Decision process $\mathcal{M}=\langle \mathbb{S, A}, \mathcal{P}, d_0\rangle$ where $\mathbb{S}$ is the state space,
$\mathbb{A}$ is an action space, $\mathcal{P}(s'|s, a)$ is the probability of reaching a state $s'$ by performing an action $a$ at a state $s$, and $d_0$ is an initial state distribution. 
A \textit{policy} $\pi(a|s)$ determines the probability of an RL agent performing an action $a$ at state $s$. 
By successively performing actions for $T$ steps from an initial state $s^{(0)}\sim d_0$, a \textit{trajectory} $\tau=s^{(0)}a^{(0)}s^{(1)}a^{(1)}\ldots s^{(T)}$\li{including $a^{(T)}$?} is produced.
A state-action based \textit{reward function} is a mapping $r:S\times A\rightarrow \mathbb{R}$. 
The soft Q-value function of $\pi$ is $\mathcal{Q}_\pi(s, a)=r(s, a) + \gamma \cdot \underset{s'\sim \mathcal{P}(\cdot|s, a)}{\mathbb{E}}\left[\mathcal{V}_\pi(s')\right]$ where $\mathcal{V}_\pi$ is the soft state-value function of $\pi$ defined as $\mathcal{V}_\pi(s):=\underset{a\sim \pi(\cdot|s)}{\mathbb{E}}\left[\mathcal{Q}_\pi(s, a)\right] + \mathcal{H}(\pi(\cdot|s))$, and $\mathcal{H}(\pi(\cdot|s))$ is the entropy of $\pi$ at a state $s$. 
The soft advantage of performing action $a$ at state $s$ then following a policy $\pi$ afterwards is then $\mathcal{A}_\pi(s,a)=\mathcal{Q}_\pi(s, a)-\mathcal{V}_\pi(s)$. 
The expected return of $\pi$ under a reward function $r$ is given as $U_r(\pi)=\underset{\tau\sim \pi}{\mathbb{E}}[\sum^T_{t=0} r(s^{(t)}, a^{(t)})]$.
\li{the $U$ notation is uncommon; not a major issue though}
With a little abuse of notations, we denote $r(\tau):=\sum^T_{t=0} \gamma^t \cdot r(s^{(t)}, a^{(t)})$, and $\mathcal{H}(\pi):= \sum^T_{t=0}\underset{s^{(t)}\sim \pi}{\mathbb{E}}[\gamma^t \cdot\mathcal{H}(\pi(\cdot|s^{(t)}))]$.
The standard RL learns a policy by maximizing $U_r(\pi)$.
The entropy regularized RL learns a policy by maximizing the objective function $J_{RL}(\pi;r):=U_r(\pi)+\mathcal{H}(\pi)$.

\noindent\textbf{Inverse Reinforcement Learning (IRL)} assumes that a set $E = \{\tau_1,\ldots, \tau_N\}$  of expert demonstrations are sampled from the roll-outs of the expert's policy $\pi_E$ which is optimal under an expert reward $r_E$.
IRL~\cite{ng2000} learns $r_E$ by solving for the reward function $r$ that maximizes the margin $U_r(E) - \underset{\pi}{\max}\  U_r(\pi) $. 
Maximum Entropy IRL (MaxEnt IRL)~\cite{maxentirl} further proposes an entropy regularized objective function $J_{IRL}(r)=U_r(E) - (\underset{\pi}{\max}\  U_r(\pi) + \mathcal{H}(\pi))$.
The generic IRL-based IL learns $\pi_E$ via  $\arg\underset{\pi\in \Pi}{\max}\ J_{RL}(\pi; r^*)\  s.t.\ r^*=\arg\underset{r}{\max}\ J_{IRL}(r)$.

\noindent \textbf{Generative Adversarial Imitation Learning (GAIL)}~\cite{gail} draws a connection between IRL and Generative Adversarial Nets (GANs) as shown in Eq.\ref{eq:prelm_1} where a discriminator $D:\mathbb{S}\times\mathbb{A}\rightarrow [0,1]$ is trained by minimizing Eq.\ref{eq:prelm_1} so that $D$ can accurately identify any $(s,a)$ generated by the agent. Meanwhile, an agent policy $\pi$ is trained as a generator to maximize Eq.\ref{eq:prelm_1} so that $D$ cannot discriminate $\tau\sim \pi$ from $\tau_E$. 
Adversarial inverse reinforcement learning (AIRL)~\cite{airl} uses a neural network reward function $r$ to represent $D(s,a):=\frac{\pi(a|s)}{\exp(r(s,a)) + \pi(a|s)}$, rewrites $J_{IRL}$ as minimizing Eq.\ref{eq:prelm_1}, and proves that the optimal $r$ satisfies $r^*\equiv \log \pi_E\equiv \mathcal{A}_{\pi_E}$. 
By training $\pi$ with $r^*$ until optimality, $\pi$ will behave just like $\pi_E$. 
\begin{eqnarray}
\underset{{(s, a)\sim \pi}}{\mathbb{E}}\left[\log D(s, a))\right]+ \underset{{(s, a)\sim\pi_E}} {\mathbb{E}}\left[\log  (1 - D(s,a)) \right]\label{eq:prelm_1}
\end{eqnarray}

\section{Protagonist Antagonist Guided Adversarial Reward (PAGAR)}\label{sec:pagar1}

In this section, we formalize the reward misalignment problem in IRL-based IL and
introduce our semi-supervised reward design paradigm, PAGAR.
We then theoretically analyze how PAGAR-based IL can mitigate reward misalignment.

\subsection{Reward Misalignment in IRL-Based IL}\label{subsec:pagar1_1}

In IRL-based IL, we use $\pi_{r^*}$ to denote the optimal policy under the optimal reward function $r^*$ learned through IRL.
According to Definition \ref{def:sec1_1}, we can derive Lemma \ref{lm:sec4_1}, of which the proof is in Appendix \ref{subsec:app_a_5}. 
\begin{lemma} \label{lm:sec4_1}
    The optimal solution $r^*$ of IRL is misaligned with the task specified by $\Phi$ iff $\Phi(\pi_{r^*})\equiv false$.
\end{lemma}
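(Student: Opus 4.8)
The plan is to prove the equivalent statement that $r^*$ is \emph{aligned} with $\Phi$ if and only if $\Phi(\pi_{r^*}) = \texttt{true}$; this is obtained by negating both sides of the claimed equivalence, using that ``aligned'' and ``misaligned'' are complementary by Definition~\ref{def:sec1_1} and that $\Phi$ is binary. I would prove the two implications in turn. For ``aligned $\Rightarrow \Phi(\pi_{r^*}) = \texttt{true}$'', assume $r^*$ is aligned and let $S_{r^*}, F_{r^*} \subset \mathbb{U}_{r^*}$ be the intervals guaranteed by the definition. Because $\pi_{r^*}$ is the optimal policy under $r^*$, its utility sits at the top of the achievable range, $U_{r^*}(\pi_{r^*}) = \max_{\pi\in\Pi} U_{r^*}(\pi)$, which by condition~(1) equals $\sup S_{r^*}$. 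Since $S_{r^*}$ is a nonempty interval whose supremum is a utility value actually attained by a policy, $U_{r^*}(\pi_{r^*}) \in S_{r^*}$, and condition~(2) then forces $\Phi(\pi_{r^*}) = \texttt{true}$.

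For the converse, ``$\Phi(\pi_{r^*}) = \texttt{true} \Rightarrow$ aligned'', I would exhibit witnessing intervals explicitly. The natural minimal choice is the degenerate interval $S_{r^*} := \{\,U_{r^*}(\pi_{r^*})\,\}$ at the maximum of $U_{r^*}$ over $\Pi$, together with $F_{r^*} := \emptyset$. Condition~(1) is then immediate: $S_{r^*} \neq \emptyset$; $\sup S_{r^*} = U_{r^*}(\pi_{r^*}) = \max_{\pi\in\Pi} U_{r^*}(\pi)$; and the clause constraining $F_{r^*}$ holds vacuously because $F_{r^*} = \emptyset$. For condition~(2), any $\pi$ with $U_{r^*}(\pi) \in S_{r^*}$ attains the maximal utility under $r^*$, i.e.\ agrees with $\pi_{r^*}$ in the relevant sense, so $\Phi(\pi) = \texttt{true}$, while the $F_{r^*}$ half of~(2) is vacuous. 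Hence $r^*$ is aligned, which completes the equivalence.

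The part that needs genuine care — and where I expect the appendix argument to concentrate its effort — is the treatment of extremal utilities. First, one must justify that in the aligned case $S_{r^*}$ actually \emph{contains} its supremum $\max_{\pi} U_{r^*}(\pi)$ rather than merely approaching it from below; this matches the picture in Figure~\ref{fig:intro_1} and the surrounding discussion, and can be underpinned by noting that $\mathbb{U}_{r^*}$ is a genuine closed interval (the image of the linear functional $\pi \mapsto U_{r^*}(\pi)$ over the convex set of occupancy measures), so $\max_\pi U_{r^*}(\pi)$ is attained. Second, one must pin down that $U_{r^*}(\pi_{r^*})$ indeed equals $\max_{\pi\in\Pi} U_{r^*}(\pi)$ — that the policy $\pi_{r^*}$ produced by IRL-based IL is return-optimal under $r^*$ — and, if several policies attain this maximum, that the hypothesis $\Phi(\pi_{r^*})$ is read as bearing on all of them (as the ``$\equiv$'' in the statement suggests), which is what legitimizes the singleton $S_{r^*}$ in the converse direction. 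Apart from these two points, the argument is a direct unwinding of Definition~\ref{def:sec1_1}.
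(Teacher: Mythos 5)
Your proposal is correct and follows essentially the same route as the paper's proof in Appendix~\ref{subsec:app_a_5}: both directions rest on the same two observations, namely that alignment forces $U_{r^*}(\pi_{r^*})=\sup S_{r^*}\in S_{r^*}$ so that condition~(2) yields $\Phi(\pi_{r^*})=\texttt{true}$, and that conversely a degenerate singleton $S_{r^*}=[U_{r^*}(\pi_{r^*}),U_{r^*}(\pi_{r^*})]$ with $F_{r^*}=\emptyset$ witnesses alignment when $\Phi(\pi_{r^*})=\texttt{true}$. The two caveats you flag (that $S_{r^*}$ contains its supremum, and that ties among return-optimal policies are covered by the ``$\equiv$'') are implicitly assumed in the paper's own one-line argument as well, so your version is, if anything, the more carefully stated of the two.
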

Since the underlying task is unknown to the IRL agent, there is no guarantee that the learned reward function $r^*$ aligns with the task.
Lemma \ref{lm:sec4_1} implies that IRL-based IL is susceptible to reward misalignment because it trains an agent policy to be optimal only under $r^*$.
To overcome this challenge, we leverage expert demonstrations as weak supervision by considering a set of reward functions under which the expert demonstrations perform well, rather than relying solely on the single optimal reward function of IRL.
Specifically, we introduce a hyperparameter $\delta$ and define a \textit{$\delta$-optimal reward function set} $R_{E,\delta}:=\{r| J_{IRL}(r) \geq \delta\}$ that encompasses the reward functions under which the expert demonstrations outperform optimal policies by at least $\delta$.
This $\delta$ is upper-bounded by  $\delta^*:=\underset{r}{\max}\ J_{IRL}(r)$.
If $\delta =\delta^*$, $R_{E,\delta^*}$ only contains the optimal reward function(s) of IRL.
Otherwise, $R_{E,\delta}$ contains both the optimal and sub-optimal reward functions.
If $\delta$ is selected properly such that there exists a task-aligned reward function $r^+\in R_{E,\delta}$, 
we can mitigate reward misalignment by searching for a policy $\pi$ so that $U_{r^+}(\pi)\in S_{r^+}$.
Hence, we define the mitigation of reward misalignment as a policy search problem.
\begin{definition}[Mitigate Reward Misalignment]\label{def:sec4_2}
To mitigate reward misalignment in IRL-based IL is to learn a $\pi\in \Pi$ such that $U_{r^+}(\pi)\in S_{r^+}$ for some task-aligned reward function $r^+\in R_{E,\delta}$.
\end{definition}
This definition is straightforward since our ultimate goal is to accomplish the task.
However, the major obstacle is that it is unknown which $r\in R_{E,\delta}$ aligns with the task.
To tackle this, we refer back to the definition of $S$ in Definition \ref{def:sec1_1} and the illustration in Figure \ref{fig:intro_1}, which indicate that $U_{r^+}(\pi)\in S_{r^+}$ if and only if $\underset{\pi'\in\Pi}{\max}\ U_{r^+}(\pi')-U_{r^+}(\pi)<\sup S_{r^+} - \inf S_{r^+}$.
This inequality leads to the following strategy: learn a policy to make the left hand side of the inequality, which is the difference between the performance of the policy and the optimal performance under the reward function, as small as possible for all reward functions in $R_{E,\delta}$.
In this case, we refer to this policy as the \textit{protagonist policy} $\pi_P$.
For each reward function $r$, we define the difference between $U_r(\pi_P)$ and the maximum $U_r$ as the regret of $\pi_P$ under $r$, dubbed \textit{Protagonist Antagonist Induced Regret} as in Eq.\ref{eq:pagar1_1}.
The policy $\pi_A$ in Eq.\ref{eq:pagar1_1} is called an \textit{antagonist policy} since it magnifies the regret of $\pi_P$.
We then use this Eq.\ref{eq:pagar1_1} to define our semi-supervised reward design paradigm in Definition~\ref{def:sec4_1}.
\begin{eqnarray}
Regret(\pi_P, r):= \left\{\underset{\pi_A\in\Pi}{\max}\ U_r(\pi_A)\right\} - U_r(\pi_P)\label{eq:pagar1_1} 
\end{eqnarray}
\begin{definition}[Protagonist Antagonist Guided Adversarial Reward \textbf{(PAGAR)}]\label{def:sec4_1}
Given a candidate reward function set $R$ and a protagonist policy $\pi_P$, PAGAR searches for a reward function $r$ within $R$ to maximize the \textit{Protagonist Antagonist Induced Regret}, i.e., $\underset{r\in R}{\max}\ Regret(r, \pi_P)$.
\end{definition}
\textbf{PAGAR-based IL} \textit{learns a policy from $R_{E,\delta}$ by solving objective function $MinimaxRegret(R_{E,\delta})$ defined in Eq.\ref{eq:pagar1_3}.}
\begin{equation}
MinimaxRegret(R):=\arg\underset{\pi_P\in \Pi}{\min}\ \underset{r\in R}{\max}\ Regret(\pi_P, r) \label{eq:pagar1_3}
\end{equation}

\begin{example}\label{exp_1}
Figure \ref{fig:exp_fig0} shows an illustrative example of how PAGAR-based IL mitigates reward misalignment in IRL-based IL.
The task requires that \textit{a policy must visit $s_2$ and $s_6$ with probabilities no less than $0.5$ within $5$ steps}, i.e. $Prob(s_2|\pi)\geq 0.5 \wedge Prob(s_6|\pi)\geq 0.5$ where $Prob(s|\pi)$ is the probability of $\pi$ generating a trajectory that contains $s$ within the first $5$ steps.
It can be derived analytically that a successful policy must choose $a_2$ at $s_0$ with a probability within $[\frac{1}{2},\frac{125}{178}]$.
The reward function hypothesis space is $\{r_\omega|r_\omega(s,a)=\omega \cdot r_1(s,a)+(1-\omega)\cdot r_2(s,a)\}$ where $\omega\in[0, 1]$ is a parameter, $r_1, r_2$ are two features.
Specifically, $r_1(s,a)$ equals $1$ if $s=s_2$ and equals $0$ otherwise,  and $r_2(s,a)$ equals $1$ if $s=s_6$ and equals $0$ otherwise.
Given the demonstrations and the MDP, the maximum negative MaxEnt IRL loss $\delta^*\approx 2.8$ corresponds to the optimal parameter 
 $\omega^*=1$.
 The optimal policy under $r_{\omega^*}$ chooses $a_2$ at $s_0$ with probability $1$ and reaches $s_6$ with probability less than $0.25$, thus failing to accomplish the task. 
 In contrast, for any $\delta < 1.1$, the optimal protagonist policy $\pi_P=MinimaxRegret(R_{E,\delta})$ can succeed in the task as indicated by the grey dashed lines.
This example demonstrates that by considering the sub-optimal reward functions, PAGAR-based IL can mitigate reward misalignment.
More details about this example can be found in Appendix \ref{subsec:app_a_7}.
Next, we will delve into the distinctive mechanism that sets PAGAR-based IL apart from IRL-based IL.
\end{example}
\begin{figure}
\hfill
\includegraphics[width=0.38\linewidth]{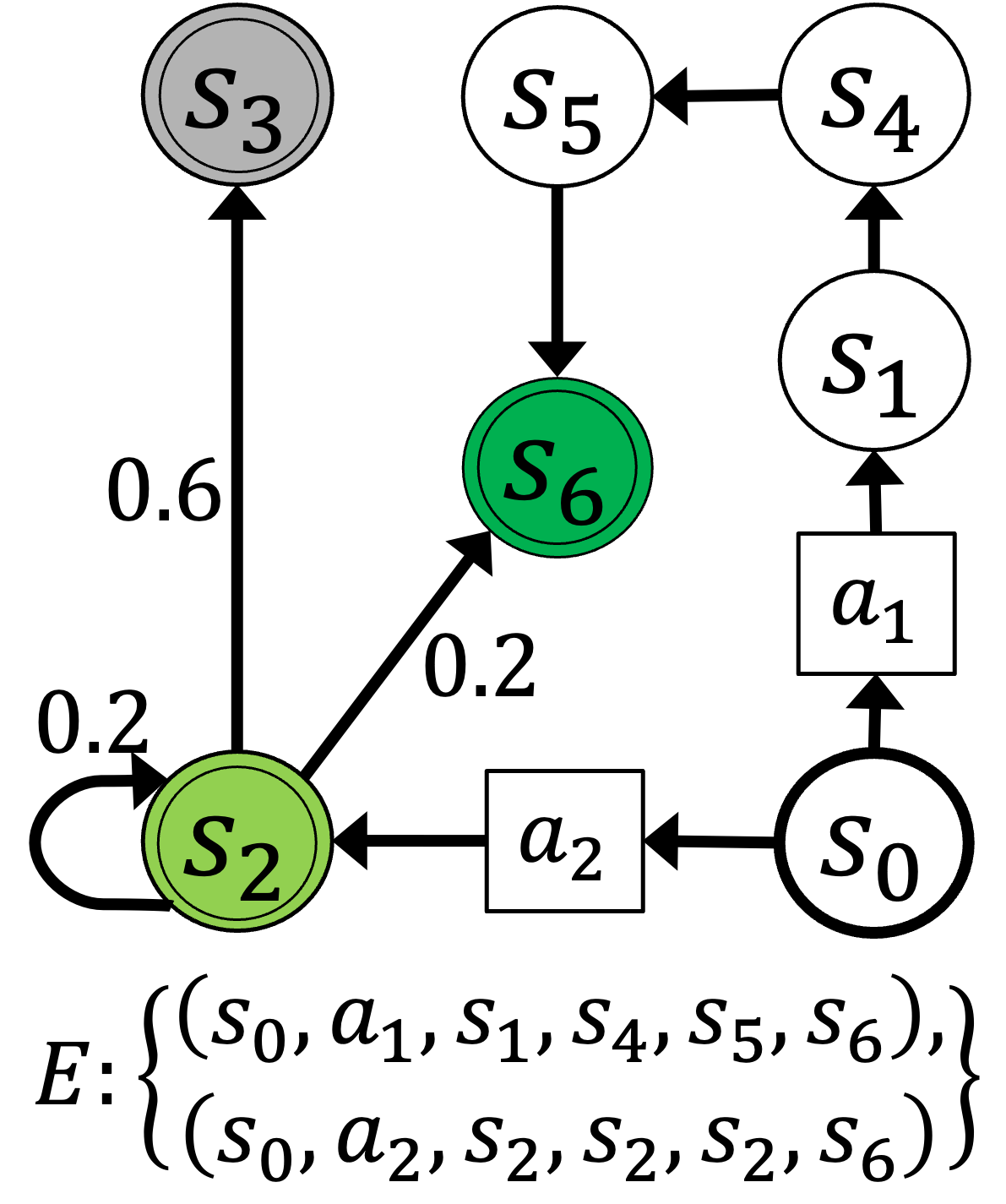}
\hfill
\includegraphics[width=0.6\linewidth]{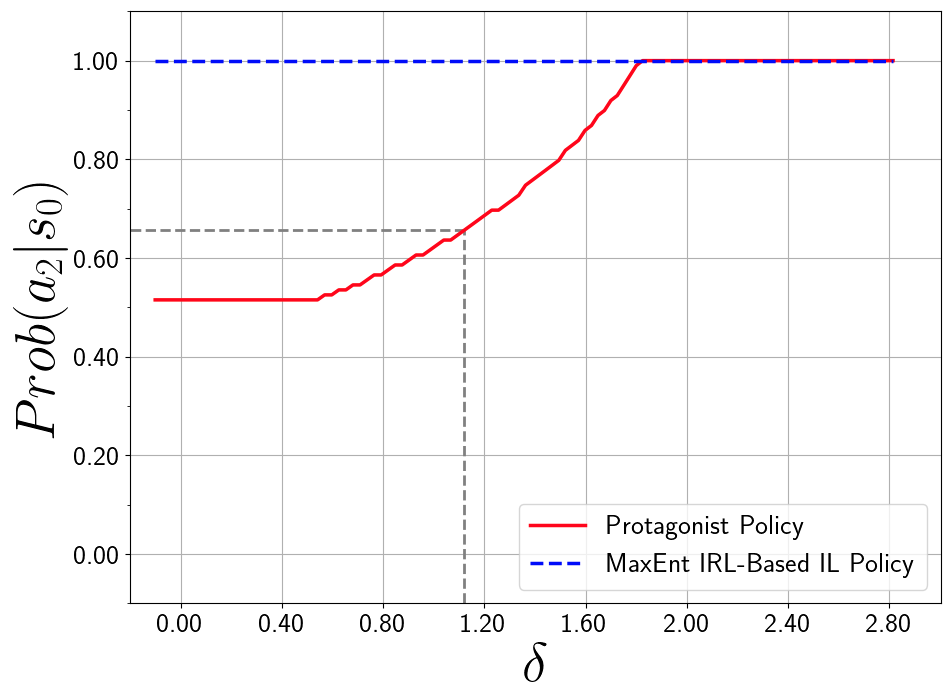}
\hfill
\hfill
\caption{\textbf{Left:} Consider an MDP where there are two available actions $a_1,a_2$ at initial state $s_0$. 
In other states, actions make no difference: the transition probabilities are either annotated at the transition edges or equal $1$ by default. 
States $s_3$ and $s_6$ are terminal states. 
Expert demonstrations are in $E$.
\textbf{Right}: x-axis indicates the MaxEnt IRL loss bound $\delta$ for $R_{E,\delta}$ as defined in Section \ref{subsec:pagar1_1}. 
The y-axis indicates the probability of the protagonist policy learned via $MinimaxRegret(R_{E,\delta})$ choosing $a_2$ at $s_0$.
The red curve shows how different $\delta$'s lead to different protagonist policies.
The blue dashed curve is for reference, showing that the optimal policy under the optimal reward learned of MaxEnt IRL.
} 
\label{fig:exp_fig0}
\end{figure}

\subsection{PAGAR as Mixed Rewarding}
Since $MinimaxRegret$ is formulated as a zero-sum game between $\pi_P$ and $r$ in Eq.\ref{eq:pagar1_3}, we derive an alternative form of $MinimaxRegret$ in Eq.\ref{eq:pagar1_2} which shows that the protagonist policy is trained to be optimal under mixed reward functions.
In a nutshell, Eq.\ref{eq:pagar1_2} searches for a policy $\pi$ with the highest score measured by an affine combination of policy performance $U_r(\pi)$ with $r$ drawn from two different reward function distributions in $R$.
One distribution, $\mathcal{P}_\pi(r)$, is a baseline distribution over $r\in R$ such that: 1) for policies that do not always perform worse than any other policy under $r\in R$, the expected $U_r(\pi)$ values measured under $r\sim \mathcal{P}_\pi$ equals a constant $c$, which is the smallest value for $c\equiv \underset{r\sim \mathcal{P}_\pi}{\mathbb{E}}\ [U_r(\pi)]$, and 2) for any other policy $\pi$, $\mathcal{P}_{\pi}$ uniformly concentrates on $\arg\underset{r\in R}{\max}\ U_r(\pi)$.
The other distribution is a singleton with support on a reward function $r^*_\pi$, which maximizes $ U_r (\pi)$ among all those $r$'s that maximize $Regret(\pi, r)$. 
A detailed derivation, including the proof for the existence of such $\mathcal{P}_\pi$, can be found in Theorem \ref{th:app_a_1} in Appendix \ref{subsec:app_a_2}. 
\begin{eqnarray}
&&MinimaxRegret(R)\nonumber\\
&\equiv& \arg\underset{\pi\in \Pi}{\max}\ \{\frac{Regret(\pi, r^*_\pi)}{c - U_{r^*_\pi}(\pi)}\cdot U_{r^*_\pi}(\pi) +\nonumber\\
&&\qquad \underset{r\sim \mathcal{P}_\pi(r)}{\mathbb{E}}[(1 - \frac{Regret(\pi, r)}{c - U_r(\pi)})\cdot U_r(\pi)]\}\label{eq:pagar1_2}
\end{eqnarray}
In PAGAR-based IL, where $R_{E,\delta}$ is used in place of $R$ in Eq.\ref{eq:pagar1_2}, $\mathcal{P}_\pi$ is a distribution over $R_{E,\delta}$ and $r^*_\pi$ is constrained to be within $R_{E,\delta}$. 
Essentially, the mixed reward functions dynamically assign weights to $r\sim \mathcal{P}_\pi$ and $r^*_\pi$ depending on $\pi$. 
If $\pi$ performs worse under $r^*_\pi$ than under many other reward functions ($U_{r^*_\pi}(\pi)$ falls below $c$), a higher weight will be allocated to using $r^*_\pi$ to train $\pi$.
Conversely, if $\pi$ performs better under $r^*_\pi$ than under many other reward functions ($c$ falls below $U{r^*_\pi}(\pi)$), a higher weight will be allocated to reward functions drawn from $\mathcal{P}_\pi$. 
Furthermore, we prove in Appendix \ref{subsec:app_a_6} that $MinimaxRegret$ is a convex optimization w.r.t the protagonist policy $\pi_P$.

Our subsequent discussion will identify the conditions where PAGAR-based IL can mitigate reward misalignment.

\subsection{Mitigate Reward Misalignment with PAGAR}
We start by analyzing the general properties of $MinimaxRegret$. 
Following the notations in Definition \ref{def:sec1_1}, we denote any task-aligned reward function as $r^+\in R$ and misaligned reward function as $r^-\in R$.
We then derive a sufficient condition for $MinimaxRegret$ to induce a policy that avoids task failure as in Theorem \ref{th:pagar_1_1}.

\begin{theorem}[Task-Failure Avoidance]\label{th:pagar_1_1}
If the following conditions (1) (2) hold for $R$, then the optimal protagonist policy $\pi_P:=MinimaxRegret(R)$ satisfies $r^+\in R$,$ U_{r^+}(\pi_P)\notin F_{r^+}$. \li{seems to be missing a connective before forall}
\begin{enumerate}[topsep=0pt,itemsep=-1ex,partopsep=1ex,parsep=1ex]
\item[(1)] There exists $r^+\in R$, and $\underset{r^+\in R}{\max}\ \{\sup F_{r^+} - \inf F_{r^+}\} < \underset{{r^+}\in R}{\min}\ \{\inf S_{r^+} - \sup F_{r^+}\} \wedge \underset{{r^+}\in R}{\max}\ \{\sup S_{r^+} - \inf S_{r^+}\} < \underset{{r^+}\in R}{\min}\ \{\inf S_{r^+} - \sup F_{r^+}\}$; \\
\item[(2)]  There exists a policy $\pi^*$ such that $\forall r^+\in R$, $U_{r^+}(\pi^*)\in S_{r^+}$, and $\forall r^-\in R$,  $\underset{\pi\in\Pi}{\max}\ U_{r^-}(\pi) - U_{r^-}(\pi^*)  <  \underset{{r^+}\in R}{\min}\ \{\inf S_{r^+} - \sup F_{r^+}\}$.
\end{enumerate}
\end{theorem}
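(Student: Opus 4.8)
The plan is to show that the minimax-regret policy $\pi_P$ cannot land in any failure interval $F_{r^+}$ by contradiction: if it did, the regret of $\pi_P$ under $r^+$ would be forced to be large (at least $\inf S_{r^+}-\sup F_{r^+}$ in magnitude, by condition~(1)), whereas the witness policy $\pi^*$ from condition~(2) achieves small regret under \emph{every} reward function in $R$, so $\pi^*$ would strictly beat $\pi_P$ in the $\max_{r\in R}Regret(\cdot,r)$ objective, contradicting optimality of $\pi_P$.

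Concretely, first I would establish an upper bound on $\max_{r\in R}Regret(\pi^*,r)$. Split $R$ into aligned reward functions $r^+$ and misaligned ones $r^-$. For a misaligned $r^-$, condition~(2) directly gives $Regret(\pi^*,r^-)=\max_{\pi}U_{r^-}(\pi)-U_{r^-}(\pi^*)<\min_{r^+\in R}\{\inf S_{r^+}-\sup F_{r^+}\}=:\Delta$. For an aligned $r^+$, since $U_{r^+}(\pi^*)\in S_{r^+}$ and $\sup S_{r^+}=\max_\pi U_{r^+}(\pi)$ by Definition~\ref{def:sec1_1}(1), we get $Regret(\pi^*,r^+)=\sup S_{r^+}-U_{r^+}(\pi^*)\le \sup S_{r^+}-\inf S_{r^+}<\Delta$ by the second clause of condition~(1). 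Hence $\max_{r\in R}Regret(\pi^*,r)<\Delta$, so the optimal value of $MinimaxRegret(R)$ is strictly less than $\Delta$, and therefore $\pi_P$ itself satisfies $\max_{r\in R}Regret(\pi_P,r)<\Delta$; in particular $Regret(\pi_P,r^+)<\Delta$ for every aligned $r^+\in R$.

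Next I would derive the lower bound that yields the contradiction. Suppose for some aligned $r^+\in R$ we had $U_{r^+}(\pi_P)\in F_{r^+}$. Then $Regret(\pi_P,r^+)=\max_\pi U_{r^+}(\pi)-U_{r^+}(\pi_P)=\sup S_{r^+}-U_{r^+}(\pi_P)\ge \inf S_{r^+}-\sup F_{r^+}\ge \Delta$ (using $U_{r^+}(\pi_P)\le\sup F_{r^+}$, $\sup S_{r^+}\ge\inf S_{r^+}$, and the definition of $\Delta$ as the minimum over aligned reward functions). This contradicts $Regret(\pi_P,r^+)<\Delta$ from the previous step. Therefore $U_{r^+}(\pi_P)\notin F_{r^+}$ for all $r^+\in R$, which is the claim.

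I expect the main subtlety — rather than a genuine obstacle — to be bookkeeping about which clauses of conditions~(1) and~(2) are needed where, and making sure the inequalities are handled with the correct strictness (the $<$ versus $\le$), especially the step $\sup S_{r^+}-U_{r^+}(\pi^*)\le \sup S_{r^+}-\inf S_{r^+}$ which needs $U_{r^+}(\pi^*)\ge\inf S_{r^+}$, i.e.\ that $S_{r^+}$ is an interval containing $U_{r^+}(\pi^*)$. The first clause of condition~(1), bounding $\sup F_{r^+}-\inf F_{r^+}$, appears not to be needed for this particular statement (it presumably matters for a companion result about \emph{achieving} success rather than merely avoiding failure), and I would note that explicitly rather than force it into the argument. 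It is also worth checking that condition~(2) guarantees $R$ contains at least one aligned reward function, so that the statement "$r^+\in R$, $U_{r^+}(\pi_P)\notin F_{r^+}$" is not vacuous — this follows since $\pi^*$ witnesses $U_{r^+}(\pi^*)\in S_{r^+}$, which presupposes such an $r^+$ exists, consistent with clause~(1).
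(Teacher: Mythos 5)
Your proof is correct and follows essentially the same argument as the paper's: bound $\max_{r\in R}Regret(\pi^*,r)$ strictly below $\Delta:=\min_{r^+\in R}\{\inf S_{r^+}-\sup F_{r^+}\}$ using the second clause of condition (1) for aligned rewards and condition (2) for misaligned ones, then observe that $U_{r^+}(\pi_P)\in F_{r^+}$ would force $Regret(\pi_P,r^+)\ge \inf S_{r^+}-\sup F_{r^+}\ge\Delta$, contradicting the optimality of $\pi_P$. Your side remark that the first clause of condition (1) is never used is consistent with the paper's own proof, which likewise does not invoke it.
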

In Theorem \ref{th:pagar_1_1}, condition (1) states that for each $r^+\in R$, the ranges of the utilities of successful and failing policies are distributed in small ranges. Condition (2) states that there exists a $\pi^*$ that not only performs well under all $r^+$'s (thus succeeding in the task), but also achieves high performance\li{performance is measured under?} under all $r^-$'s.
The proof can be found in Appendix \ref{subsec:app_a_3}.
Furthermore, Theorem \ref{th:pagar_1_0} shows that, under a stronger condition on the existence of a policy $\pi^*$ performing well under all reward functions in $R$, $MinimaxRegret(R)$ can guarantee to induce a policy that succeeds in the underlying task.

\begin{theorem}[Task-Success Guarantee\li{what guarantee?}]\label{th:pagar_1_0}
Assume that Condition (1) in Theorem \ref{th:pagar_1_1} is satisfied. 
In addition, if there exists a policy $\pi^*$ such that $\forall r\in R$, $\underset{\pi\in\Pi}{\max}\ U_{r}(\pi) - U_{r}(\pi^*)  <  \underset{{r^+}\in R}{\min}\ \{\sup S_{r^+} - \inf S_{r^+}\}$, then the optimal protagonist policy $\pi_P:=MinimaxRegret(R)$ satisfies $\forall r^+\in R$, $U_{r^+}(\pi_P)\in S_{r^+}$. 
\end{theorem}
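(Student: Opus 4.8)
The plan is to upper-bound the minimax regret value $\underset{\pi\in\Pi}{\min}\ \underset{r\in R}{\max}\ Regret(\pi,r)$ using the hypothesized policy $\pi^*$ as a witness, and then show that any protagonist policy achieving a regret below this bound must land in $S_{r^+}$ for every task-aligned $r^+$. First I would observe that, by definition of $MinimaxRegret$ in Eq.\ref{eq:pagar1_3}, the optimal protagonist $\pi_P$ satisfies $\underset{r\in R}{\max}\ Regret(\pi_P,r) \leq \underset{r\in R}{\max}\ Regret(\pi^*,r)$. By the given condition on $\pi^*$, for every $r\in R$ we have $Regret(\pi^*,r) = \underset{\pi\in\Pi}{\max}\ U_r(\pi) - U_r(\pi^*) < \underset{r^+\in R}{\min}\ \{\sup S_{r^+} - \inf S_{r^+}\}$, so $\underset{r\in R}{\max}\ Regret(\pi_P,r) < \underset{r^+\in R}{\min}\ \{\sup S_{r^+} - \inf S_{r^+}\}$ as well.

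Next I would fix an arbitrary task-aligned $r^+\in R$ (such an $r^+$ exists because the condition on $\pi^*$ together with Definition \ref{def:sec1_1} forces $S_{r^+}\neq\emptyset$ — actually I should be careful here: the statement ``$\forall r^+\in R$'' in the conclusion is vacuous unless $R$ contains a task-aligned reward, so I would note that the $\pi^*$ condition combined with Condition (1), which presupposes $\exists r^+\in R$, supplies one). For this $r^+$, the regret bound gives $\underset{\pi\in\Pi}{\max}\ U_{r^+}(\pi) - U_{r^+}(\pi_P) < \sup S_{r^+} - \inf S_{r^+}$. Since $\sup S_{r^+} = \underset{\pi\in\Pi}{\max}\ U_{r^+}(\pi)$ by Definition \ref{def:sec1_1}(1), rearranging yields $U_{r^+}(\pi_P) > \inf S_{r^+}$. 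Combined with the trivial upper bound $U_{r^+}(\pi_P) \leq \underset{\pi\in\Pi}{\max}\ U_{r^+}(\pi) = \sup S_{r^+}$, this places $U_{r^+}(\pi_P)$ in the interval $(\inf S_{r^+}, \sup S_{r^+}] \subseteq S_r^+$ — here I would invoke that $S_{r^+}$ is an interval with supremum $\sup S_{r^+}$ attained-or-not, so the half-open interval is contained in $S_{r^+}$ (if $S_{r^+}$ is closed at the top this is immediate; if open at the top then $\sup S_{r^+}$ is not in $S_{r^+}$, but then $U_{r^+}(\pi_P)=\sup S_{r^+}$ would need separate handling — I expect the intended reading is that $\sup S_{r^+}$ is attained since it equals $\max_\pi U_{r^+}(\pi)$, making $S_{r^+}$ closed at the top). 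Thus $U_{r^+}(\pi_P)\in S_{r^+}$, and since $r^+$ was arbitrary the conclusion follows.

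I do not expect to need Condition (1) of Theorem \ref{th:pagar_1_1} in an essential way for the core inequality chain — it appears only to guarantee non-emptiness/consistency of the $S$ and $F$ intervals across $R$ and perhaps to reconcile with the companion theorem — so the main obstacle is really a bookkeeping one: making precise that $\inf S_{r^+} < \sup S_{r^+}$ whenever $S_{r^+}$ is a nondegenerate interval (if $S_{r^+}$ is a single point, then $\sup S_{r^+}-\inf S_{r^+}=0$ and the strict regret bound $<0$ is impossible, which would make the hypothesis vacuous — so I would either argue this degenerate case cannot arise under the stated assumptions or note the hypothesis already rules it out), and handling the boundary behavior of the intervals $S_{r^+}$ at their suprema. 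Everything else is a direct substitution using $MinimaxRegret$ optimality and Definition \ref{def:sec1_1}.
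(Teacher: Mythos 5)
Your proposal is correct and follows essentially the same route as the paper's proof: use $\pi^*$ as a witness to bound $\max_{r\in R} Regret(\pi_P,r)$ by $\max_{r\in R} Regret(\pi^*,r) < \min_{r^+\in R}\{\sup S_{r^+}-\inf S_{r^+}\}$, then for each $r^+$ combine this with $\sup S_{r^+}=\max_{\pi\in\Pi} U_{r^+}(\pi)$ from Definition \ref{def:sec1_1} to conclude $U_{r^+}(\pi_P)>\inf S_{r^+}$ and hence $U_{r^+}(\pi_P)\in S_{r^+}$. The boundary and degeneracy caveats you raise (attainment of $\sup S_{r^+}$, nondegeneracy of $S_{r^+}$, existence of a task-aligned reward in $R$) are legitimate points that the paper's terser proof glosses over, but they do not change the argument.
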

 
We note that the conditions in Theorem \ref{th:pagar_1_1} and \ref{th:pagar_1_0} are not trivially satisfiable for arbitrary $R$, e.g., if $R$ contains two reward functions with opposite signs, i.e., $r, -r\in R$, no policy can perform well under both $r$ and $-r$.
However, in PAGAR-based IL, using $R_{E,\delta}$ in place of arbitrary $R$ is equivalent to using $E$ and $\delta$ to constrain the selection of reward functions such that there exist policies (such as the expert) that perform well under $R_{E,\delta}$.
This constraint leads to additional implications.
In particular, we consider the case of Maximum Margin IRL where $J_{IRL}(r):=U_r(E)-\underset{\pi}{\max}\ U_r(\pi)$.
We use $L_r$ to denote the Lipschitz constant of $r(\tau)$, $W_E$ to denote the smallest Wasserstein $1$-distance $W_1(\pi, E)$ between $\tau\sim \pi$ of any $\pi$ and $\tau\sim E$, i.e., $W_E\triangleq \underset{\pi\in\Pi}{\min}\ W_1(\pi, E)$.
Then, we have Corrolary \ref{th:pagar_1_3}.
\begin{corollary}\label{th:pagar_1_3}
If the following conditions (1) (2) hold for $R_{E,\delta}$, then the optimal protagonist policy $\pi_P:=MinimaxRegret(R_{E,\delta})$ satisfies $\forall r^+\in R_{E,\delta}$, $U_{r^+}(\pi_P)\notin F_{r^+}$. 
\begin{enumerate}[topsep=0pt,itemsep=-1ex,partopsep=1ex,parsep=1ex]
\item[(1)] The condition (1) in Theorem \ref{th:pagar_1_1} holds
\item[(2)] $\forall {r^+}\in R_{E,\delta}$, $L_{r^+}\cdot W_E-\delta \leq \sup S_{r^+} - \inf S_{r^+}$ and $\forall r^-\in R_{E,\delta}$,  $L_{r^-}\cdot W_E-\delta  <  \underset{{r^+}\in R_{E,\delta}}{\min}\ \{\inf S_{r^+} - \sup F_{r^+}\}$. 
\end{enumerate}
\end{corollary}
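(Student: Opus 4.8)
The goal is to derive Corollary~\ref{th:pagar_1_3} from Theorem~\ref{th:pagar_1_1} by showing that condition (2) of this corollary, together with the structure of $R_{E,\delta}$ under Maximum Margin IRL, implies condition (2) of Theorem~\ref{th:pagar_1_1}. The first step is to exhibit the witness policy $\pi^*$: take $\pi^*$ to be a policy that (nearly) attains the infimum $W_E = \min_{\pi\in\Pi} W_1(\pi, E)$ in Wasserstein-$1$ distance to the expert demonstrations. By Kantorovich–Rubinstein duality, for any reward $r$ with Lipschitz constant $L_r$ we have $|U_r(\pi^*) - U_r(E)| \le L_r \cdot W_1(\pi^*, E) = L_r \cdot W_E$. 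This is the key inequality that converts the Wasserstein bound in condition~(2) of the corollary into the utility-gap bounds required by Theorem~\ref{th:pagar_1_1}.

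Next, I would combine this with the membership constraint defining $R_{E,\delta}$. For any $r \in R_{E,\delta}$ we have $J_{IRL}(r) = U_r(E) - \max_{\pi\in\Pi} U_r(\pi) \ge \delta$, i.e. $\max_{\pi\in\Pi} U_r(\pi) - U_r(E) \le -\delta$. Chaining this with the duality bound gives
\begin{equation}
\max_{\pi\in\Pi} U_r(\pi) - U_r(\pi^*) \le \bigl(\max_{\pi\in\Pi} U_r(\pi) - U_r(E)\bigr) + \bigl(U_r(E) - U_r(\pi^*)\bigr) \le -\delta + L_r \cdot W_E = L_r\cdot W_E - \delta.
\end{equation}
Applying this with $r = r^+$ and invoking the first half of the corollary's condition~(2), $L_{r^+}\cdot W_E - \delta \le \sup S_{r^+} - \inf S_{r^+}$, shows the regret of $\pi^*$ under $r^+$ is at most the width of the success interval; combined with the fact that $\sup S_{r^+}$ is the optimal utility (Definition~\ref{def:sec1_1}(1)), this forces $U_{r^+}(\pi^*) \in S_{r^+}$ for every $r^+\in R_{E,\delta}$. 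Applying the same chained inequality with $r = r^-$ and the second half of condition~(2) gives $\max_{\pi\in\Pi} U_{r^-}(\pi) - U_{r^-}(\pi^*) < \min_{r^+\in R_{E,\delta}}\{\inf S_{r^+} - \sup F_{r^+}\}$. These two facts are exactly condition~(2) of Theorem~\ref{th:pagar_1_1} instantiated with $R = R_{E,\delta}$.

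Finally, since condition~(1) of the corollary is literally condition~(1) of Theorem~\ref{th:pagar_1_1}, both hypotheses of that theorem hold for $R = R_{E,\delta}$, so the conclusion $\forall r^+\in R_{E,\delta}, U_{r^+}(\pi_P)\notin F_{r^+}$ follows immediately for $\pi_P := MinimaxRegret(R_{E,\delta})$. One subtlety to address carefully: the infimum defining $W_E$ may not be attained, so I would either assume $\Pi$ is compact enough that it is, or run the argument with a policy achieving $W_1(\pi^*, E) \le W_E + \varepsilon$ and note that all the inequalities above are either strict (the $r^-$ one) or have slack that can absorb a sufficiently small $\varepsilon$ — though the non-strict $r^+$ inequality $L_{r^+}\cdot W_E - \delta \le \sup S_{r^+} - \inf S_{r^+}$ needs slightly more care, since to conclude $U_{r^+}(\pi^*)\in S_{r^+}$ rather than merely $U_{r^+}(\pi^*) \ge \inf S_{r^+} - \varepsilon'$ one wants the closed interval to be hit exactly. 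I expect this boundary/attainment issue to be the main technical obstacle; the rest is a direct chaining of the Kantorovich–Rubinstein bound with the defining inequality of $R_{E,\delta}$, after which Theorem~\ref{th:pagar_1_1} does the real work.
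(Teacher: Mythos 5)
Your proposal follows essentially the same route as the paper's proof: take $\pi^*$ to be the minimizer of $W_1(\pi,E)$, use Kantorovich--Rubinstein duality to bound $U_r(E)-U_r(\pi^*)$ by $L_r\cdot W_E$, and thereby recover condition (2) of Theorem~\ref{th:pagar_1_1} so that its conclusion applies to $R_{E,\delta}$. If anything, your version is the more complete one: by chaining the duality bound with the membership constraint $U_r(E)-\max_{\pi}U_r(\pi)\geq\delta$ you make explicit where the $-\delta$ term comes from and bound the actual regret $\max_{\pi}U_r(\pi)-U_r(\pi^*)$, a step the paper's displayed inequality glosses over, and your remark on the attainment of $W_E$ and the non-strict versus strict inequality at the boundary of $S_{r^+}$ flags a genuine loose end that the paper also passes over silently.
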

Corollary \ref{th:pagar_1_3} delivers the same guarantee as that of Theorem \ref{th:pagar_1_1} but differs from Theorem \ref{th:pagar_1_1} in that 
condition (2) implicitly requires that for the policy $\pi^*=\arg\underset{\pi\in\Pi}{\min}\ W_1(\pi, E)$, the performance difference between $E$ and $\pi^*$ is small enough under all $r\in R_{E,\delta}$.
The following corollary further suggests that a large $\delta$ close to $\delta^*$ can help $MinimaxRegret(R_{E,\delta})$  gain a better chance of 
 finding a policy to succeed in the underlying task, 
\begin{corollary}\label{th:pagar_1_4}
Assume that the condition (1) in Theorem \ref{th:pagar_1_1} holds for $R_{E,\delta}$.
If for any $r\in R_{E,\delta}$, $L_r\cdot W_E-\delta \leq \underset{{r^+}\in R_{E,\delta}}{\min}\ \{\sup S_{r^+} - \inf S_{r^+}\}$, then the optimal protagonist policy $\pi_P=MinimaxRegret(R_{E,\delta})$ satisfies \li{not sure if there is a missing connective here}$\forall r^+\in R_{E,\delta}$, $U_{r^+}(\pi_P)\in S_{r^+}$. 
\end{corollary}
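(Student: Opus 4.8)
The plan is to obtain Corollary~\ref{th:pagar_1_4} as a direct consequence of Theorem~\ref{th:pagar_1_0}. Condition~(1) of Theorem~\ref{th:pagar_1_1} is assumed, so it suffices to exhibit a single policy $\pi^*\in\Pi$ for which the inequality $\max_{\pi\in\Pi} U_r(\pi) - U_r(\pi^*) < \min_{r^+\in R_{E,\delta}}\{\sup S_{r^+} - \inf S_{r^+}\}$ required by Theorem~\ref{th:pagar_1_0} holds simultaneously for every $r\in R_{E,\delta}$ (aligned or not), and then let Theorem~\ref{th:pagar_1_0} conclude $U_{r^+}(\pi_P)\in S_{r^+}$ for $\pi_P = MinimaxRegret(R_{E,\delta})$. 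The natural candidate is the Wasserstein-closest policy to the demonstrations, $\pi^* := \argmin_{\pi\in\Pi} W_1(\pi,E)$, for which $W_1(\pi^*,E) = W_E$ by definition of $W_E$.

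First I would bound, for an arbitrary $r\in R_{E,\delta}$, the suboptimality of $\pi^*$ under $r$ by combining two estimates. By the definition of $R_{E,\delta}$ and of $J_{IRL}$ in the Maximum-Margin setting, $J_{IRL}(r) = U_r(E) - \max_{\pi} U_r(\pi) \ge \delta$, hence $\max_{\pi} U_r(\pi) \le U_r(E) - \delta$. On the other hand, since $r(\tau)$ is $L_r$-Lipschitz in the trajectory metric underlying $W_1$, Kantorovich--Rubinstein duality gives $|U_r(\pi^*) - U_r(E)| = |\mathbb{E}_{\tau\sim\pi^*}[r(\tau)] - \mathbb{E}_{\tau\sim E}[r(\tau)]| \le L_r\, W_1(\pi^*,E) = L_r W_E$, so $U_r(\pi^*) \ge U_r(E) - L_r W_E$. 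Combining the two displays cancels the $U_r(E)$ term and yields $\max_{\pi} U_r(\pi) - U_r(\pi^*) \le L_r W_E - \delta$.

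Then I would invoke the hypothesis of the corollary, namely $L_r W_E - \delta \le \min_{r^+\in R_{E,\delta}}\{\sup S_{r^+} - \inf S_{r^+}\}$ for all $r\in R_{E,\delta}$, to conclude $\max_{\pi} U_r(\pi) - U_r(\pi^*) \le \min_{r^+\in R_{E,\delta}}\{\sup S_{r^+} - \inf S_{r^+}\}$ for every $r\in R_{E,\delta}$, which is the premise of Theorem~\ref{th:pagar_1_0} for the policy $\pi^*$. For transparency I would also spell out the one-line mechanism behind Theorem~\ref{th:pagar_1_0} in this case: optimality of $\pi_P$ for $MinimaxRegret$ gives $\max_{r\in R_{E,\delta}} Regret(\pi_P,r) \le \max_{r\in R_{E,\delta}} Regret(\pi^*,r) \le \min_{r^+}\{\sup S_{r^+}-\inf S_{r^+}\}$; restricting to a fixed $r^+\in R_{E,\delta}$ and using $\sup S_{r^+} = \max_\pi U_{r^+}(\pi)$ from Definition~\ref{def:sec1_1} forces $U_{r^+}(\pi_P)\in[\inf S_{r^+},\sup S_{r^+}] = S_{r^+}$.

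The main obstacle I anticipate is not the algebra but the bookkeeping at the two junctions. The Wasserstein step must be applied at the level of trajectory distributions rather than occupancy measures, matching the paper's definitions of $W_E$ and $L_r$; one must verify that $r(\tau) = \sum_t \gamma^t r(s^{(t)},a^{(t)})$ is indeed $L_r$-Lipschitz in whatever metric on trajectories is intended. Second, the corollary's hypothesis is non-strict ($\le$) while Theorem~\ref{th:pagar_1_0} is phrased with a strict inequality; I would reconcile this by observing that the conclusion $U_{r^+}(\pi_P)\in S_{r^+}$ only needs $Regret(\pi_P,r^+)\le \sup S_{r^+}-\inf S_{r^+}$ together with $S_{r^+}$ being closed at its lower endpoint, so the non-strict bound already suffices under the conventions of Definition~\ref{def:sec1_1} (or, alternatively, tighten the corollary's hypothesis to a strict inequality). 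This is the only place where the precise closedness conventions for $S_r$ need to be pinned down.
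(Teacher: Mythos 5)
Your proposal is correct and follows essentially the same route as the paper: take $\pi^*=\arg\min_{\pi\in\Pi}W_1(\pi,E)$, use Kantorovich--Rubinstein duality to bound its suboptimality by $L_r W_E-\delta$ uniformly over $r\in R_{E,\delta}$, and then invoke Theorem~\ref{th:pagar_1_0}. In fact your write-up is tighter than the paper's at one junction: the paper only bounds $U_r(E)-U_r(\pi^*)\le L_r W_E$ and silently treats this as the premise of Theorem~\ref{th:pagar_1_0} (which concerns $\max_\pi U_r(\pi)-U_r(\pi^*)$), whereas you explicitly supply the missing link $\max_\pi U_r(\pi)\le U_r(E)-\delta$ from $J_{IRL}(r)\ge\delta$, and you also flag the strict-versus-non-strict inequality convention that the paper glosses over.
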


\subsection{Comparing PAGAR-Based IL with IRL-Based IL}
Both PAGAR-based IL and IRL-based IL use a minimax paradigm.
The key difference is in how they evaluate the similarity between the policies: IRL-based IL assesses the policies' performance difference under a single reward function, while PAGAR-based IL considers a set of reward functions.
Despite the difference, PAGAR-based IL can induce the same results as IRL-based IL under certain conditions.
\begin{assumption}\label{asp:pagar1_1}
$\underset{r}{\max}\ J_{IRL}(r)$ can reach Nash Equilibrium at an optimal reward function $r^*$ and its optimal policy $\pi_{r^*}$.
\end{assumption} 
We make this assumption only to demonstrate how PAGAR-based IL can prevent performance degradation w.r.t. IRL-based IL, which is preferred when IRL-based IL does not have a reward misalignment issue.
We draw two assertions from this assumption.
The first one considers Maximum Margin IRL-based IL and shows that if using the optimal reward function set $R_{E,\delta^*}$ as input to $MinimaxRegret$, PAGAR-based IL and Maximum Margin IRL-based IL have the same solutions.
\begin{proposition}\label{th:pagar_1_2}
$\pi_{r^*}=MinimiaxRegret(R_{E,\delta^*})$.
\end{proposition}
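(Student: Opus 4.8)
The plan is to show that the protagonist policy $\pi_{r^*}$ drives the worst-case \emph{Protagonist Antagonist Induced Regret} over $R_{E,\delta^*}$ down to zero, and then to observe that zero is the smallest value this worst-case regret can take, so that $\pi_{r^*}$ solves $MinimaxRegret(R_{E,\delta^*})$ as defined in Eq.\ref{eq:pagar1_3}.

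First I would unpack $R_{E,\delta^*}$: since $\delta^* = \max_r J_{IRL}(r)$, the defining inequality $J_{IRL}(r)\ge\delta^*$ can only hold with equality, so $R_{E,\delta^*} = \{r : J_{IRL}(r) = \delta^*\}$ is exactly the set of IRL-optimal reward functions, and it is nonempty because $r^*$ attains the maximum. In the Maximum Margin setting $J_{IRL}(r) = U_r(E) - \max_{\pi\in\Pi} U_r(\pi)$, so every $r\in R_{E,\delta^*}$ satisfies $\max_{\pi\in\Pi} U_r(\pi) = U_r(E) - \delta^*$. Substituting this into Eq.\ref{eq:pagar1_1} gives, for each such $r$,
\[
Regret(\pi_{r^*}, r) \;=\; \max_{\pi_A\in\Pi} U_r(\pi_A) - U_r(\pi_{r^*}) \;=\; \bigl(U_r(E) - \delta^*\bigr) - U_r(\pi_{r^*}).
\]

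Next I would invoke Assumption \ref{asp:pagar1_1}. Viewing $\max_r J_{IRL}(r) = \max_r \min_{\pi\in\Pi}\bigl(U_r(E) - U_r(\pi)\bigr)$ as a zero-sum game between the reward player (maximizing) and the policy player (minimizing), a Nash equilibrium at $(r^*,\pi_{r^*})$ yields two best-response facts: (i) $U_{r^*}(\pi_{r^*}) = \max_{\pi\in\Pi} U_{r^*}(\pi)$, confirming $\pi_{r^*}$ is the optimal policy under $r^*$ and hence that the equilibrium value is $U_{r^*}(E) - U_{r^*}(\pi_{r^*}) = J_{IRL}(r^*) = \delta^*$; and (ii) for all $r$, $U_r(E) - U_r(\pi_{r^*}) \le U_{r^*}(E) - U_{r^*}(\pi_{r^*}) = \delta^*$. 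Combining (ii) with the displayed identity gives $Regret(\pi_{r^*}, r)\le 0$ for every $r\in R_{E,\delta^*}$; since $Regret(\cdot,\cdot)\ge 0$ always (the antagonist can copy the protagonist), we conclude $Regret(\pi_{r^*}, r) = 0$ for all $r\in R_{E,\delta^*}$, i.e.\ $\max_{r\in R_{E,\delta^*}} Regret(\pi_{r^*}, r) = 0$.

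Finally, because $\max_{r\in R_{E,\delta^*}} Regret(\pi_P, r)\ge 0$ for every $\pi_P\in\Pi$, the policy $\pi_{r^*}$ attains the minimum in Eq.\ref{eq:pagar1_3}, so $\pi_{r^*}\in MinimaxRegret(R_{E,\delta^*})$; conversely, any minimizer $\pi_P$ must have zero worst-case regret, so in particular $Regret(\pi_P, r^*) = 0$, forcing $U_{r^*}(\pi_P) = \max_{\pi} U_{r^*}(\pi)$, i.e.\ $\pi_P$ is also optimal under $r^*$, and under the (implicit) uniqueness of that optimal policy this yields $\pi_P = \pi_{r^*}$, establishing the stated equality. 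I do not expect a genuine obstacle here: the only points requiring care are (a) reading Assumption \ref{asp:pagar1_1} as the pair of best-response inequalities of the underlying zero-sum game, and (b) stating explicitly that $R_{E,\delta^*}$ collapses to the exact maximizers of $J_{IRL}$ so that $\max_{\pi} U_r(\pi)$ has the clean closed form $U_r(E)-\delta^*$; the remainder is a one-line inequality chain together with nonnegativity of regret.
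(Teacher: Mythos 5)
Your proof is correct and follows essentially the same route as the paper's: both exploit the Nash-equilibrium assumption to conclude that $\pi_{r^*}$ incurs zero regret against every $r\in R_{E,\delta^*}$ and then use nonnegativity of the regret to conclude it attains the minimax. The only cosmetic difference is that you make the two best-response inequalities and the identity $\max_{\pi}U_r(\pi)=U_r(E)-\delta^*$ explicit, whereas the paper reaches the same zero-regret conclusion by invoking equilibrium interchangeability together with its lemma on the invariance of $MinimaxRegret$ under the offset $U_r(\pi)\mapsto U_r(\pi)-U_r(E)$; your closing remark about needing uniqueness of the optimal policy for the literal equality (rather than set membership) is a fair and accurate caveat.
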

The proof can be found in Appendix \ref{subsec:app_a_4}.
The second assertion shows that if IRL-based IL can learn a policy to succeed in the task, $MinimaxRegret(R_{E,\delta})$ with  $\delta < \delta^*$ can also learn a policy that succeeds in the task under certain condition.
The proof can be found in Appendix \ref{subsec:app_a_4}. 
This assertion also suggests that the designer should select a $\delta$ smaller than $\delta^*$ while making $\delta^*-\delta$ no greater than the expected size of the successful policy utility interval.
\begin{proposition}\label{th:pagar_1_6}
If $r^*$ aligns with the task and $\delta \geq \delta^* - (\sup S_{r^*} - \inf S_{r^*})$, the optimal protagonist policy $\pi_P=MinimiaxRegret(R_{E,\delta})$ is guaranteed to succeed in the task.
\end{proposition}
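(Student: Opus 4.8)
The plan is to reduce the statement to a single regret inequality evaluated at $r^*$ itself, and then bound that regret using the minimax optimality of $\pi_P$ together with the Nash-equilibrium assumption. Since $r^*$ is task-aligned, Definition~\ref{def:sec1_1} supplies a nonempty interval $S_{r^*}$ with $\sup S_{r^*}=\max_{\pi\in\Pi}U_{r^*}(\pi)$ and $U_{r^*}(\pi)\in S_{r^*}\Rightarrow\Phi(\pi)$. Hence it is enough to show $Regret(\pi_P,r^*)=\max_{\pi\in\Pi}U_{r^*}(\pi)-U_{r^*}(\pi_P)\le\sup S_{r^*}-\inf S_{r^*}$, since this forces $U_{r^*}(\pi_P)\ge\inf S_{r^*}$, hence $U_{r^*}(\pi_P)\in S_{r^*}$ and therefore $\Phi(\pi_P)=\texttt{true}$.

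First I would check that $r^*$ is feasible for the inner game: by Assumption~\ref{asp:pagar1_1} and the definition of $\delta^*$ we have $J_{IRL}(r^*)=\delta^*\ge\delta$, so $r^*\in R_{E,\delta}$. Writing $\pi_{r^*}$ for the equilibrium policy (which exists by the same assumption), this yields the chain
\[
Regret(\pi_P,r^*)\ \le\ \max_{r\in R_{E,\delta}}Regret(\pi_P,r)\ \le\ \max_{r\in R_{E,\delta}}Regret(\pi_{r^*},r),
\]
where the first step uses $r^*\in R_{E,\delta}$ and the second uses that $\pi_P=MinimaxRegret(R_{E,\delta})$ minimizes $\max_{r\in R_{E,\delta}}Regret(\cdot,r)$ over $\Pi\ni\pi_{r^*}$.

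The core of the argument is the bound $Regret(\pi_{r^*},r)\le\delta^*-\delta$ for every $r\in R_{E,\delta}$. Taking Maximum Margin IRL, $J_{IRL}(r)=U_r(E)-\max_{\pi}U_r(\pi)$, so $\max_\pi U_r(\pi)=U_r(E)-J_{IRL}(r)$ and thus $Regret(\pi_{r^*},r)=U_r(E)-J_{IRL}(r)-U_r(\pi_{r^*})$. By Assumption~\ref{asp:pagar1_1}, $(r^*,\pi_{r^*})$ is a Nash equilibrium of $\max_r\min_\pi[U_r(E)-U_r(\pi)]$: $\pi_{r^*}$ being a best response to $r^*$ makes it optimal under $r^*$, so the equilibrium value equals $U_{r^*}(E)-\max_\pi U_{r^*}(\pi)=J_{IRL}(r^*)=\delta^*$; and $r^*$ being a best response to $\pi_{r^*}$ gives $U_r(E)-U_r(\pi_{r^*})\le\delta^*$, i.e.\ $U_r(\pi_{r^*})\ge U_r(E)-\delta^*$, for all $r$. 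Substituting gives $Regret(\pi_{r^*},r)\le\delta^*-J_{IRL}(r)\le\delta^*-\delta$ on $R_{E,\delta}$. Chaining this with the displayed inequality and the hypothesis $\delta\ge\delta^*-(\sup S_{r^*}-\inf S_{r^*})$ yields $Regret(\pi_P,r^*)\le\delta^*-\delta\le\sup S_{r^*}-\inf S_{r^*}$, completing the proof as set up in the first paragraph.

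I expect the delicate point to be this last block: correctly reading off $U_r(E)-U_r(\pi_{r^*})\le\delta^*$ from the best-response condition on $r^*$, identifying the equilibrium value with $\delta^*$, and juggling the identities relating $\max_\pi U_r(\pi)$, $J_{IRL}(r)$ and $Regret(\pi_{r^*},r)$; the rest is just the definition of $MinimaxRegret$ and of task-reward alignment. For entropy-regularized IRL, where $J_{IRL}(r)=U_r(E)-\max_\pi(U_r(\pi)+\mathcal{H}(\pi))$, an analogous bound holds once one notes that at equilibrium $\pi_{r^*}$ reproduces the expert's occupancy measure, so that the entropy contributions $\mathcal{H}(\pi_{r^*})$ cancel out of the estimate.
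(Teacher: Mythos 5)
Your proof is correct and follows essentially the same route as the paper's: both establish $Regret(\pi_{r^*},r)\le\delta^*-J_{IRL}(r)\le\delta^*-\delta$ on $R_{E,\delta}$ from the Nash-equilibrium best-response condition, chain it through the minimax optimality of $\pi_P$ and the membership $r^*\in R_{E,\delta}$ to get $Regret(\pi_P,r^*)\le\delta^*-\delta\le\sup S_{r^*}-\inf S_{r^*}$, and conclude $U_{r^*}(\pi_P)\in S_{r^*}$. Your write-up actually makes explicit two steps the paper leaves implicit (that $r^*\in R_{E,\delta}$, and the rearrangement of the best-response inequality), so no further changes are needed.
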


\section{An On-and-Off-Policy Approach to PAGAR-based IL}\label{sec:pagar2}

In this section, we introduce a practical approach to \li{need revision}solving $MinimaxRegret(R_{E,\delta})$ based on IRL and RL.
In a nutshell, this approach alternates between policy learning and reward searching. 
We first explain how we optimize $\pi_P$, $\pi_A$; then we derive from Eq.\ref{eq:pagar1_3} two reward improvement bounds for optimizing $r$. 
We will also discuss how to incorporate IRL to enforce the constraint $r\in R_{E,\delta}$ with a given $\delta$.\li{4.4? poor referencing}

\subsection{Policy Optimization with On-and-Off Policy Samples}

In practice, given an intermediate learned reward function $r$, we use RL to train $\pi_P$ and $\pi_A$ to respectively maximize and minimize $U_r(\pi_P)-U_r(\pi_A)$ as required in $MinimaxRegret$ in Eq.\ref{eq:pagar1_3}.
In particular, we show that we can use a combination of off-policy and on-policy learning to optimize $\pi_P$.

\textbf{Off-Policy:} According to~\cite{trpo}, $U_r(\pi_P) - U_r(\pi_A) \geq \underset{s\in\mathbb{S}}{\sum} \rho_{\pi_A}(s)\underset{a\in\mathbb{A}}{\sum} \pi_P(a|s)\hat{\mathcal{A}}_{\pi_A}(s,a) - C\cdot \underset{s}{\max}\ D_{TV}(\pi_A(\cdot|s), \pi_P(\cdot|s))^2$ where $\rho_{\pi_A}(s)=\sum^T_{t=0} \gamma^t Prob(s^{(t)}=s|\pi_A)$ is the discounted visitation frequency of $\pi_A$, $\hat{\mathcal{A}}_{\pi_A}$ is the advantage function without considering the entropy, and $C$ is some constant. 
This inequality allows us to train $\pi_P$ by using the trajectories of $\pi_A$: following the theories in \cite{trpo} and \cite{ppo}, we derive from the r.h.s of the inequality a PPO-style objective function $J_{\pi_A}(\pi_P; r):=\mathbb{E}_{s\sim\pi_A}[\min(\xi(s,a)\cdot \hat{\mathcal{A}}_{\pi_A}(s,a), clip(\xi(s,a),1-\sigma,1+\sigma)\cdot \hat{\mathcal{A}}_{\pi_A}(s,a)]$ where $\sigma$ is a clipping threshold, $\xi(s,a)=\frac{\pi_P(a|s)}{\pi_A(a|s)}$ is an importance sampling rate. 
\textbf{On-Policy:} In the meantime, since $\arg\underset{\pi_P}{\max}\ U(\pi_P)-U_r(\pi_A)\equiv \arg\underset{\pi_P}{\max}\ U(\pi_P)$, we directly optimize $\pi_P$ with an online RL objective function $J_{RL}(\pi_P; r)$ as mentioned in Section \ref{sec:intro}. 
As a result, the objective function for optimizing $\pi_P$ is $\underset{\pi_P\in\Pi}{\max}\ J_{\pi_A}(\pi_P; r) + J_{RL}(\pi_P; r)$.
As for $\pi_A$, we only use the on-policy RL objective function, i.e.,  $\underset{\pi_A\in\Pi}{\max}\ J_{RL}(\pi_A; r)$.
\subsection{Regret Maxmization with On-and-Off Policy Samples} 
Given the intermediate learned protagonist and antagonist policy $\pi_P$ and $\pi_A$, according to $MinimaxRegret$, we need to optimize $r$ to maximize $Regret(r, \pi_P)$.
In practice, we solve $\arg\underset{r}{\max}\ U_r(\pi_A)-U_r(\pi_P)$ by treating $\pi_A$ as the proxy of the optimal policy under $r$.
We estimate $U_r(\pi_A)$ and $U_r(\pi_P)$ from the samples of $\pi_A$ and $\pi_P$. 
Then, we extract two reward improvement bounds as in Theorem \ref{th:pagar2_1} to help optimize $r$.
\begin{theorem}\label{th:pagar2_1}
Suppose policy $\pi_2\in\Pi$ is the optimal solution for $J_{RL}(\pi; r)$. Then , the inequalities Eq.\ref{eq:pagar2_1} and \ref{eq:pagar2_2} hold for any policy $\pi_1\in \Pi$, where $\alpha = \underset{s}{\max}\ D_{TV}(\pi_1(\cdot|s), \pi_2(\cdot|s))$, $\epsilon = \underset{s,a}{\max}\ |\mathcal{A}_{\pi_2}(s,a)|$, and $\Delta\mathcal{A}(s)=\underset{a\sim\pi_1}{\mathbb{E}}\left[\mathcal{A}_{\pi_2}(s,a)\right] - \underset{a\sim\pi_2}{\mathbb{E}}\left[\mathcal{A}_{\pi_2}(s,a)\right]$. 
\vspace{-0.1in}
\begin{eqnarray}
    &&\left|U_r(\pi_1) -U_r(\pi_2) - \sum^\infty_{t=0}\gamma^t\underset{s^{(t)}\sim\pi_1}{\mathbb{E}}\left[\Delta\mathcal{A}(s^{(t)})\right]\right|\leq\nonumber\\
    &&\qquad\qquad\qquad\qquad\qquad\qquad\quad\frac{2\alpha\gamma\epsilon}{(1-\gamma)^2}\label{eq:pagar2_1}\\
   &&\left|U_r(\pi_1)-U_r(\pi_2) - \sum^\infty_{t=0}\gamma^t\underset{s^{(t)}\sim\pi_2}{\mathbb{E}}\left[\Delta\mathcal{A}(s^{(t)})\right]\right|\leq\nonumber\\
   &&\qquad\qquad\qquad\qquad\qquad\qquad\quad\frac{2\alpha\gamma(2\alpha+1)\epsilon}{(1-\gamma)^2} \label{eq:pagar2_2}
\end{eqnarray}
\vspace{-0.1in}
\end{theorem}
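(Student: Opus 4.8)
The plan is to derive both estimates from a \emph{soft} analogue of the performance difference lemma and then control the resulting state-visitation mismatch exactly as in \cite{trpo}. First I would rewrite the reward through the soft Bellman equation, $r(s,a) = \mathcal{Q}_{\pi_2}(s,a) - \gamma\,\mathbb{E}_{s'\sim\mathcal{P}(\cdot|s,a)}[\mathcal{V}_{\pi_2}(s')]$, substitute it into $U_r(\pi_1) = \mathbb{E}_{\tau\sim\pi_1}[\sum_{t=0}^\infty \gamma^t r(s^{(t)},a^{(t)})]$, and telescope the value terms. Using $\mathbb{E}_{a\sim\pi_2}[\mathcal{A}_{\pi_2}(s,a)] = -\mathcal{H}(\pi_2(\cdot|s))$ to evaluate the $\pi_1=\pi_2$ case, this yields the identity
\begin{equation}
U_r(\pi_1) - U_r(\pi_2) = \mathcal{H}(\pi_2) + \sum_{t=0}^\infty \gamma^t\, \mathbb{E}_{s^{(t)}\sim\pi_1,\, a\sim\pi_1(\cdot|s^{(t)})}\!\left[\mathcal{A}_{\pi_2}(s^{(t)},a)\right].
\end{equation}
Since $\Delta\mathcal{A}(s) = \mathbb{E}_{a\sim\pi_1}[\mathcal{A}_{\pi_2}(s,a)] + \mathcal{H}(\pi_2(\cdot|s))$, subtracting $\sum_t\gamma^t\,\mathbb{E}_{s^{(t)}\sim\pi_1}[\Delta\mathcal{A}(s^{(t)})]$ from both sides cancels the advantage integral and leaves only $\mathcal{H}(\pi_2) - \sum_t\gamma^t\,\mathbb{E}_{s^{(t)}\sim\pi_1}[\mathcal{H}(\pi_2(\cdot|s^{(t)}))]$, i.e.\ the discrepancy between the discounted state visitations of $\pi_2$ and $\pi_1$ tested against the function $s\mapsto\mathcal{H}(\pi_2(\cdot|s))$.

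It remains to bound that discrepancy, for which I would use two ingredients: (i) $|\mathcal{H}(\pi_2(\cdot|s))| = |\mathbb{E}_{a\sim\pi_2}[\mathcal{A}_{\pi_2}(s,a)]| \le \max_{s,a}|\mathcal{A}_{\pi_2}(s,a)| = \epsilon$; and (ii) the coupling bound $D_{TV}(P^{\pi_1}_t, P^{\pi_2}_t) \le t\alpha$ on the laws $P^\pi_t$ of $s^{(t)}$, obtained by coupling the two chains so that they coincide until the first step at which their action choices differ (a per-step event of probability at most $\alpha$). Combining (i)--(ii) gives $|\mathbb{E}_{s^{(t)}\sim\pi_2}[\mathcal{H}(\pi_2(\cdot|s^{(t)}))] - \mathbb{E}_{s^{(t)}\sim\pi_1}[\mathcal{H}(\pi_2(\cdot|s^{(t)}))]| \le 2t\alpha\epsilon$, and summing against $\gamma^t$ with $\sum_{t\ge0} t\gamma^t = \gamma/(1-\gamma)^2$ produces the right-hand side $\tfrac{2\alpha\gamma\epsilon}{(1-\gamma)^2}$ of Eq.\ref{eq:pagar2_1}.

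For Eq.\ref{eq:pagar2_2} I would split $\sum_t\gamma^t\,\mathbb{E}_{s^{(t)}\sim\pi_2}[\Delta\mathcal{A}(s^{(t)})] = \sum_t\gamma^t\,\mathbb{E}_{s^{(t)}\sim\pi_1}[\Delta\mathcal{A}(s^{(t)})] + \sum_t\gamma^t\big(\mathbb{E}_{s^{(t)}\sim\pi_2} - \mathbb{E}_{s^{(t)}\sim\pi_1}\big)[\Delta\mathcal{A}(s^{(t)})]$, apply Eq.\ref{eq:pagar2_1} to the first summand, and bound the second by the same visitation argument, now using $|\Delta\mathcal{A}(s)| \le 2 D_{TV}(\pi_1(\cdot|s),\pi_2(\cdot|s))\cdot\epsilon \le 2\alpha\epsilon$ together with $D_{TV}(P^{\pi_1}_t,P^{\pi_2}_t)\le t\alpha$; this extra term contributes $\tfrac{4\alpha^2\gamma\epsilon}{(1-\gamma)^2}$, and adding the two bounds gives $\tfrac{2\alpha\gamma(2\alpha+1)\epsilon}{(1-\gamma)^2}$.

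The routine parts are the telescoping and the geometric sums; the main obstacle is getting the entropy bookkeeping exactly right so that the $\mathcal{A}_{\pi_2}$ integrals cancel and only the $\mathcal{H}(\pi_2(\cdot|s))$-visitation residual survives, and supplying the per-timestep coupling bound $D_{TV}(P^{\pi_1}_t,P^{\pi_2}_t)\le t\alpha$ (either cited from \cite{trpo} or proved by induction on $t$). Note that optimality of $\pi_2$ for $J_{RL}(\cdot;r)$ is not actually needed for the two inequalities — it is merely how $\pi_2 = \pi_A$ arises in the application — so I would not rely on it.
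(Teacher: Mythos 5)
Your proposal is correct and follows essentially the same route as the paper's Appendix B proof: the soft performance-difference identity (Lemma \ref{lm:app_b_1}) followed by TRPO-style $\alpha$-coupling bounds on the state-visitation mismatch (Lemmas \ref{lm:app_b_2} and \ref{lm:app_b_3}), with your rewriting of the residual as an entropy-tested visitation discrepancy being just the paper's use of $\mathbb{E}_{a\sim\pi_2}\left[\mathcal{A}_{\pi_2}(s,a)\right]=-\mathcal{H}(\pi_2(\cdot|s))$ in different clothing, and your bound $1-(1-\alpha)^t\leq t\alpha$ yielding the same geometric sums. Your closing observation is also right: the proof only invokes optimality of $\pi_2$ through that identity, which holds for any policy by the definition of the soft value functions, so the hypothesis is not needed for the two inequalities themselves.
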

By letting $\pi_P$ be $\pi_1$ and $\pi_A$ be $\pi_2$, Theorem \ref{th:pagar2_1} enables us to bound $U_r(\pi_A)-U_r(\pi_P)$ by using either only the samples of $\pi_A$ or only those of $\pi_P$.
Following \cite{airl}, we let $r$ be a proxy of $\mathcal{A}_{\pi_2}$ in Eq.\ref{eq:pagar2_1} and \ref{eq:pagar2_2}.
Then we derive two loss functions $J_{R,1}(r;\pi_P, \pi_A)$ and $J_{R,2}(r; \pi_P, \pi_A)$ for $r$ as shown in Eq.\ref{eq:pagar2_3} and \ref{eq:pagar2_4} where $C_1$ and $C_2$ are constants proportional to the estimated maximum KL divergence between $\pi_A$ and $\pi_P$ (to bound $\alpha$~ \cite{trpo}).
The objective function for $r$ is  then $J_{PAGAR}:=J_{R,1} +   {J_{R,2}}$.
\begin{eqnarray}
    &&J_{R,1}(r;\pi_P, \pi_A) \nonumber\\
    &:=& \underset{\tau\sim \pi_A}{\mathbb{E}}\left[\sum^\infty_{t=0}\gamma^t\left(\xi(s^{(t)},a^{(t)})- 1\right)\cdot r(s^{(t)},a^{(t)})\right] \nonumber\\
    &&\qquad\qquad\qquad\qquad +\quad C_1\cdot \underset{(s,a)\sim \pi_A}{\max}|r(s,a)|\  \label{eq:pagar2_3}\\
    &&J_{R,2}(r; \pi_P, \pi_A)  \nonumber\\
    &:=& \underset{\tau\sim \pi_P}{\mathbb{E}}\left[\sum^\infty_{t=0}\gamma^t\left(1 - \frac{1}{\xi(s^{(t)}, a^{(t)})}\right) \cdot r(s^{(t)},a^{(t)})\right]\nonumber\\
    &&\qquad\qquad\qquad\qquad + \quad C_2\cdot \underset{(s,a)\sim \pi_P}{\max}|r(s,a)|\ \label{eq:pagar2_4}
\end{eqnarray}
\subsection{Algorithm for Solving PAGAR-Based IL}
\begin{algorithm}[tb]
\caption{An On-and-Off-Policy Algorithm for Imitation Learning with PAGAR}
\label{alg:pagar2_1}
\textbf{Input}: Expert demonstration $E$, IRL loss bound $\delta$, initial protagonist policy $\pi_P$, antagonist policy $\pi_A$, reward function $r$, Lagrangian parameter $\lambda\geq 0$, maximum iteration number $N$. \\
\textbf{Output}: $\pi_P$ 
\begin{algorithmic}[1] 
\FOR {iteration $i=0,1,\ldots, N$}
\STATE Sample trajectory sets $\mathbb{D}_A\sim \pi_A$ and $\mathbb{D}_P \sim \pi_P$
\STATE \textbf{Optimize $\pi_A$}: estimate $J_{RL}(\pi_A;r)$ with $\mathbb{D}_A$; update $\pi_A$ to maximize $J_{RL}(\pi_A;r)$  
\STATE \textbf{Optimize $\pi_P$}: estimate $J_{RL}(\pi_P;r)$ with $\mathbb{D}_P$; estimate $J_{\pi_A}(\pi_P; \pi_A, r)$ with $\mathbb{D}_A$; update $\pi_A$ to maximize $J_{RL}(\pi_P; r) + J_{\pi_A}(\pi_P; r)$
\STATE \textbf{Optimize $r$}: estimate $J_{PAGAR}(r; \pi_P, \pi_A)$ with $\mathbb{D}_P$ and $\mathbb{D}_A$; estimate $J_{IRL}(r)$ with $\mathbb{D}_A$ and $E$; update $r$ to minimize $J_{PAGAR}(r; \pi_P, \pi_A) + \lambda \cdot (J_{IRL}(r)+\delta)$; then update $\lambda$ based on $J_{IRL}(r)+\delta$
\ENDFOR
\STATE \textbf{return} $\pi_P$ 
\end{algorithmic}
\end{algorithm}
We enforce the constraint $r\in R_{E,\delta}$ by adding to  $J_{PAGAR}(r; \pi_P, \pi_A)$ a penalty term $\lambda \cdot(\delta+J_{IRL})$, where  $\lambda$ is  a Lagrangian parameter.
Then, the objective function for optimizing $r$ is $\underset{r\in R}{\min}\ J_{PAGAR}(r; \pi_P, \pi_A) + \lambda \cdot(\delta + J_{IRL}(r))$. 
In particular, for $\delta=\delta^*$, i.e., to only consider the optimal reward function of IRL, the objective function for $r$ becomes $\underset{r}{\min}\ J_{PAGAR} - \lambda \cdot J_{IRL}$ with a large constant $\lambda$.
Algorithm \ref{alg:pagar2_1} describes our approach to PAGAR-based IL.
The algorithm iteratively alternates between optimizing the policies and the rewards function.
It first trains $\pi_A$ in line 3.
Then, it employs the on-and-off policy approach to train $\pi_P$ in line 4, where the off-policy objective $J_{\pi_A}$ is estimated based on $\mathbb{D}_A$. 
In line 5, while $J_{PAGAR}$ is estimated based on both $\mathbb{D}_A$ and $\mathbb{D}_P$, the IRL objective is only based on $\mathbb{D}_A$.
Appendix \ref{subsec:app_b_3} details how we incorporate different IRL algorithms and update $\lambda$ based on $J_{IRL}$.
\begin{figure*}[tbp!]
\begin{subfigure}[t]{0.24\linewidth}
        \includegraphics[width=1.12\linewidth]{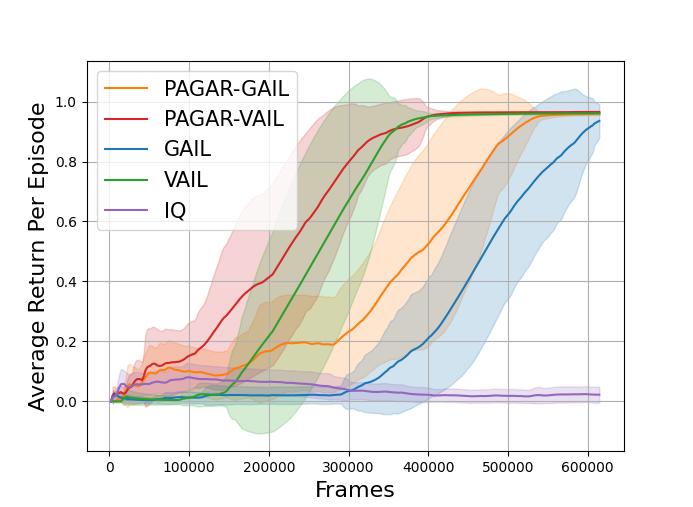}
        \caption{DoorKey-6x6\\ \centering(10 demos)}
    \end{subfigure}    
\hfill
\begin{subfigure}[t]{0.24\linewidth}
        \includegraphics[width=1.1\linewidth]{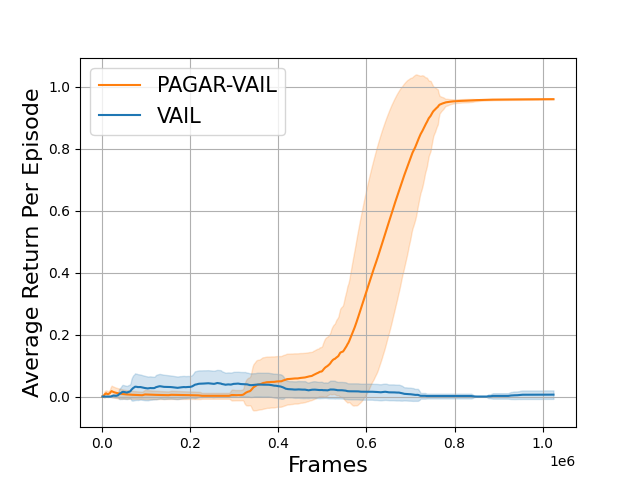}
        \caption{DoorKey-6x6 \\ \centering(1 demo)}
    \end{subfigure}   
\hfill
\begin{subfigure}[t]{0.24\linewidth}
    \includegraphics[width=1.1\linewidth]{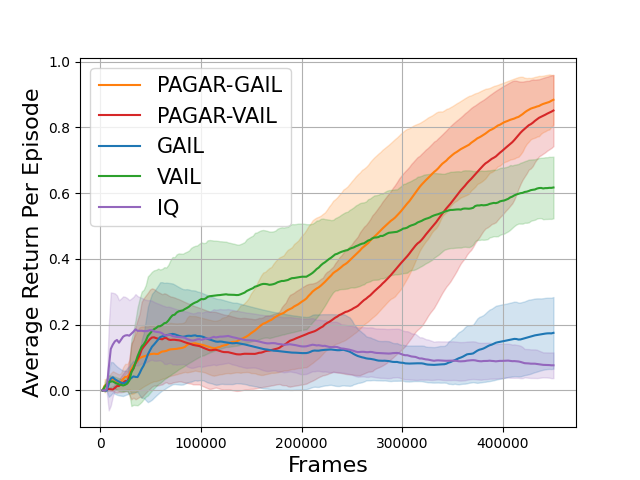}
        \caption{SimpleCrossingS9N1\\\centering (10 demos)}
    \end{subfigure}%
\hfill
\begin{subfigure}[t]{0.24\linewidth}
    \includegraphics[width=1.1\linewidth]{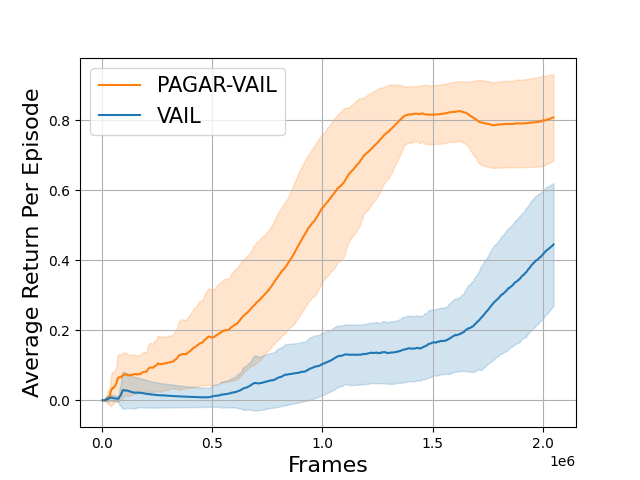}
        \caption{SimpleCrossingS9N1 \\\centering (1 demo)}
    \end{subfigure}%
\\
\phantom{123}
    \begin{subfigure}[b]{0.18\linewidth}
        \includegraphics[width=1\linewidth]{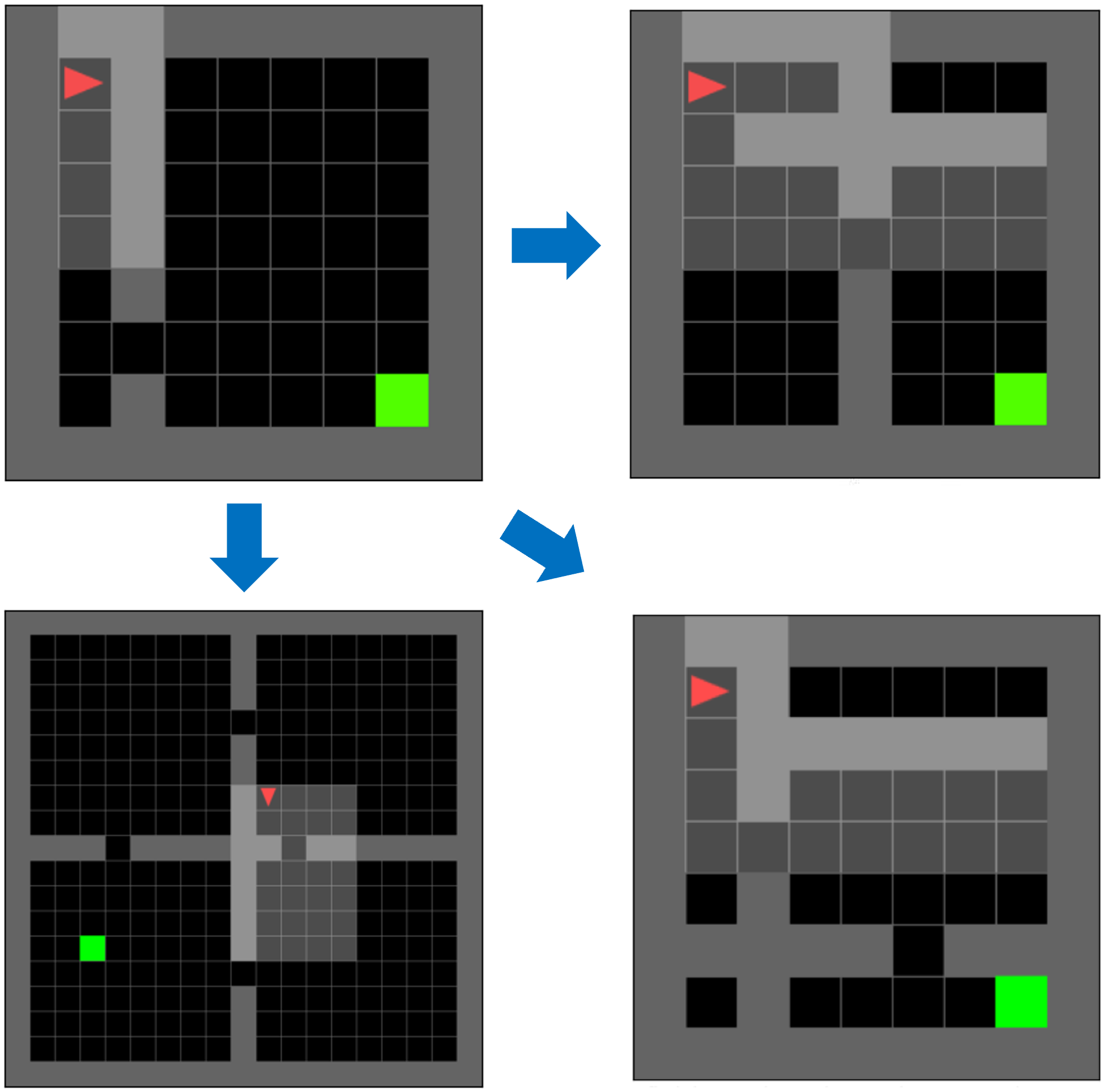}
        \caption{Transfer Envs}
    \end{subfigure}%
\hskip .25in
\begin{subfigure}[b]{0.24\linewidth}
        \includegraphics[width=1.1\linewidth]{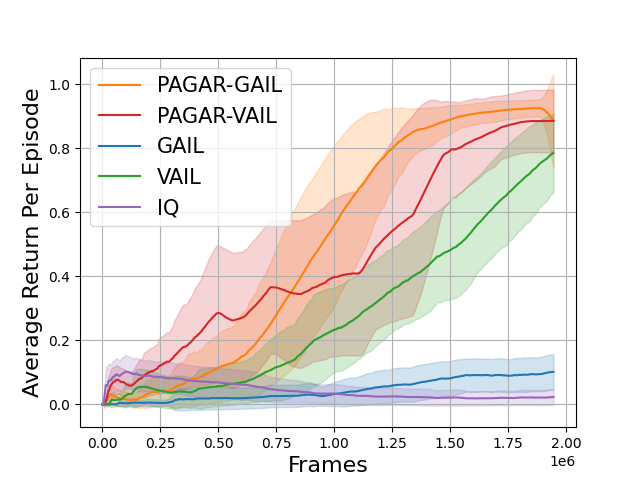}
        \caption{SimpleCrossingS9N2}
    \end{subfigure}%
\hskip .13in
\begin{subfigure}[b]{0.24\linewidth}
        \includegraphics[width=1.1\linewidth]{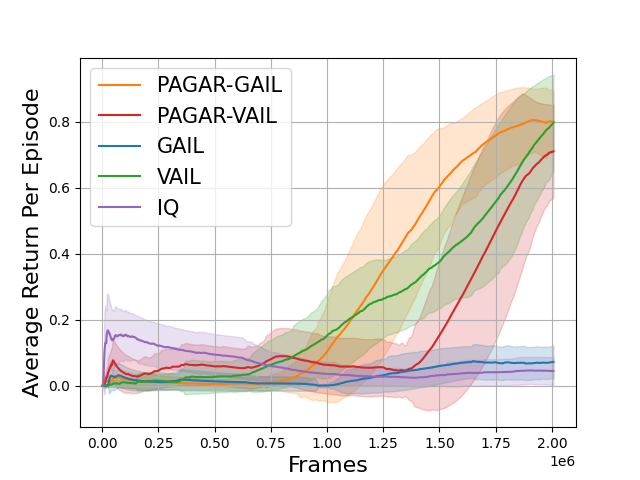}        \caption{SimpleCrossingS9N3}
    \end{subfigure}%
\hfill
\begin{subfigure}[b]{0.24\linewidth}
        \includegraphics[width=1.1\linewidth]{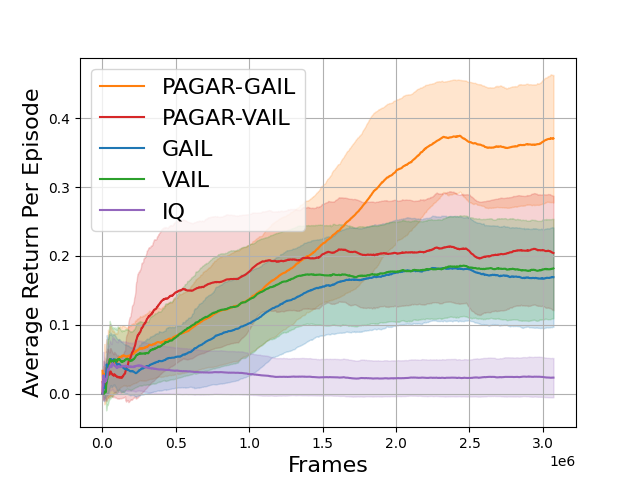}
        \caption{FourRooms}
    \end{subfigure}%
    \caption{ Comparing Algorithm \ref{alg:pagar2_1} with baselines in partial observable navigation tasks. 
    The suffix after each `{\tt{PAGAR-}}' indicates which IRL technique is used in Algorithm 1. 
    The $y$ axis indicates the average return per episode. The $x$ axis indicates the number of time steps.  }
    \label{fig:exp_fig2}
\end{figure*}

\section{Experiments}\label{sec:experiments}
The goal of our experiments is to assess
whether using PAGAR-based IL can efficiently mitigate reward misalignment in different IL/IRL benchmarks by comparing with representative baselines.
We present the main results below and provide details and additional results in Appendix \ref{sec:app_c}.
\subsection{Partially Observable Navigation Tasks} 
We first consider a maze navigation environment where the task objective can be straightforwardly categorized as either success or failure. 
Our benchmarks include two discrete domain tasks from the Mini-Grid environments~\cite{minigrid}: \textit{DoorKey-6x6-v0}, and \textit{SimpleCrossingS9N1-v0}. 
Due to partial observability and the implicit hierarchical nature of the task, these environments 
are considered challenging for RL and IL, and
have been extensively used for benchmarking
curriculum RL and exploration-driven RL.
In \textit{DoorKey-6x6-v0} the task is to pick up a key, unlock a door, and reach a target position; in \textit{SimpleCrossingS9N1}, the task is to pass an opening on a wall and reach a target position.   
The placements of the objects, obstacles, and doors are randomized in each instance of an environment.
The agent can only observe a small, unblocked area in front of it.
At each timestep, the agent can choose one out of $7$ actions, such as moving to the next cell or picking up an object.\li{such as?}
By default, the reward is always zero unless the agent reaches the target.
We compare our approach with two competitive baselines: GAIL~\cite{gail} and VAIL~\cite{vail}. 
GAIL has been introduced in Section \ref{sec:prelim}.
VAIL is based on GAIL but additionally optimizes a variational discriminator bottleneck (VDB) objective.   
Our approach uses the IRL techniques behind those two baseline algorithms, resulting in two versions of Algorithm \ref{alg:pagar2_1}, 
denoted as PAGAR-GAIL and PAGAR-VAIL, respectively.
More specifically, if the baseline optimizes a $J_{IRL}$ objective, we use the same $J_{IRL}$ objective in Algorithm \ref{alg:pagar2_1}.
Also, we represent the reward function $r$ with the discriminator $D$ as mentioned in Section \ref{sec:prelim}.
More details can be found in Appendix \ref{subsec:app_c_1}.
PPO~\cite{ppo} is used for policy training in GAIL, VAIL, and ours. 
Additionally, we compare our algorithm with a state-of-the-art (SOTA) IL algorithm, IQ-Learn~\cite{iq}, which, however, is not compatible with our algorithm because it does not explicitly optimize a reward function.
We use a replay buffer of size $2048$ in our algorithm and all the baselines. 
The policy and the reward functions are all approximated using convolutional networks.
By learning from $10$ expert-demonstrated trajectories with high returns, PAGAR-based IL produces high-performance policies with high sample efficiencies as shown in Figure \ref{fig:exp_fig2}(a) and (c). 
Furthermore, we compare PAGAR-VAIL with VAIL by reducing the number of demonstrations to one.
As shown in Figure \ref{fig:exp_fig2}(b) and (d), PAGAR-VAIL produces high-performance policies with significantly higher sample efficiencies. 
    
\noindent\textbf{Zero-Shot IL in Transfer Environments.}
In this experiment, we demonstrate that PAGAR enables the agent to infer and accomplish the objective of a task even in environments that are substantially different from the one observed during expert demonstrations.
As shown in Figure~\ref{fig:exp_fig2}(e), we collect $10$ expert demonstrations from the \textit{SimpleCrossingS9N1-v0} environment. 
Then we apply Algorithm~\ref{alg:pagar2_1} and the baselines, GAIL, VAIL, and IQ-learn to learn policies in \textit{SimpleCrossingS9N2-v0, SimpleCrossingS9N3-v0} and \textit{FourRooms-v0}.
The results in Figure \ref{fig:exp_fig2}(f)-(g) show that PAGAR-based IL outperforms the baselines
in these challenging zero-shot settings.


\noindent{\textbf{Influence of Reward Hypothesis Space}}.We study whether choosing a different reward function hypothesis set can influence the performance of Algorithm \ref{alg:pagar2_1}. 
Specifically, we compare using a $Sigmoid$ function with a Categorical distribution in the output layer of the discriminator networks in GAIL and PAGAR-GAIL.
When using the $Sigmoid$ function, the outputs of $D$ are not normalized, i.e., $\sum_{a\in \mathbb{A}}D(s,a)\neq 1$.
When using a Categorical distribution, the outputs in a state sum to one for all the actions, i.e., $\sum_{a\in \mathbb{A}}D(s,a)= 1$.
As a result, the sizes of the reward function sets are different in the two cases.
We test GAIL and PAGAR-GAIL in \textit{DoorKey-6x6-v0} environment.
\begin{wrapfigure}{l}{0.24\textwidth}
\vskip -0.2in
        \centering
        \includegraphics[width=1.\linewidth]{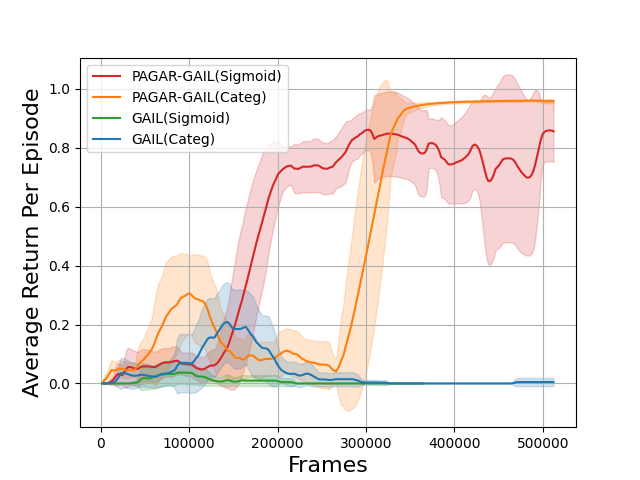}
        \vskip -0.1in
        \caption{Different reward spaces} 
\label{fig:exp_4}
\end{wrapfigure}
As shown in Figure \ref{fig:exp_4}, different reward function sets result in different training efficiency.
However, PAGAR-GAIL outperforms GAIL in both cases by using fewer samples to attain high performance.
\newline
\subsection{Continuous Tasks with Non-Binary Outcomes}
We test PAGAR-based IRL in $5$ Mujuco tasks where the task objectives do not have binary outcomes. 
In Figure \ref{fig:exp_fig1}, we show the results on two tasks (the other results are included in Appendix \ref{subsec:app_c_3}). 
The results show that PAGAR-based IL takes fewer iterations to achieve the same performance as the baselines. \li{not clear which one is more challenging if the reader is not familiar with the benchmarks}
In particular, in the \textit{HalfCheetah-v2} task, Algorithm \ref{alg:pagar2_1} achieves the same level of performance compared with GAIL and VAIL by using only half the numbers of iterations. 
\begin{figure}
\includegraphics[width=0.49\linewidth]{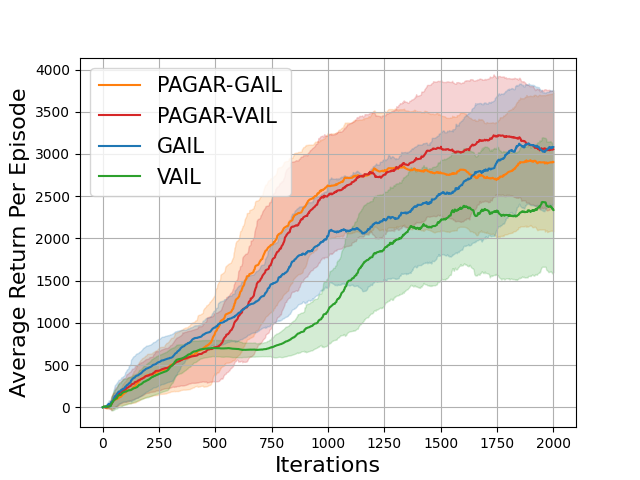}
\hfill
\includegraphics[width=0.49\linewidth]{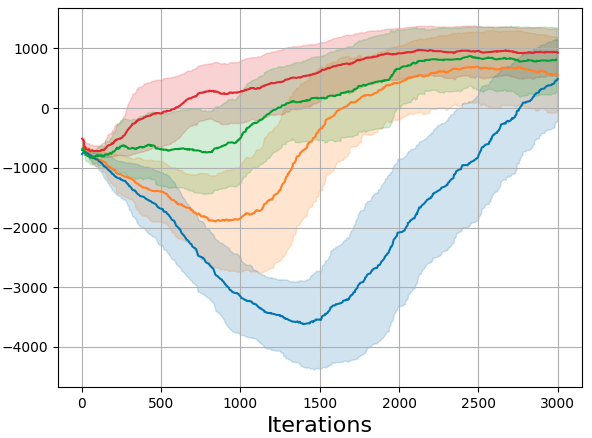}
\vspace{-.1in}
\caption{(Left: Walker2d-v2. Right: HalfCheeta-v2) The $y$ axis indicates the average return per episode\li{over how many runs?}. }
\label{fig:exp_fig1}
\vspace{-.1in}
\end{figure}
\section{Conclusion}
We propose PAGAR, a semi-supervised reward design paradigm that can mitigate the reward misalignment problem in IRL-based IL.
We present an on-and-off policy approach to PAGAR-based IL by using policy and reward improvement bounds to maximize the utilization of policy samples.
Experimental results demonstrate that our algorithm can mitigate reward misalignment in challenging environments.
Our future work will focus on accommodating the PAGAR paradigm in other alignment problems. 
\section{Impact Statement}
This paper presents work whose goal is to advance the field of Machine Learning. 
There are many potential societal consequences of our work, none of which we feel must be specifically highlighted here.
There are no identified ethical concerns associated with its implementation.
 
\nocite{langley00}

\bibliography{refs}
\bibliographystyle{icml2024}

\newpage
\appendix
\onecolumn
\section{Reward Design with PAGAR}\label{sec:app_a}
This paper does not aim to resolve the ambiguity problem in IRL but provides a way to circumvent it so that reward ambiguity does not lead to reward misalignment in IRL-based IL. 
PAGAR, the semi-supervised reward design paradigm proposed in this paper, tackles this problem from the perspective of semi-supervised reward design. 
But the nature of PAGAR is distinct from IRL and IL: assume that a set of reward functions is available for some underlying task, where some of those reward functions align with the task while others are misaligned, PAGAR provides a solution for selecting reward functions to train a policy that successfully performs the task, without knowing which reward function aligns with the task. 
Our research demonstrates that policy training with PAGAR is equivalent to learning a policy to maximize an affine combination of utilities measured under a distribution of the reward functions in the reward function set.
With this understanding of PAGAR, we integrate it with IL to illustrate its advantages.

\subsection{Semi-supervised Reward Design}\label{subsec:app_a_1}

Designing a reward function can be thought as deciding an ordering of policies. 
We adopt a concept, called \textit{total domination}, from unsupervised environment design~\cite{paired}, and re-interpret this concept in the context of reward design. 
In this paper, we suppose that the function $U_r(\pi)$ is given to measure the performance of a policy and it does not have to be the utility function. 
While the measurement of policy performance can vary depending on the free variable $r$, 
\textit{total dominance} can be viewed as an invariance regardless of such dependency.

\begin{definition}[Total Domination]
A policy, $\pi_1$, is totally dominated by some policy $\pi_2$ w.r.t a reward function set $R$, if for every pair of reward functions $r_1, r_2\in R$, $U_{r_1}(\pi_1) < U_{r_2}(\pi_2)$. 
\end{definition}

If $\pi_1$ totally dominate $\pi_2$ w.r.t $R$, $\pi_2$ can be regarded as being unconditionally better than $\pi_1$. 
In other words, the two sets $\{U_r(\pi_1)|r\in R\}$ and $\{U_r(\pi_2)|r\in R\}$ are disjoint, such that $\sup\{U_r(\pi_1)|r\in R\} < \inf\{U_r(\pi_2)|r\in R\}$. 
 Conversely, if a policy $\pi$ is not totally dominated by any other policy, it indicates that for any other policy, say $\pi_2$,  $\sup\{U_r(\pi_1)|r\in R\} \geq \inf\{U_r(\pi_2)|r\in R\}$.  

\begin{definition}
 A reward function set $R$ aligns with an ordering $\prec_R$ among policies such that $\pi_1\prec_R\pi_2$ if and only if $\pi_1$ is totally dominated by $\pi_2$ w.r.t. $R$.
\end{definition}

Especially, designing a reward function $r$ is to establish an ordering $\prec_{\{r\}}$ among policies. 
Total domination can be extended to policy-conditioned reward design, where the reward function $r$ is selected by following a decision rule $\mathcal{R}(\pi)$ such that $\sum_{r\in R}\mathcal{R}(\pi)(r)=1$.
We let $\mathcal{U}_{\mathcal{R}}(\pi)=\underset{r\in R}{\sum}\mathcal{R}(\pi)(r) \cdot U_r(\pi)$ be an affine combination of $U_r(\pi)$'s with its coefficients specified by $\mathcal{R}(\pi)$.

\begin{definition}
A policy conditioned decision rule $\mathcal{R}$ is said to prefer a policy $\pi_1$ to another policy $\pi_2$, which is notated as $\pi_1\prec^{\mathcal{R}} \pi_2$, if and only if $\mathcal{U}_{\mathcal{R}}(\pi_1)< \mathcal{U}_{\mathcal{R}}(\pi_2)$. 
\end{definition}

Making a decision rule for selecting reward functions from a reward function set to respect the total dominance w.r.t this reward function set is an unsupervised learning problem, where no additional external supervision is provided.  
If considering expert demonstrations as a form of supervision and using it to constrain the set ${R}_E$ of reward function via IRL, the reward design becomes semi-supervised.

\subsection{Solution to the MinimaxRegret}\label{subsec:app_a_2}
In Eq.\ref{eq:pagar1_2}, we mentioned that solving $MinimaxRegret(R)$ is equivalent to finding an optimal policy $\pi^*$ to maximize a $\mathcal{U}_{\mathcal{R}_E}(\pi)$ under a decision rule  $\mathcal{R}_E$.
Without loss of generality, we use $R$ instead of $R_{E,\delta}$ in our subsequent analysis, because solving $MinimaxRegret(R)$ does not depend on whether there are constraints for $R$.
In order to show such an equivalence, we follow the same routine as in \cite{paired}, and start by introducing the concept of \textit{weakly total domination}.

\begin{definition}[Weakly Total Domination]
A policy $\pi_1$ is \textit{weakly totally dominated} w.r.t a reward function set $R$ by some policy $\pi_2$ if and only if for any pair of reward function $r_1, r_2\in R$,  $U_{r_1}(\pi_1)  \leq U_{r_2}(\pi_2)$.
\end{definition}

Note that a policy $\pi$ being totally dominated by any other policy is a sufficient but not necessary condition for $\pi$ being weakly totally dominated by some other policy. A policy $\pi_1$ being weakly totally dominated by a policy $\pi_2$ implies that $\sup\{U_r(\pi_1)|r\in R\} \leq \inf\{U_r(\pi_2)|r\in R\}$. 
We assume that there does not exist a policy $\pi$ that weakly totally dominates itself, which could happen if and only if $U_r(\pi)$ is a constant. 
We formalize this assumption as the following.
\begin{assumption}\label{as:app_a_1}
For the given reward set $R$ and policy set $\Pi$, there does not exist a policy $\pi$ such that for any two reward functions $r_1, r_2\in R$, $U_{r_1}(\pi)=U_{r_2}(\pi)$.
\end{assumption}

This assumption makes weak total domination a non-reflexive relation. 
It is obvious that weak total domination is transitive and asymmetric. 
Now we show that successive weak total domination will lead to total domination. 

\begin{lemma}\label{lm:app_a_5}
for any three policies $\pi_1, \pi_2, \pi_3\in \Pi$, if $\pi_1$ is weakly totally dominated by $\pi_2$, $\pi_2$ is weakly totally dominated by $\pi_3$, then $\pi_3$ totally dominates $\pi_1$.
\end{lemma}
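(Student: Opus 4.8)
The plan is to unpack the two hypotheses into explicit inequalities between suprema and infima of the sets $\{U_r(\pi_i)\mid r\in R\}$, and then chain them. Recall that $\pi_1$ weakly totally dominated by $\pi_2$ means $U_{r_1}(\pi_1)\le U_{r_2}(\pi_2)$ for \emph{all} pairs $r_1,r_2\in R$, which is equivalent to the single scalar inequality $\sup_{r\in R} U_r(\pi_1)\le \inf_{r\in R} U_r(\pi_2)$; similarly for $\pi_2,\pi_3$. So the hypotheses give
\[
\sup_{r\in R} U_r(\pi_1)\ \le\ \inf_{r\in R} U_r(\pi_2), \qquad \sup_{r\in R} U_r(\pi_2)\ \le\ \inf_{r\in R} U_r(\pi_3).
\]
Total domination of $\pi_1$ by $\pi_3$ is the \emph{strict} statement $\sup_{r\in R} U_r(\pi_1) < \inf_{r\in R} U_r(\pi_3)$. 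Since we always have $\inf_{r\in R} U_r(\pi_2)\le \sup_{r\in R} U_r(\pi_2)$, the two displayed inequalities already chain to the non-strict $\sup_{r} U_r(\pi_1)\le \inf_{r} U_r(\pi_3)$. The only thing left is to upgrade one $\le$ to $<$.

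The key step — and the main (mild) obstacle — is to rule out the equality case $\sup_{r} U_r(\pi_1) = \inf_{r} U_r(\pi_2) = \sup_{r} U_r(\pi_2) = \inf_{r} U_r(\pi_3)$. If all four quantities coincide, then in particular $\inf_{r\in R} U_r(\pi_2) = \sup_{r\in R} U_r(\pi_2)$, which forces $U_r(\pi_2)$ to be constant over $r\in R$, i.e.\ $U_{r_1}(\pi_2)=U_{r_2}(\pi_2)$ for all $r_1,r_2\in R$. This directly contradicts Assumption \ref{as:app_a_1}. Hence at least one of the two chained inequalities is strict, and therefore $\sup_{r\in R} U_r(\pi_1) < \inf_{r\in R} U_r(\pi_3)$, which is exactly the assertion that $\pi_3$ totally dominates $\pi_1$.

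Concretely I would write the proof in three short moves: (i) translate ``weakly totally dominated'' into the sup/inf inequality (either cite it as immediate from the definition or give the one-line argument that $\forall r_1,r_2,\ U_{r_1}(\pi_1)\le U_{r_2}(\pi_2)$ iff $\sup_{r_1}U_{r_1}(\pi_1)\le \inf_{r_2}U_{r_2}(\pi_2)$); (ii) concatenate the two chains through the trivial bound $\inf_{r}U_r(\pi_2)\le \sup_{r}U_r(\pi_2)$; (iii) invoke Assumption \ref{as:app_a_1} to exclude the degenerate all-equal case, yielding strictness, and conclude by the definition of total domination. I expect step (iii) to be the only place requiring any thought, and even there the argument is essentially one line; steps (i)–(ii) are bookkeeping with suprema and infima.
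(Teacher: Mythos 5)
Your proposal is correct and follows essentially the same route as the paper's proof: chain $\sup_r U_r(\pi_1)\le\inf_r U_r(\pi_2)\le\sup_r U_r(\pi_2)\le\inf_r U_r(\pi_3)$ and invoke Assumption~\ref{as:app_a_1} to rule out the degenerate case $\inf_r U_r(\pi_2)=\sup_r U_r(\pi_2)$, forcing the strict inequality that defines total domination. The paper phrases this as a proof by contradiction, but the substance is identical.
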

\begin{proof}
According to the definition of weak total domination, $\underset{r\in R}{\max}\ U_r(\pi_1)\leq \underset{r\in R}{\min}\ U_r(\pi_2)$ and $\underset{r\in R}{\max}\ U_r(\pi_2)\leq \underset{r\in R}{\min}\ U_r(\pi_3)$. 
If $\pi_1$ is weakly totally dominated but not totally dominated by $\pi_3$, then $\underset{r\in R}{\max}\ U_r(\pi_1)=\underset{r\in R}{\min}\ U_r(\pi_3)$ must be true. 
However, it implies $\underset{r\in R}{\min}\ U_r(\pi_2)=\underset{r\in R}{\max}\ U_r(\pi_2)$, which violates Assumption \ref{as:app_a_1}. 
We finish the proof.
\end{proof}

\begin{lemma}\label{lm:app_a_3}
For the set $\Pi_{\neg{wtd}}\subseteq \Pi$ of policies that are not weakly totally dominated by any other policy in the whole set of policies w.r.t a reward function set $R$, there exists a range $\mathbb{U}\subseteq\mathbb{R}$ such that for any policy $\pi\in \Pi_{\neg{wtd}}$, $\mathbb{U}\subseteq [\underset{r\in R}{\min}\ U_r(\pi), \underset{r\in R}{\max}\ U_r(\pi)]$.
\end{lemma}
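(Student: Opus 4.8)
The plan is to show that the set $\Pi_{\neg wtd}$ of policies not weakly totally dominated by any other policy has a common ``core'' of utility values. First I would fix an arbitrary reference policy $\pi_0\in\Pi_{\neg wtd}$ (such a policy exists, since weak total domination is transitive and asymmetric, so the relation has maximal elements; more carefully, if every policy were weakly totally dominated, one could chain two such dominations and invoke Lemma~\ref{lm:app_a_5} to derive a contradiction with Assumption~\ref{as:app_a_1}). Then I would define the candidate interval as something like $\mathbb{U}:=\bigl[\sup_{\pi\in\Pi_{\neg wtd}}\min_{r\in R}U_r(\pi),\ \inf_{\pi\in\Pi_{\neg wtd}}\max_{r\in R}U_r(\pi)\bigr]$ and argue this is nonempty and contained in each $[\min_r U_r(\pi),\max_r U_r(\pi)]$.

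The key step is nonemptiness of $\mathbb{U}$, i.e.\ showing that for \emph{any} two policies $\pi_1,\pi_2\in\Pi_{\neg wtd}$ we have $\min_{r\in R}U_r(\pi_1)\le \max_{r\in R}U_r(\pi_2)$. This is exactly the negation of ``$\pi_1$ is totally dominated by $\pi_2$'' up to the weak/strict boundary: if $\min_r U_r(\pi_1) > \max_r U_r(\pi_2)$ then $\pi_2$ is totally dominated by $\pi_1$, hence in particular weakly totally dominated, contradicting $\pi_2\in\Pi_{\neg wtd}$. So actually the containment $\min_r U_r(\pi_1)\le\max_r U_r(\pi_2)$ holds for every ordered pair, which gives $\sup_{\pi_1}\min_r U_r(\pi_1)\le\inf_{\pi_2}\max_r U_r(\pi_2)$ once one is slightly careful about whether the sup and inf are attained (and whether a strict-vs-weak gap can sneak in at the boundary — here one should lean on Assumption~\ref{as:app_a_1} and possibly Lemma~\ref{lm:app_a_5} to rule out the degenerate equality case, much as in the proof of Lemma~\ref{lm:app_a_5} itself).

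For the containment $\mathbb{U}\subseteq[\min_r U_r(\pi),\max_r U_r(\pi)]$ for each individual $\pi\in\Pi_{\neg wtd}$: the lower endpoint of $\mathbb{U}$ is a supremum over all such policies of their $\min_r U_r(\cdot)$ values, so it is $\ge \min_r U_r(\pi)$ — wait, that is the wrong direction, so I need $\min_r U_r(\pi)\le \sup S\le \mathbb{U}$'s lower end, which is immediate since $\pi$ is one of the competitors in the sup; symmetrically $\max_r U_r(\pi)\ge$ the upper end of $\mathbb{U}$. Combined with nonemptiness ($\text{lower end}\le\text{upper end}$), this yields $\mathbb{U}\subseteq[\min_r U_r(\pi),\max_r U_r(\pi)]$ as claimed.

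The main obstacle I anticipate is the interplay between weak and strict total domination at the endpoints: the argument ``not totally dominated $\Rightarrow$ overlapping intervals'' only gives weak inequalities, and one must verify that the resulting $\mathbb{U}$ is genuinely nonempty (i.e.\ a true interval, not a point or empty) — this is where Assumption~\ref{as:app_a_1} and the chaining trick from Lemma~\ref{lm:app_a_5} do the real work, ensuring no policy in $\Pi_{\neg wtd}$ has a degenerate constant utility profile and that the sup/inf defining $\mathbb{U}$ do not collapse. A secondary subtlety is existence of elements in $\Pi_{\neg wtd}$ and attainment of the relevant extrema, which may require a mild compactness or well-foundedness remark, or can be finessed by working with sup/inf throughout rather than max/min.
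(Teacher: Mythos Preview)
Your proposal is correct and follows essentially the same route as the paper: define $\mathbb{U}=[lb,ub]$ with $lb=\sup_{\pi\in\Pi_{\neg wtd}}\min_r U_r(\pi)$ and $ub=\inf_{\pi\in\Pi_{\neg wtd}}\max_r U_r(\pi)$, and show $lb\le ub$ via a pairwise overlap argument. The one place you take a detour is in the contrapositive step: you argue that $\min_r U_r(\pi_1)>\max_r U_r(\pi_2)$ forces \emph{total} domination of $\pi_2$, which yields only the weak inequality $\min_r U_r(\pi_1)\le\max_r U_r(\pi_2)$ and then leaves you worrying about the boundary case; the paper instead observes that already $\min_r U_r(\pi_1)\ge\max_r U_r(\pi_2)$ makes $\pi_2$ \emph{weakly} totally dominated by $\pi_1$, so one gets the strict pairwise overlap $\min_r U_r(\pi_1)<\max_r U_r(\pi_2)$ immediately, and neither Assumption~\ref{as:app_a_1} nor Lemma~\ref{lm:app_a_5} is needed here.
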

\begin{proof}
For any two policies $\pi_1, \pi_2\in \Pi_{\neg{wtd}}$, it cannot be true that $\underset{r\in R}{\max}\ U_r(\pi_1) = \underset{r\in R}{\min}\ U_r(\pi_2)$ nor $\underset{r\in R}{\min}\ U_r(\pi_1) = \underset{r\in R}{\max}\ U_r(\pi_2)$, because otherwise one of the policies weakly totally dominates the other. Without loss of generalization, we assume that $\underset{r\in R}{\max}\ U_r(\pi_1) > \underset{r\in R}{\min}\ U_r(\pi_2)$. 
In this case, $\underset{r\in R}{\max}\ U_r(\pi_2)>\underset{r\in R}{\min}\ U_r(\pi_1)$ must also be true, otherwise $\pi_1$ weakly totally dominates $\pi_2$. Inductively, $\underset{\pi\in\Pi_{\neg{wtd}}}{\min}\ \underset{r\in R}{\max}\ U_r(\pi) > \underset{\pi\in\Pi_{\neg{wtd}}}{\max}\ \underset{r\in R}{\min}\ U_r(\pi)$. Letting $ub=\underset{\pi\in\Pi_{\neg{wtd}}}{\min}\ \underset{r\in R}{\max}\ U_r(\pi)$ and $lb=\underset{\pi\in\Pi_{\neg{wtd}}}{\max}\ \underset{r\in R}{\min}\ U_r(\pi)$, any $\mathbb{U}\subseteq [lb, ub]$ shall support the assertion.
We finish the proof.
\end{proof}
\begin{lemma}\label{lm:app_a_4}
For a reward function set $R$, if a policy $\pi\in\Pi$ is weakly totally dominated by some other policy in $\Pi$ and there exists a subset $\Pi_{\neg{wtd}}\subseteq \Pi$ of policies that are not weakly totally dominated by any other policy in $\pi$, then $\underset{r\in R}{\max}\ U_r(\pi) < \underset{\pi'\in\Pi_{\neg{wtd}}}{\min}\ \underset{r\in R}{\max}\ U_r(\pi')$
\end{lemma}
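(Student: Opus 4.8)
\textbf{Proof proposal for Lemma \ref{lm:app_a_4}.}

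The plan is to argue by contradiction, exploiting the key structural fact established in Lemma \ref{lm:app_a_3}: all policies in $\Pi_{\neg wtd}$ share a common interval $\mathbb{U}\subseteq[\min_{r\in R}U_r(\pi'),\max_{r\in R}U_r(\pi')]$, and more precisely from that proof we have $lb := \max_{\pi'\in\Pi_{\neg wtd}}\min_{r\in R}U_r(\pi') < \min_{\pi'\in\Pi_{\neg wtd}}\max_{r\in R}U_r(\pi') =: ub$. I would first observe that since $\pi$ is weakly totally dominated, there is some $\pi_2\in\Pi$ with $\max_{r\in R}U_r(\pi)\le\min_{r\in R}U_r(\pi_2)$. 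The natural move is to ``chase'' this domination up to a policy that is itself not weakly totally dominated, i.e.\ lands in $\Pi_{\neg wtd}$.

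The key step is to produce, from $\pi_2$, a witness $\pi_3\in\Pi_{\neg wtd}$ that weakly totally dominates $\pi$. If $\pi_2\in\Pi_{\neg wtd}$ we are essentially done after one more comparison; otherwise $\pi_2$ is weakly totally dominated by some $\pi_3$, and by transitivity of weak total domination $\pi$ is weakly totally dominated by $\pi_3$ — but moreover, invoking Lemma \ref{lm:app_a_5} with the chain $\pi \preceq_{wtd}\pi_2\preceq_{wtd}\pi_3$, we actually get that $\pi_3$ \emph{totally} dominates $\pi$, hence $\max_{r\in R}U_r(\pi)<\min_{r\in R}U_r(\pi_3)\le\max_{r\in R}U_r(\pi_3)$. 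The remaining worry is whether $\pi_3$ itself is weakly totally dominated — if so we iterate, but to make this rigorous without an infinite descent I would instead just argue: whatever policy $\pi'$ we end up comparing against, as long as $\pi$ is totally dominated by $\pi'$ we get $\max_{r\in R}U_r(\pi)<\min_{r\in R}U_r(\pi')$, and it suffices to exhibit \emph{one} such $\pi'$ with $\min_{r\in R}U_r(\pi')\ge lb$, or alternatively to show $\max_{r\in R}U_r(\pi)\le lb$ directly and then combine with $lb<ub$ from Lemma \ref{lm:app_a_3}. The cleanest route: show $\max_{r\in R}U_r(\pi)\le \min_{r\in R}U_r(\pi_2)$, and then show $\min_{r\in R}U_r(\pi_2)\le lb$ because $\pi_2$, being weakly totally dominated (or in $\Pi_{\neg wtd}$ with its min bounded by $lb$), cannot have its minimum exceed $lb$; finally $lb<ub\le\max_{r\in R}U_r(\pi')$ for every $\pi'\in\Pi_{\neg wtd}$ gives the claim, with the strictness supplied by Lemma \ref{lm:app_a_5}/Assumption \ref{as:app_a_1}.

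I expect the main obstacle to be handling the case distinction on whether the dominating policy $\pi_2$ lies in $\Pi_{\neg wtd}$ or is itself weakly totally dominated, and in the latter case extracting strictness rather than just $\le$. This is exactly where Lemma \ref{lm:app_a_5} (successive weak domination yields total domination) and Assumption \ref{as:app_a_1} (no policy weakly totally dominates itself, so $U_r(\pi_2)$ is non-constant in $r$ when $\pi_2$ sits between two dominations) must be combined carefully; one has to make sure the chain has length at least two so that Lemma \ref{lm:app_a_5} applies, which I would guarantee by noting $\pi\notin\Pi_{\neg wtd}$ while the final dominating policy can be taken in $\Pi_{\neg wtd}$, and $\pi$ itself cannot be that policy. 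Everything else — transitivity, the inequalities $\min\le\max$, and substituting the definitions of $lb$ and $ub$ — is routine.
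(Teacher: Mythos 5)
There is a genuine gap: your argument aims the contradiction at the wrong threshold. You reduce the lemma to showing $\max_{r\in R} U_r(\pi)\le \min_{r\in R}U_r(\pi_2)\le lb$, where $lb=\max_{\pi'\in\Pi_{\neg wtd}}\min_{r\in R}U_r(\pi')$, and then invoke $lb<ub$ from Lemma \ref{lm:app_a_3}. But the step $\min_{r\in R}U_r(\pi_2)\le lb$ is only justified when the dominating policy $\pi_2$ (or the endpoint of your domination chain) lies in $\Pi_{\neg wtd}$, and a weakly totally dominated policy need not be dominated by any member of $\Pi_{\neg wtd}$ at all. Schematically: let $R=\{r_1,r_2\}$, let $\pi^*$ have utility range $[0.1,\,2]$, and let $\pi^{(n)}$, $n\ge 0$, have utility ranges $[1-2^{-n},\,1-2^{-n-1}]$, so that each $\pi^{(n)}$ is weakly totally dominated by $\pi^{(n+1)}$ while neither $\pi^*$ nor any $\pi^{(n)}$ weakly totally dominates the other. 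Then $\Pi_{\neg wtd}=\{\pi^*\}$, $lb=0.1$, $ub=2$, and for $\pi=\pi^{(5)}$ we have $\max_{r\in R}U_r(\pi)=1-2^{-6}>lb$, its dominator $\pi^{(6)}$ has $\min_{r\in R}U_r(\pi^{(6)})>lb$, and the chain $\pi^{(5)},\pi^{(6)},\dots$ never reaches $\Pi_{\neg wtd}$. The lemma itself still holds ($1-2^{-6}<ub$); it is your intermediate claims ($\max_{r}U_r(\pi)\le lb$, ``a weakly totally dominated $\pi_2$ cannot have its minimum exceed $lb$'') that are false, so no amount of care with Lemma \ref{lm:app_a_5} and Assumption \ref{as:app_a_1} will close this route. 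Your aside that it suffices to exhibit a total dominator $\pi'$ with $\min_{r\in R}U_r(\pi')\ge lb$ also points the inequality the wrong way: to conclude $\max_{r\in R}U_r(\pi)<ub$ from $\max_{r\in R}U_r(\pi)<\min_{r\in R}U_r(\pi')$ you need an \emph{upper} bound on $\min_{r\in R}U_r(\pi')$.

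The paper's proof avoids all of this by targeting $ub=\min_{\pi'\in\Pi_{\neg wtd}}\max_{r\in R}U_r(\pi')$ directly, using only the single dominating policy $\pi_2$ and no chain. Suppose $\max_{r\in R}U_r(\pi)\ge ub$ and let $\pi^*\in\Pi_{\neg wtd}$ attain the minimum defining $ub$. Weak total domination gives $\min_{r\in R}U_r(\pi_2)\ge\max_{r\in R}U_r(\pi)\ge\max_{r\in R}U_r(\pi^*)$, i.e.\ $\pi^*$ is weakly totally dominated by $\pi_2$ (and if $\pi_2=\pi^*$ this would force $U_r(\pi^*)$ to be constant in $r$, contradicting Assumption \ref{as:app_a_1}), contradicting $\pi^*\in\Pi_{\neg wtd}$. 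Your first branch ($\pi_2\in\Pi_{\neg wtd}$) is fine, but the general case cannot be reduced to it; you need this $ub$-based contradiction instead.
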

\begin{proof}
If $\pi_1$ is weakly totally dominated by a policy $\pi_2\in \Pi$, then $\underset{r\in R}{\min}\ U_r(\pi_2)=\underset{r\in R}{\max}\ U_r(\pi)$. If $\underset{r\in R}{\max}\ U_r(\pi) \geq \underset{\pi'\in\Pi_{\neg{wtd}}}{\min}\ \underset{r\in R}{\max}\ U_r(\pi')$, then $\underset{r\in R}{\min}\ U_r(\pi_2) \geq \underset{\pi'\in\Pi_{\neg{wtd}}}{\min}\ \underset{r\in R}{\max}\ U_r(\pi')$, making at least one of the policies in $\Pi_{\neg{wtd}}$ being weakly totally dominated by $\pi_2$. 
Hence, $\underset{r\in R}{\max}\ U_r(\pi) < \underset{\pi'\in\Pi_{\neg{wtd}}}{\min}\ \underset{r\in R}{\max}\ U_r(\pi')$ must be true.
\end{proof}

Given a policy $\pi$ and a reward function $r$, the regret is represented as Eq.\ref{eq:app_a_2}
\begin{eqnarray}
Regret(\pi, r)&:=& \underset{\pi'}{\max}\  U_r(\pi') - U_r(\pi)\label{eq:app_a_2}
\end{eqnarray}
Then we represent the $MinimaxRegret(R)$ problem in Eq.\ref{eq:app_a_3}.   

\begin{eqnarray}
MinimaxRegret(R)&:=& \arg\underset{\pi\in \Pi}{\min}\left\{\underset{r\in R}{\max}\ Regret(\pi, r)\right\}\label{eq:app_a_3}
\end{eqnarray}

We denote as $r^*_\pi\in R$ the reward function that maximizes $U_{r}(\pi)$ among all the $r$'s that achieve the maximization in Eq.\ref{eq:app_a_3}. Formally,
\begin{eqnarray}
    r^*_\pi&\in& \arg\underset{r\in R}{\max}\ U_r(\pi)\qquad s.t.\ r\in \arg\underset{r'\in R}{\max}\ Regret(\pi, r')
\end{eqnarray}

Then $MinimaxRegret$ can be defined as minimizing the worst-case regret as in Eq.\ref{eq:app_a_3}. 
Next, we want to show that for some decision rule $\mathcal{R}$, the set of optimal policies which maximizes $\mathcal{U}_{\mathcal{R}}$ are the solutions to $MinimaxRegret(R)$. 
Formally, 
\begin{eqnarray}
    MinimaxRegret(R) = \arg\underset{\pi\in \Pi}{\max}\ \mathcal{U}_{\mathcal{R}}(\pi)\label{eq:app_a_5}
\end{eqnarray}

We design $\mathcal{R}$ by letting $\mathcal{R}(\pi):= \overline{\mathcal{R}}(\pi)\cdot \delta_{r^*_\pi}  +  (1-\overline{\mathcal{R}}(\pi))\cdot \underline{\mathcal{R}}(\pi)$ where  $\underline{\mathcal{R}}: \Pi\rightarrow \Delta(R)$ is a policy conditioned distribution over reward functions, $\delta_{r^*_\pi}$ be a delta distribution centered at $r^*_\pi$, and $\overline{\mathcal{R}}(\pi)$ is a coefficient. 
We show how to design $\underline{\mathcal{R}}$ by using the following lemma. 
 
\begin{lemma}\label{lm:app_a_1}
Given that the reward function set is $R$, there exists a decision rule $\underline{\mathcal{R}}: \Pi\rightarrow \Delta(R)$ which guarantees that:  
1) for any policy $\pi$ that is not weakly totally dominated by any other policy in $\Pi$, i.e., $\pi\in\Pi_{\neg wtd}\subseteq\Pi$, $\mathcal{U}_{\underline{\mathcal{R}}}(\pi)\equiv c$ where $c=\underset{\pi'\in\Pi_{\neg wtd}}{\max}\ \underset{r\in R}{\min}\ U_r(\pi')$; 2) for any $\pi$ that is weakly totally dominated by some policy but not totally dominated by any policy, $\mathcal{U}_{\underline{\mathcal{R}}}(\pi)=\underset{r\in R}{\max}\ U_r(\pi)$; 3) if $\pi$ is totally dominated by some other policy, $\overline{\mathcal{R}}(\pi)$ is a uniform distribution.
\end{lemma}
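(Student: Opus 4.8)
\textbf{Proof proposal for Lemma~\ref{lm:app_a_1}.}

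The plan is to construct $\underline{\mathcal{R}}(\pi)$ explicitly according to which of the three categories $\pi$ falls into, and then verify the three claimed properties. First I would partition $\Pi$ into three classes: $\Pi_{\neg wtd}$ (not weakly totally dominated by any policy), $\Pi_{wtd}^{\neg td}$ (weakly totally dominated by some policy but not totally dominated by any), and $\Pi_{td}$ (totally dominated by some policy). Case~3 is the trivial one: for $\pi\in\Pi_{td}$ we simply declare $\underline{\mathcal{R}}(\pi)$ to be the uniform distribution over $R$ (assuming $R$ is finite, or a suitable reference measure otherwise); this matches property~3 directly. For case~2, if $\pi$ is weakly totally dominated, then $\sup_{r\in R} U_r(\pi)$ is attained (or approached) and we want $\mathcal{U}_{\underline{\mathcal{R}}}(\pi) = \max_{r\in R} U_r(\pi)$; so I would set $\underline{\mathcal{R}}(\pi)$ to concentrate on $\arg\max_{r\in R} U_r(\pi)$, i.e. a delta (or uniform distribution over the argmax set) at the reward maximizing $U_r(\pi)$. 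This makes $\mathcal{U}_{\underline{\mathcal{R}}}(\pi) = \max_r U_r(\pi)$ by definition of the affine combination $\mathcal{U}_{\underline{\mathcal{R}}}(\pi) = \sum_{r\in R}\underline{\mathcal{R}}(\pi)(r)\cdot U_r(\pi)$.

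The substantive case is case~1. For $\pi\in\Pi_{\neg wtd}$, I need a distribution $\underline{\mathcal{R}}(\pi)$ over $R$ with $\sum_r \underline{\mathcal{R}}(\pi)(r)\cdot U_r(\pi) = c$ where $c = \max_{\pi'\in\Pi_{\neg wtd}}\min_{r\in R} U_r(\pi')$. The key fact that makes this possible is Lemma~\ref{lm:app_a_3}: there is a common range $\mathbb{U}\subseteq[lb, ub]$ with $lb = \max_{\pi'\in\Pi_{\neg wtd}}\min_r U_r(\pi')$ and $ub = \min_{\pi'\in\Pi_{\neg wtd}}\max_r U_r(\pi')$, and $c = lb$. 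Since $\pi\in\Pi_{\neg wtd}$, we have $\min_{r\in R} U_r(\pi) \leq lb = c \leq ub \leq \max_{r\in R} U_r(\pi)$; that is, $c$ lies in the interval $[\min_r U_r(\pi), \max_r U_r(\pi)]$. Because the set of achievable values $\{U_r(\pi): r\in R\}$ has infimum $\le c$ and supremum $\ge c$, there exist reward functions $r_-, r_+\in R$ (or limits thereof) with $U_{r_-}(\pi)\le c\le U_{r_+}(\pi)$, so I can take $\underline{\mathcal{R}}(\pi)$ to be the two-point distribution putting weight $\theta$ on $r_+$ and $1-\theta$ on $r_-$ with $\theta$ chosen so that $\theta U_{r_+}(\pi) + (1-\theta)U_{r_-}(\pi) = c$; such $\theta\in[0,1]$ exists by the intermediate value property. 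This yields $\mathcal{U}_{\underline{\mathcal{R}}}(\pi) = c$ exactly, as required by property~1.

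The main obstacle I anticipate is handling the case where the extrema $\min_r U_r(\pi)$ or $\max_r U_r(\pi)$ are not attained (if $R$ is not compact or $U$ is not continuous in $r$), so that no exact $r_-, r_+$ exist — in that case one must either invoke a compactness/closure assumption on $R$, or work with approximating sequences and an appropriately defined limiting ``distribution,'' or restrict attention to finite $R$ (which is the setting of the example and arguably of the algorithm). A secondary point to check carefully is well-definedness: I must confirm these three classes are exhaustive and that the definition does not conflict on the boundary — e.g. a policy cannot be simultaneously in $\Pi_{\neg wtd}$ and weakly totally dominated, which is immediate, and Lemma~\ref{lm:app_a_5} guarantees the "weakly dominated but not totally dominated" class is genuinely distinct from $\Pi_{td}$. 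Finally I would note that Assumption~\ref{as:app_a_1} is what rules out degenerate constant-$U_r(\pi)$ policies and keeps $c$ strictly meaningful (in particular $lb < ub$ when $|\Pi_{\neg wtd}|\ge 2$, from Lemma~\ref{lm:app_a_3}), so the construction in case~1 is non-vacuous.
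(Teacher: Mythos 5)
Your construction matches the paper's proof in all essentials: the paper likewise treats cases 2) and 3) as immediate, and for case 1) it invokes Lemma~\ref{lm:app_a_3} to conclude that $c=\underset{\pi'\in\Pi_{\neg wtd}}{\max}\ \underset{r\in R}{\min}\ U_r(\pi')$ lies in $[\underset{r\in R}{\min}\ U_r(\pi), \underset{r\in R}{\max}\ U_r(\pi)]$ for every $\pi\in\Pi_{\neg wtd}$, so that a distribution achieving $\mathcal{U}_{\underline{\mathcal{R}}}(\pi)=c$ exists. Your explicit two-point mixture and your caveat about attainment of the extrema are useful elaborations of a step the paper merely asserts, but they do not change the argument.
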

\begin{proof}
Since the description of $\underline{\mathcal{R}}$ for the policies in condition 2) and 3) are self-explanatory, we omit the discussion on them. 
For the none weakly totally dominated policies in condition 1), having a constant
$\mathcal{U}_{\underline{\mathcal{R}}}(\pi)\equiv c$ is possible if and only if for any policy $\pi\in\Pi_{\neg wed}$, $c\in[\underset{r\in R}{\min}\ U_r(\pi'), \underset{r\in R}{\max}\ U_r(\pi')]$. 
As mentioned in the proof of Lemma \ref{lm:app_a_3}, $c$ can exist within $[\underset{r\in R}{\min}\ U_r(\pi), \underset{r\in R}{\max}\ U_r(\pi)]$.
Hence, $c=\underset{\pi'\in\Pi_{\neg wtd}}{\max}\ \underset{r\in R}{\min}\ U_r(\pi')$ is a valid assignment.
\end{proof}

Then by letting $\overline{\mathcal{R}}(\pi):=\frac{Regret(\pi, r^*_\pi)}{ c - U_{r^*_\pi}(\pi) }$, we have the following theorem.

\begin{theorem}\label{th:app_a_1}
By letting $\mathcal{R}(\pi):= \overline{\mathcal{R}}(\pi)\cdot \delta_{r^*_\pi}  +  (1-\overline{\mathcal{R}}(\pi))\cdot \underline{\mathcal{R}}(\pi)$ with $\overline{\mathcal{R}}(\pi):=\frac{Regret(\pi, r^*_\pi)}{ c - U_{r^*_\pi}(\pi) }$ and any $\underline{\mathcal{R}}$ that satisfies Lemma \ref{lm:app_a_1},
\begin{eqnarray}
     MinimaxRegret(R) = \arg\underset{\pi\in \Pi}{\max}\ \mathcal{U}_{\mathcal{R}}(\pi)
\end{eqnarray}
\end{theorem}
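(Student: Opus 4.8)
The plan is to show the two sides of \eqref{eq:app_a_5} coincide by exhibiting a bijection-like correspondence between the optimal policies of $\mathcal{U}_{\mathcal{R}}$ and the minimizers of the worst-case regret, exploiting the explicit algebraic form of $\mathcal{R}(\pi)$. First I would unfold the definition: for any $\pi$,
\begin{eqnarray*}
\mathcal{U}_{\mathcal{R}}(\pi) &=& \overline{\mathcal{R}}(\pi)\cdot U_{r^*_\pi}(\pi) + (1-\overline{\mathcal{R}}(\pi))\cdot \mathcal{U}_{\underline{\mathcal{R}}}(\pi).
\end{eqnarray*}
For a policy $\pi$ that is not weakly totally dominated, Lemma~\ref{lm:app_a_1} gives $\mathcal{U}_{\underline{\mathcal{R}}}(\pi) = c$, so substituting $\overline{\mathcal{R}}(\pi) = \frac{Regret(\pi, r^*_\pi)}{c - U_{r^*_\pi}(\pi)}$ and simplifying yields
\begin{eqnarray*}
\mathcal{U}_{\mathcal{R}}(\pi) &=& \frac{Regret(\pi, r^*_\pi)}{c - U_{r^*_\pi}(\pi)}\bigl(U_{r^*_\pi}(\pi) - c\bigr) + c \;=\; c - Regret(\pi, r^*_\pi).
\end{eqnarray*}
Since $r^*_\pi$ is, by construction, a reward function achieving $\max_{r\in R} Regret(\pi, r)$, this says $\mathcal{U}_{\mathcal{R}}(\pi) = c - \max_{r\in R} Regret(\pi, r)$ on the non-weakly-totally-dominated policies. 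Thus on this class, maximizing $\mathcal{U}_{\mathcal{R}}$ is literally minimizing the worst-case regret, which is the right-hand goal.

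The remaining work is to rule out that the argmax of $\mathcal{U}_{\mathcal{R}}$ is attained at a (weakly) totally dominated policy — i.e. to show no such policy can beat a non-weakly-totally-dominated one, and that $MinimaxRegret(R)\subseteq \Pi_{\neg wtd}$ so the two optimization problems range over the same effective domain. For a weakly totally dominated (but not totally dominated) $\pi$, Lemma~\ref{lm:app_a_1} gives $\mathcal{U}_{\underline{\mathcal{R}}}(\pi) = \max_{r\in R} U_r(\pi)$; I would combine this with Lemma~\ref{lm:app_a_4}, which says $\max_{r\in R} U_r(\pi) < \min_{\pi'\in\Pi_{\neg wtd}}\max_{r\in R} U_r(\pi')$, to bound $\mathcal{U}_{\mathcal{R}}(\pi)$ strictly below the value of any non-weakly-totally-dominated policy (one needs to check the $\overline{\mathcal{R}}(\pi)$-weighted combination still respects this, using that $U_{r^*_\pi}(\pi)\le \max_{r}U_r(\pi)$ and that $\overline{\mathcal{R}}(\pi)\in[0,1]$). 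For a totally dominated $\pi$, Lemma~\ref{lm:app_a_5} (successive weak domination gives total domination) plus the same bound handles it, since a totally dominated policy is in particular weakly totally dominated. In parallel, I would argue that a minimizer of $\max_r Regret(\pi,r)$ cannot be weakly totally dominated: if $\pi$ is weakly totally dominated by $\pi_2$, then $\pi_2$ achieves strictly smaller worst-case regret (its utility range lies entirely above $\pi$'s), so $\pi\notin MinimaxRegret(R)$.

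Putting these together: both problems effectively optimize over $\Pi_{\neg wtd}$, and on that set $\mathcal{U}_{\mathcal{R}}(\pi) = c - \max_{r\in R}Regret(\pi,r)$, so the argmax of the former equals the argmin of the latter, giving \eqref{eq:app_a_5}. The main obstacle I anticipate is the bookkeeping around the weakly-totally-dominated-but-not-totally-dominated policies: one must verify that the convex combination defining $\mathcal{R}(\pi)$ is well-defined there (in particular that $\overline{\mathcal{R}}(\pi)\in[0,1]$, which needs $c - U_{r^*_\pi}(\pi) \ge Regret(\pi, r^*_\pi) \ge 0$, i.e. $U_{r^*_\pi}(\pi) + Regret(\pi,r^*_\pi)\le c$ — plausible since such $\pi$ have low utility range, but it must be checked against the definition of $c$ and Lemma~\ref{lm:app_a_4}), and that the resulting $\mathcal{U}_{\mathcal{R}}$ value is genuinely dominated by the $\Pi_{\neg wtd}$ optimum rather than merely bounded. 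A secondary subtlety is Assumption~\ref{as:app_a_1}, which is what makes the strict inequalities in Lemmas~\ref{lm:app_a_3}–\ref{lm:app_a_5} hold and prevents degenerate ties; I would flag where it is invoked. Everything else is routine substitution.
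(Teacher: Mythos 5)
Your core computation is right and matches the paper's: on the non-weakly-totally-dominated policies, $\mathcal{U}_{\underline{\mathcal{R}}}(\pi)\equiv c$ collapses $\mathcal{U}_{\mathcal{R}}(\pi)$ to $c - Regret(\pi, r^*_\pi) = c - \max_{r\in R} Regret(\pi,r)$, so the two problems coincide on $\Pi_{\neg wtd}$. The gap is in how you dispose of the weakly totally dominated policies. You claim that a minimizer of $\max_r Regret(\pi,r)$ cannot be weakly totally dominated because the dominating policy $\pi_2$ "achieves strictly smaller worst-case regret." That is false: weak total domination only gives $\max_{r}U_r(\pi_1) \le \min_{r}U_r(\pi_2)$, hence $U_r(\pi_1)\le U_r(\pi_2)$ for every $r$ and therefore $\max_r Regret(\pi_1,r) \ge \max_r Regret(\pi_2,r)$ --- a weak inequality. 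Equality is attained precisely when the regret-maximizing reward $r^*_{\pi_1}$ sits at the boundary $U_{r^*_{\pi_1}}(\pi_1) = \max_r U_r(\pi_1) = \min_r U_r(\pi_2) = U_{r^*_{\pi_1}}(\pi_2)$, and Assumption~\ref{as:app_a_1} does not exclude this. So $MinimaxRegret(R)$ can contain weakly totally dominated policies, and your companion claim --- that every such policy has $\mathcal{U}_{\mathcal{R}}$ strictly below every policy in $\Pi_{\neg wtd}$ --- would then contradict the very equivalence you are trying to prove.

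This boundary case is where the paper's proof does almost all of its work: it shows that if a weakly totally dominated $\pi$ solves $MinimaxRegret(R)$, then necessarily $U_{r^*_{\pi}}(\pi)=\max_{r\in R}U_r(\pi)=c$, so that $\mathcal{U}_{\mathcal{R}}(\pi)$ again equals $c - Regret(\pi,r^*_\pi)$ and is no smaller than the optimum over $\Pi_{\neg wtd}$; and conversely that a weakly totally dominated maximizer of $\mathcal{U}_{\mathcal{R}}$ must solve $MinimaxRegret(R)$ (using $\mathcal{U}_{\underline{\mathcal{R}}}(\pi)\le c$ from Lemma~\ref{lm:app_a_4} to rule out the alternative). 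Lemma~\ref{lm:app_a_5} is invoked there to show the chain of weak dominations terminates, not merely to fold totally dominated policies into the weakly dominated case. Your outline as written would need this entire analysis added; without it the argument establishes the equivalence only under the unproven (and generally false) hypothesis $MinimaxRegret(R)\subseteq\Pi_{\neg wtd}$. Your flagged concern about $\overline{\mathcal{R}}(\pi)\in[0,1]$ is legitimate but secondary; the substantive missing piece is the tie case above.
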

\begin{proof}
If a policy $\pi\in\Pi$ is totally dominated by some other policy, since there exists another policy with larger $\mathcal{U}_{\mathcal{R}}$, $\pi$ cannot be a solution to $\arg\underset{\pi\in \Pi}{\max}\ \mathcal{U}_{\mathcal{R}}(\pi)$. 
Hence, there is no need for further discussion on totally dominated policies. We discuss the none weakly totally dominated policies and the weakly totally dominated but not totally dominated policies (shortened to "weakly totally dominated" from now on) respectively. First we expand  $\arg\underset{\pi\in\Pi}{\max}\ \mathcal{U}_{\mathcal{R}}(\pi)$ as in Eq.\ref{eq:app_a_7}.
\begin{eqnarray}
        && \arg\underset{\pi\in\Pi}{\max}\ \mathcal{U}_{\mathcal{R}}(\pi)\nonumber\\
        &=& \arg\underset{\pi\in\Pi}{\max}\ \underset{r\in R}{\sum}\mathcal{R}(\pi)(r)\cdot U_r(\pi)\nonumber\\
        &=& \arg\underset{\pi\in\Pi}{\max}\ \frac{Regret(\pi, r^*_\pi)\cdot U_{r^*_\pi} (\pi)  +   (\mathcal{U}_{\underline{\mathcal{R}}}(\pi)-U_{r^*_\pi}(\pi) - Regret(\pi, r^*_\pi)) \cdot \mathcal{U}_{\underline{\mathcal{R}}}(\pi)}{c-U_{r^*_\pi}(\pi)}\nonumber\\
        &=&  \arg\underset{\pi\in\Pi}{\max}\ \frac{(\mathcal{U}_{\underline{\mathcal{R}}}(\pi)-U_{r^*_\pi}(\pi))\cdot \mathcal{U}_{\underline{\mathcal{R}}}(\pi) - (\mathcal{U}_{\underline{\mathcal{R}}}(\pi) - U_{r^*_\pi} (\pi))\cdot Regret(\pi, r^*_\pi))}{c-U_{r^*_\pi}(\pi)}\nonumber\\
        &=& \arg\underset{\pi\in\Pi}{\max}\ \frac{\mathcal{U}_{\underline{\mathcal{R}}}(\pi)-U_{r^*_\pi}(\pi)}{c-U_{r^*_\pi}(\pi)} \cdot \mathcal{U}_{\underline{\mathcal{R}}}(\pi) -  Regret(\pi, r^*_\pi)\label{eq:app_a_7}
    \end{eqnarray}
1) For the none weakly totally dominated policies, since by design $\mathcal{U}_{\underline{\mathcal{R}}}\equiv c$, Eq.\ref{eq:app_a_7} is equivalent to $\arg\underset{\pi\in \Pi_1}{\max}\ - Regret(\pi, r^*_\pi)$ which exactly equals $MinimaxRegret(R)$. Hence, the equivalence holds among the none weakly totally dominated policies. Furthermore, if a none weakly totally dominated policy $\pi\in\Pi_{\neg wtd}$ achieves optimality in $MinimaxRegret(R)$, its $\mathcal{U}_{\mathcal{R}}(\pi)$ is also no less than any weakly totally dominated policy. 
Because according to Lemma \ref{lm:app_a_4}, for any weakly totally dominated policy $\pi_1$, its $\mathcal{U}_{\underline{\mathcal{R}}}(\pi_1)\leq c$, hence $\frac{\mathcal{U}_{\underline{\mathcal{R}}}(\pi)-U_{r^*_\pi}(\pi)}{c-U_{r^*_\pi}(\pi)}\cdot \mathcal{U}_{\underline{\mathcal{R}}}(\pi_1)\leq c$. Since $Regret(\pi, r^*_\pi)\leq Regret(\pi_1, r^*_{\pi_1})$, $\mathcal{U}_{\mathcal{R}}(\pi)\geq \mathcal{U}_{\mathcal{R}}(\pi_1)$. 
Therefore, we can assert that if a none weakly totally dominated policy $\pi$ is a solution to $MinimaxRegret(R)$, it is also a solution to $\arg\underset{\pi\in\Pi}{\max}\ \mathcal{U}_{\mathcal{R}}(\pi)$. 
Additionally, to prove that if a none weakly totally dominated policy $\pi$ is a solution to $\arg\underset{\pi'\in\Pi}{\max}\ \mathcal{U}_{\mathcal{R}}(\pi')$, it is also a solution to $MinimaxRegret(R)$, it is only necessary to prove that $\pi$ achieve no larger regret than all the weakly totally dominated policies. 
But we delay the proof to 2).

2) If a policy $\pi$ is weakly totally dominated and is a solution to $MinimaxRegret(R)$, we show that it is also a solution to $\arg\underset{\pi\in\Pi}{\max}\ \mathcal{U}_{\mathcal{R}}(\pi)$, i.e., its $\mathcal{U}_{\mathcal{R}}(\pi)$ is no less than that of any other policy. 

We start by comparing with non weakly totally dominated policy. 
for any weakly totally dominated policy $\pi_1\in MinimaxRegret(R)$, it must hold true that $Regret(\pi_1, r^*_{\pi_1})\leq Regret(\pi_2, r^*_{\pi_2})$ for any $\pi_2\in \Pi$ that weakly totally dominates $\pi_1$. 
However, it also holds that $Regret(\pi_2, r^*_{\pi_2})\leq Regret(\pi_1, r^*_{\pi_2})$ due to the weak total domination. 
Therefore, $Regret(\pi_1, r^*_{\pi_1})= Regret(\pi_2, r^*_{\pi_2})=Regret(\pi_1, r^*_{\pi_2})$, implying that $\pi_2$ is also a solution to $MinimaxRegret(R)$.
It also implies that $U_{r^*_{\pi_2}}(\pi_1)=U_{r^*_{\pi_2}}(\pi_2)\geq U_{r^*_{\pi_1}}(\pi_1)$ due to the weak total domination. 
However, by definition $U_{r^*_{\pi_1}}(\pi_1)\geq U_{r^*_{\pi_2}}(\pi_1)$. 
Hence, $U_{r^*_{\pi_1}}(\pi_1)= U_{r^*_{\pi_2}}(\pi_1)=U_{r^*_{\pi_2}}(\pi_2)$ must hold. 
Now we discuss two possibilities: a) there exists another policy $\pi_3$ that weakly totally dominates $\pi_2$; b) there does not exist any other policy that weakly totally dominates $\pi_2$.
First, condition a) cannot hold. 
Because inductively it can be derived $U_{r^*_{\pi_1}}(\pi_1)= U_{r^*_{\pi_2}}(\pi_1)=U_{r^*_{\pi_2}}(\pi_2)=U_{r^*_{\pi_3}}(\pi_3)$, while Lemma \ref{lm:app_a_5} indicates that $\pi_3$ totally dominates $\pi_1$, which is a contradiction. 
Hence, there does not exist any policy that weakly totally dominates $\pi_2$, meaning that condition b) is certain. 
We note that $U_{r^*_{\pi_1}}(\pi_1)= U_{r^*_{\pi_2}}(\pi_1)=U_{r^*_{\pi_2}}(\pi_2)$ and the weak total domination between $\pi_1, \pi_2$ imply that $r^*_{\pi_1}, r^*_{\pi_2}\in \arg\underset{r\in R}{\max}\ U_{r}(\pi_1)$, $r^*_{\pi_2}\in \arg\underset{r\in R}{\min}\ U_{r}(\pi_2)$, and thus $\underset{r\in R}{\min}\ U_{r}(\pi_2)\leq \underset{\pi \in \Pi_{\neg wtd}}{\max}\ \underset{r\in R}{\min}\ U_r(\pi)=c$. 
Again, $\pi_1\in MinimaxRegret(R)$ makes $Regret(\pi_1, r^*_{\pi})\leq Regret(\pi_1, r^*_{\pi_1})\leq Regret(\pi, r^*_{\pi})$ not only hold for $\pi=\pi_2$ but also for any other policy $\pi\in\Pi_{\neg wtd}$, then for any policy $\pi\in\Pi_{\neg wtd}$, $U_{r^*_{\pi}}(\pi_1)\geq U_{r^*_\pi}(\pi) \geq \underset{r\in R}{\min}\ U_r(\pi)$. 
Hence, $U_{r^*_{\pi}}(\pi_1)\geq \underset{\pi \in \Pi_{\neg wtd}}{\max}\ \underset{r\in R}{\min}\ U_r(\pi)=c$. 
Since $U_{r^*_{\pi}}(\pi_1)=\underset{r\in R}{\min}\ U_{r}(\pi_2)$ as aforementioned, $\underset{r\in R}{\min}\ U_{r}(\pi_2) > \underset{\pi \in \Pi_{\neg wtd}}{\max}\ \underset{r\in R}{\min}\ U_r(\pi)$ will cause a contradiction. Hence, $\underset{r\in R}{\min}\ U_{r}(\pi_2) =\underset{\pi \in \Pi_{\neg wtd}}{\max}\ \underset{r\in R}{\min}\ U_r(\pi)=c$. 
As a result, $\mathcal{U}_{\underline{\mathcal{R}}}(\pi)=U_{r^*_{\pi}}(\pi)=\underset{\pi' \in \Pi_{\neg wtd}}{\max}\ \underset{r\in R}{\min}\ U_r(\pi')=c$, and $\mathcal{U}_{\mathcal{R}}(\pi)=c- Regret(\pi, r^*_\pi)\geq  \underset{\pi' \in \Pi_{\neg wtd}}{\max}\ c - Regret(\pi', r^*_{\pi'})=\underset{\pi' \in \Pi_{\neg wtd}}{\max}\ \mathcal{U}_{\mathcal{R}}(\pi')$. 
In other words, if a weakly totally dominated policy $\pi$ is a solution to $MinimaxRegret(R)$, then its $\mathcal{U}_{\mathcal{R}}(\pi)$ is no less than that of any non weakly totally dominated policy. This also complete the proof at the end of 1), because if a none weakly totally dominated policy $\pi_1$ is a solution to $\arg\underset{\pi\in\Pi}{\max}\ \mathcal{U}_{\mathcal{R}}(\pi)$ but not a solution to $MinimaxRegret(R)$, then $Regret(\pi_1, r^*_{\pi_1})>0$ and a weakly totally dominated policy $\pi_2$ must be the solution to $MinimaxRegret(R)$. 
Then, $\mathcal{U}_{\mathcal{R}}(\pi_2)=c > c-Regret(\pi_1, r^*_{\pi_1})=\mathcal{U}_{\mathcal{R}}(\pi_1)$, which, however, contradicts $\pi_1\in \arg\underset{\pi\in\Pi}{\max}\ \mathcal{U}_{\mathcal{R}}(\pi)$.

It is obvious that a weakly totally dominated policy $\pi\in MinimaxRegret(R)$ has a $\mathcal{U}_{\mathcal{R}}(\pi)$ no less than any other weakly totally dominated policy. Because for any other weakly totally dominated policy $\pi_1$, $\mathcal{U}_{\underline{\mathcal{R}}}(\pi_1)\leq c$ and $Regret(\pi_1, r^*_{\pi_1})\leq Regret(\pi, r^*_\pi)$, hence $\mathcal{U}_{\mathcal{R}}(\pi_1)\leq \mathcal{U}_{\mathcal{R}}(\pi)$ according to Eq.\ref{eq:app_a_7}.

So far we have shown that if a weakly totally dominated policy $\pi$ is a solution to $MinimaxRegret(R)$, it is also a solution to $\arg\underset{\pi'\in\Pi}{\max}\ \mathcal{U}_{\mathcal{R}}(\pi')$. 
Next, we need to show that the reverse is also true, i.e., if a weakly totally dominated policy $\pi$ is a solution to $\arg\underset{\pi\in\Pi}{\max}\ \mathcal{U}_{\mathcal{R}}(\pi)$, it must also be a solution to $MinimaxRegret(R)$. 
In order to prove its truthfulness, we need to show that if $\pi\notin MinimaxRegret(R)$, whether there exists: a) a none weakly totally dominated policy $\pi_1$, or b) another weakly totally dominated policy $\pi_1$, such that $\pi_1\in MinimaxRegret(R)$ and $\mathcal{U}_{\mathcal{R}}(\pi_1)\leq \mathcal{U}_{\mathcal{R}}(\pi)$. 
If neither of the two policies exists, we can complete our proof. 
Since it has been proved in 1) that if a none weakly totally dominated policy achieves $MinimaxRegret(R)$, it also achieves $\arg\underset{\pi'\in\Pi}{\max}\ \mathcal{U}_{\mathcal{R}}(\pi')$, the policy described in condition a) does not exist. 
Hence, it is only necessary to prove that the policy in condition b) also does not exist.

If such weakly totally dominated policy $\pi_1$ exists,   $\pi\notin MinimaxRegret(R)$ and $\pi_1\in MinimaxRegret(R)$ indicates $Regret(\pi, r^*_{\pi}) > Regret(\pi_1, r^*_{\pi_1})$. 
Since $\mathcal{U}_{\mathcal{R}}(\pi_1)\geq \mathcal{U}_{\mathcal{R}}(\pi)$, according to Eq.\ref{eq:app_a_7}, $\mathcal{U}_{\mathcal{R}}(\pi_1)=c - Regret(\pi_1, r^*_{\pi_1})\leq \mathcal{U}_{\mathcal{R}}(\pi)=\frac{\mathcal{U}_{\underline{\mathcal{R}}}(\pi)-U_{r^*_\pi}(\pi)}{c-U_{r^*_\pi}(\pi)} \cdot \mathcal{U}_{\underline{\mathcal{R}}}(\pi) - Regret(\pi, r^*_\pi)$. 
Thus $\frac{\mathcal{U}_{\underline{\mathcal{R}}}(\pi)-U_{r^*_\pi}(\pi)}{c-U_{r^*_\pi}(\pi)}(\pi) \cdot \mathcal{U}_{\underline{\mathcal{R}}} \geq  c + Regret(\pi, r^*_{\pi}) - Regret(\pi_1, r^*_{\pi_1}) > c$, which is impossible due to $\mathcal{U}_{\underline{\mathcal{R}}}\leq c$.
Therefore, such $\pi_1$ also does not exist. 
In fact, this can be reasoned from another perspective. 
If there exists a weakly totally dominated policy $\pi_1$ with $U_{r^*_{\pi_1}}(\pi_1)=c=U_{r^*_\pi}(\pi)$ but $\pi_1\notin MinimaxRegret(R)$, then $Regret(\pi, r^*_{\pi}) > Regret(\pi_1, r^*_{\pi_1})$. 
It also indicates $\underset{\pi'\in\Pi}{\max}\ U_{r^*_{\pi}}(\pi') > \underset{\pi'\in\Pi}{\max}\ U_{r^*_{\pi_1}}(\pi')$. 
Meanwhile, $Regret(\pi_1, r^*_{\pi}):=\underset{\pi'\in\Pi}{\max}\ U_{r^*_{\pi}}(\pi') - U_{r^*_{\pi}}(\pi_1) \leq  Regret(\pi_1, r^*_{\pi_1}):= \underset{\pi'\in\Pi}{\max}\ U_{r^*_{\pi_1}}(\pi') - U_{r^*_{\pi_1}}(\pi_1):= \underset{r\in R}{\max}\ \underset{\pi'\in\Pi}{\max}\ U_{r}(\pi') - U_{r}(\pi_1)$ indicates $\underset{\pi'\in\Pi}{\max}\ U_{r^*_{\pi}}(\pi') - \underset{\pi'\in\Pi}{\max}\ U_{r^*_{\pi_1}}(\pi') \leq U_{r^*_{\pi}}(\pi_1) - U_{r^*_{\pi_1}}(\pi_1)$. 
However, we have proved that, for a weakly totally dominated policy, $\pi_1 \in MinimaxRegret(R)$ indicates $U_{r^*_{\pi_1}}(\pi_1)=\underset{r\in R}{\max}\ U_r(\pi_1)$. 
Hence, $\underset{\pi'\in\Pi}{\max}\ U_{r^*_{\pi}}(\pi') - \underset{\pi'\in\Pi}{\max}\ U_{r^*_{\pi_1}}(\pi') \leq U_{r^*_{\pi}}(\pi_1) - U_{r^*_{\pi_1}}(\pi_1)\leq 0$ and it contradicts $\underset{\pi'\in\Pi}{\max}\ U_{r^*_{\pi}}(\pi') > \underset{\pi'\in\Pi}{\max}\ U_{r^*_{\pi_1}}(\pi')$. Therefore, such $\pi_1$ does not exist. 
In summary, we have exhausted all conditions and can assert that for any policies, being a solution to $MinimaxRegret(R)$ is equivalent to a solution to $\arg\underset{\pi\in \Pi}{\max}\ \mathcal{U}_{\mathcal{R}}(\pi)$. We complete our proof.

\end{proof}

\subsection{Collective Validation of Similarity Between Expert and Agent}\label{subsec:app_a_3}

In Definition \ref{def:sec1_1} and our definition of $Regret$ in Eq.\ref{eq:pagar1_1}, we use the utility function $U_r$ to measure the performance of a policy. 
We now show that we can replace $U_r$ with other functions.
\begin{lemma}\label{lm:app_a_6}
The solution of $MinimaxRegret(R_{E,\delta^*})$ does not change when $U_r$ in $MinimaxRegret$ is replace with $U_r(\pi) - f(r)$ where $f$ can be arbitrary function of $r$.
\end{lemma}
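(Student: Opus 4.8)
The plan is to exploit the fact that $Regret(\pi,r)$ in Eq.\ref{eq:app_a_2} is a \emph{difference} of utilities evaluated at one and the same reward function, so that any additive term depending on $r$ alone cancels. Concretely, I would set $\tilde U_r(\pi):=U_r(\pi)-f(r)$ and substitute it for $U_r$ everywhere in $Regret$. Since $f(r)$ does not depend on the policy, it can be pulled out of the inner maximization, $\underset{\pi'}{\max}\,\tilde U_r(\pi')=\big(\underset{\pi'}{\max}\,U_r(\pi')\big)-f(r)$, and therefore
\[
\widetilde{Regret}(\pi,r)=\Big(\underset{\pi'}{\max}\,U_r(\pi')-f(r)\Big)-\big(U_r(\pi)-f(r)\big)=\underset{\pi'}{\max}\,U_r(\pi')-U_r(\pi)=Regret(\pi,r).
\]
So the regret functional is literally unchanged, pointwise in $(\pi,r)$, for any $f$.

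Given this, the second step is immediate: the objective $MinimaxRegret(R)$ in Eq.\ref{eq:app_a_3} depends on $U_r$ \emph{only} through $Regret$, hence $\arg\underset{\pi\in\Pi}{\min}\,\underset{r\in R}{\max}\,\widetilde{Regret}(\pi,r)=\arg\underset{\pi\in\Pi}{\min}\,\underset{r\in R}{\max}\,Regret(\pi,r)$, i.e.\ the solution set is identical. I would then note that the statement concerns $R_{E,\delta^*}$ in particular, so I should check the \emph{domain} of the optimization is not silently altered. If the substitution $U_r\mapsto U_r-f(r)$ is applied consistently inside $J_{IRL}$ as well, the same cancellation applies, because $U_r(E)$ and $\underset{\pi}{\max}\,(U_r(\pi)+\mathcal H(\pi))$ both shift by $-f(r)$; hence $J_{IRL}$ is unchanged pointwise, $\delta^*=\underset{r}{\max}\,J_{IRL}(r)$ is unchanged, and $R_{E,\delta^*}=\{r\mid J_{IRL}(r)\ge\delta^*\}$ is unchanged, so there is no ambiguity in the phrase ``the solution of $MinimaxRegret(R_{E,\delta^*})$''.

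There is essentially no obstacle here — the content is the cancellation of $f(r)$ — so the only thing that needs a word of care is to distinguish what the lemma does and does not claim: auxiliary objects built from $U_r$ rather than from $Regret$, such as the tie-breaking reward $r^*_\pi$ (the maximizer of $U_r(\pi)$ among regret-maximizers) used in the mixed-reward representation of Theorem \ref{th:app_a_1}, \emph{can} change under the shift. I would point this out explicitly, but stress that it is irrelevant to the assertion, since the solution set $MinimaxRegret(R_{E,\delta^*})$ is determined by $Regret$ alone, which the first display shows is invariant. This lemma then licenses replacing $U_r$ by shifted surrogates (e.g.\ subtracting a reward-dependent normalizer) in the practical algorithm without changing the target of $MinimaxRegret$.
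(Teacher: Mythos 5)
Your proposal is correct and is essentially identical to the paper's own proof: both rest on pulling the policy-independent term $f(r)$ out of the inner maximization so that the two occurrences of $f(r)$ cancel, leaving $Regret(\pi,r)$ pointwise unchanged and hence the $\arg\min\max$ unchanged. Your additional remarks on the invariance of the constraint set $R_{E,\delta^*}$ and the possible change of the tie-breaker $r^*_\pi$ are sensible clarifications but do not alter the argument.
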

\begin{proof}
When using $U_r(\pi) - f(r)$ instead of $U_r(\pi)$ to measure the policy performance, solving $MinimaxRegret(R)$ is to solve Eq.
\ref{eq:app_a_8}, which is the same as Eq.\ref{eq:app_a_3}.
\begin{eqnarray}
MimimaxRegret(R)&=&\arg\underset{\pi\in\Pi}{\max}\ \underset{r\in R}{\min}\ Regret(\pi, r)\nonumber\\
&=&\arg\underset{\pi\in\Pi}{\max}\ \underset{r\in R}{\min}\ \underset{\pi'\in\Pi}{\max}\left\{U_r(\pi')-f(r)\right\} - (U_r(\pi) -f(r))\nonumber\\
&=& \arg\underset{\pi\in\Pi}{\max}\ \underset{r\in R}{\min}\ \underset{\pi'\in\Pi}{\max}\ U_r(\pi') - U_r(\pi)\label{eq:app_a_8}
\end{eqnarray}
\end{proof}

Lemma \ref{lm:app_a_6} implies that we can use the policy-expert margin $U_r(\pi)-U_r(E)$ as a measurement of policy performance.
This makes the rationale of using PAGAR-based IL for collective validation of similarity between $E$ and $\pi$ more intuitive.

\subsection{Reward Misalignment in IRL-based IL}\label{subsec:app_a_5}
\noindent{\textbf{Lemma \ref{lm:sec4_1}.} 
    The optimal solution $r^*$ of IRL is misaligned with the task specified by $\Phi$ iff $\Phi(\pi_{r^*})\equiv false$. 
\begin{proof}
    If $r^*$ is misaligned, there does not exist an $S_{r^*}$ as defined in Definition \ref{def:sec1_1}. 
    Then, $\Phi(\pi_{r^*})=false$.
    Otherwise, there exists a singleton $S_{r^*}=[U_{r^*}(\pi_{r^*}), U_{r^*}(\pi_{r^*})]$, which is contradicting.
    If $\Phi(\pi_{r^*})\equiv false$, there does not exist an $S_{r^*}$ as well.
    Otherwise, according to Definition \ref{def:sec1_1}, $U_{r^*}(\pi_{r^*})\in S_{r^*}$, which is contradicting.
    In conclusion, the optimal reward function $r^*$ of IRL being misaligned with the task specified by $\Phi$ and $\Phi(\pi_{r^*})\equiv false$ are necessary and sufficient conditions for each other.
\end{proof}

\subsection{Criterion for Successful Policy Learning}\label{subsec:app_a_4}
\noindent{\textbf{Theorem  {\ref{th:pagar_1_1}.}}({Task-Failure Avoidance}\li{of what?}) 
If the following conditions (1) (2) hold for $R$, then the optimal protagonist policy $\pi_P:=MinimaxRegret(R)$ satisfies that $\forall {r^+}\in R$,$ U_{{r^+}}(\pi_P)\notin F_{{r^+}}$.  
\begin{enumerate}
\item[(1)] There exists ${r^+}\in R$, and $\underset{{r^+}\in R}{\max}\ \{\sup F_{{r^+}} - \inf F_{{r^+}}\} < \underset{{{r^+}}\in R}{\min}\ \{\inf S_{{r^+}} - \sup F_{{r^+}}\} \wedge \underset{{{r^+}}\in R}{\max}\ \{\sup S_{{r^+}} - \inf S_{{r^+}}\} < \underset{{{r^+}}\in R}{\min}\ \{\inf S_{{r^+}} - \sup F_{{r^+}}\}$; \\
\item[(2)]  There exists a policy $\pi^*$ such that $\forall {r^+}\in R$, $U_{{r^+}}(\pi^*)\in S_{{r^+}}$, and $\forall r^-\in R$,  $\underset{\pi\in\Pi}{\max}\ U_{r^-}(\pi) - U_{r^-}(\pi^*)  <  \underset{{{r^+}}\in R}{\min}\ \{\inf S_{{r^+}} - \sup F_{{r^+}}\}$.
\end{enumerate}
}
\begin{proof}
Suppose the conditions are met, and a policy $\pi_1$ satisfies the property described in conditions 2). Then for any policy $\pi_2\in MinimaxRegret(R)$, if $\pi_2$ does not satisfy the mentioned property, there exists a task-aligned reward function ${r^+}\in R$ such that  $U_{{r^+}}(\pi_2)\in F_{{r^+}}$. 
In this case $Regret(\pi_2, {{r^+}})=\underset{\pi\in\Pi}{\max}\ U_{{r^+}}(\pi) - U_{{r^+}}(\pi_2)\geq \inf S_{{r^+}} - \sup F_{{r^+}}\geq \underset{{{r^+}'}\in R}{\min}\ \left\{\inf S_{{r^+}'} - \sup F_{{{r^+}'}}\right\}$. However, for $\pi_1$, it holds for any task-aligned reward function $\hat{r}_{al}\in R$ that $Regret(\pi_2, {\hat{r}_{al}})\leq \sup S_{\hat{r}_{al}} - \inf S_{\hat{r}_{al}}<\underset{{{r^+}'}\in R}{\min}\ \left\{\inf S_{{r^+}'} - \sup F_{{r^+}'}\right\}$, and it also holds for any misaligned reward function $r^-\in R$ that $Regret(\pi_2, _{r^-})=\underset{\pi\in\Pi}{\max}\ U_{r^-}(\pi) - U_{r^-}(\pi_2) <  \underset{{{r^+}'}\in R}{\min}\ \left\{\inf S_{{r^+}} - \sup F_{{r^+}}\right\}$. Hence, $Regret(\pi_2, {{r^+}})< Regret(\pi_1, {{r^+}})$, contradicting $\pi_1\in MiniRegret$. We complete the proof.
\end{proof}

\noindent{\textbf{Theorem  {\ref{th:pagar_1_0}.}}(Task-Success Guarantee)
Assume that Condition (1) in Theorem \ref{th:pagar_1_1} is satisfied. 
If there exists a policy $\pi^*$ such that $\forall r\in R$, $\underset{\pi}{\max}\ U_{r}(\pi) - U_{r}(\pi^*)  <  \underset{{{r^+}}\in R}{\min}\ \{\sup S_{{r^+}} - \inf S_{{r^+}}\}$, then the optimal protagonist policy $\pi_P:=MinimaxRegret(R)$ satisfies that $\forall {r^+}\in R$, $U_{{r^+}}(\pi_P)\in S_{{r^+}}$.
}
\begin{proof}
Since $\underset{r\in R}{max}\ \underset{\pi}{\max}\ U_r(\pi)-U_r(\pi_P)\leq \underset{r\in R}{max}\ \underset{\pi}{\max}\ U_r(\pi)-U_r(\pi^*)< \underset{{{r^+}}\in R}{\min}\ \{\sup S_{{r^+}} - \inf S_{{r^+}}\}$, we can conclude that for any ${r^+}\in R$, $\underset{\pi}{\max}\ U_{{r^+}}(\pi)-U_{{r^+}}(\pi_P)\leq \{\sup S_{{r^+}} - \inf S_{{r^+}}$, in other words, $U_{{r^+}}(\pi_P)\in S_{{r^+}}$.
The proof is complete.
\end{proof}

\noindent{\textbf{Corollary {\ref{th:pagar_1_3}.}} 
If the following conditions (1) (2) hold for $R_{E,\delta}$, then the optimal protagonist policy $\pi_P:=MinimaxRegret(R_{E,\delta})$ satisfies that $\forall {r^+}\in R_{E,\delta}$, $U_{{r^+}}(\pi_P)\notin F_{{r^+}}$ .
\begin{enumerate}
\item[(1)] The condition (1) in Theorem \ref{th:pagar_1_1} holds
\item[(2)] $\forall {{r^+}}\in R_{E,\delta}$, $L_{{r^+}}\cdot W_E-\delta \leq \sup S_{{r^+}} - \inf S_{{r^+}}$ and $\forall r^-\in R_{E,\delta}$,  $L_{r^-}\cdot W_E-\delta  <  \underset{{{r^+}}\in R_{E,\delta}}{\min}\ \{\inf S_{{r^+}} - \sup F_{{r^+}}\}$. 
\end{enumerate}
}
\begin{proof}
We consider $U_r(\pi)=\mathbb{E}_{\tau\sim \pi}[r(\tau)]$. 
Since $W_E\triangleq \underset{\pi\in\Pi}{\min}\ W_1(\pi,E)={\frac {1}{K}}\underset{\|r\|_{L}\leq K}{\sup}\ U_r(E)-U_r(\pi)$ for any $K>0$, let $\pi^*$ be the policy that achieves the minimality in $W_E$. Then for any ${r^+}\in R$, the term $L_{{r^+}}\cdot W_E-\delta \geq L_{{r^+}}\cdot \frac{1}{L_{{r^+}}} \underset{\|r\|_{L}\leq L_{{r^+}}}{\sup}\ U_r(E)-U_r(\pi)\geq U_{{r^+}}(E)-U_{{r^+}}(\pi)$.
Hence, for all ${r^+}\in R$, $U_{{r^+}}(E)-U_{{r^+}}(\pi) < \sup S_{{r^+}} - \inf S_{{r^+}}$, i.e., $U_{{r^+}}(\pi^*)\in S_{{r^+}}$.
Likewise, $L_{r^-}\cdot W_E-\delta  <  \underset{{{r^+}}\in R_{E,\delta}}{\min}\ \{\inf S_{{r^+}} - \sup F_{{r^+}}\}$ indicates that for all $r^-\in R$, $U_{r^-}(E)-U_{r^-}(\pi) < \underset{{{r^+}}\in R_{E,\delta}}{\min}\ \{\inf S_{{r^+}} - \sup F_{{r^+}}\}$.
Then, we have recovered the condition (2) in Theorem \ref{th:pagar_1_1}.
As a result, we deliver the same guarantees in Theorem \ref{th:pagar_1_1}.
\end{proof}

\noindent{\textbf{Corollary {\ref{th:pagar_1_4}}.} 
Assume that the condition (1) in Theorem \ref{th:pagar_1_1} holds for $R_{E,\delta}$.
If for any $r\in R_{E,\delta}$, $L_r\cdot W_E-\delta \leq  \underset{{{r^+}}\in R}{\min}\ \{\sup S_{{r^+}} - \inf S_{{r^+}}\}$, then the optimal protagonist policy $\pi_P=MinimaxRegret(R_{E,\delta})$ satisfies \li{not sure if there is a missing connective here}$\forall {r^+}\in R_{E,\delta}$, $U_{{r^+}}(\pi_P)\in S_{{r^+}}$.
 }
\begin{proof}
Again, we let $\pi^*$ be the policy that achieves the minimality in $W_E$.
Then, we have $L_r\cdot W_E-\delta \geq L_r\cdot \frac{1}{L_r} \underset{\|r\|_{L}\leq L_r}{\sup}\ U_r(E)-U_r(\pi^*)\geq U_r(E)-U_r(\pi^*)$ for any $r\in R$.
We have recovered the condition in Corollary \ref{th:pagar_1_0}.
The proof is complete.
\end{proof}

Recall that the IRL loss be in the form of $J_{IRL}(r):=U_r(E)-\underset{\pi\in\Pi}{\max}\ U_r(\pi)$. 
We assume that $\arg\underset{r\in R}{\min}\ J_{IRL}(r)$ can reach Nash Equilibrium with a reward function set $R_{E,\delta^*}$ and a policy set $\Pi_E$.

\noindent{\textbf{Proposition {\ref{th:pagar_1_2}.}}
$\Pi_E:=MinimiaxRegret(R_{E,\delta^*})$.
}
\begin{proof}
The reward function set $R_{E,\delta^*}$ and the policy set $\Pi_E$ achieving Nash Equilibrium for $\arg\underset{r\in R}{\min}\ J_{IRL}(r)$ indicates that for any $r\in R_{E,\delta^*}, \pi\in\Pi_E$, $\pi \in \arg\underset{\pi\in\Pi}{\max}\ U_r(\pi) - U_r(E)$. 
Then $\Pi_E$ will be the solution to $\arg\underset{\pi_P\in \Pi}{\max}\ \underset{r\in R_{E,\delta^*}}{\min}\ \left\{\underset{\pi_A\in \Pi}{\max}\ U_r(\pi_A) - U_r(E)\right\} - (U_r(\pi_P) - U_r(E))$ because the policies in $\Pi_E$ achieve zero regret. 
Then Lemma \ref{lm:app_a_6} states that $\Pi_E$ will also be the solution to $\arg\underset{\pi_P\in \Pi}{\max}\ \underset{r\in R_{E,\delta^*}}{\min}\ \left\{\underset{\pi_A\in \Pi}{\max}\ U_r(\pi_A)\right\} - U_r(\pi_P)$. We finish the proof.
\end{proof}

 \noindent{\textbf{Proposition \ref{th:pagar_1_6}.}
If $r^*$ aligns with the task and $\delta \geq \delta^* - (\sup S_{r^*} - \inf S_{r^*})$, the optimal protagonist policy $\pi_P=MinimiaxRegret(R_{E,\delta})$ can succeed in the task.
\begin{proof}
    If $\pi_{r^*}\in MinimiaxRegret(R_{E,\delta})$, then $\pi_{r^*}$ can succeed in the task by definition. 
    Now assume that $\pi_P \neq \pi_{r^*}$. 
    Since $J_{IRL}$ achieves Nash Equilibrium at $r^*$ and $\pi_{r^*}$, for any other reward function $r$ we have $\underset{\pi\in\Pi}{\max}\ U_{r}(\pi)-U_{r}(\pi_{r^*})\leq \delta^* - (U_r(E) - \underset{\pi\in\Pi}{\max}\ U_{r}(\pi)) \leq \delta^* - \delta$.
    We also have $\underset{r'\in R_{E,\delta}}{\max}\ Regret(r', \pi_P)\leq \underset{r'\in R_{E,\delta}}{\max}\ Regret(r', \pi_{r^*})\leq \delta^*-\delta$. 
    Furthermore, $Regre(r^*, \pi_P)\leq \underset{r'\in R_{E,\delta}}{\max}\ Regret(r', \pi_P)$.
    Hence, $Regre(r^*, \pi_P)\leq \delta-\delta^*\leq \sup S_{r^+} - \inf S_{r^+}$.
    In other words, $U_{r^*}(\pi_P)\in S_{r^*}$, indicating $\pi_P$ can succeed in the task.
    The proof is complete.
\end{proof}

 \subsection{Stationary Solutions}\label{subsec:app_a_6}
In this section, we show that $MinimaxRegret$ is convex for $\pi_P$.

\begin{proposition} 
$\underset{r\in R}{\max}\ Regret(\pi_P, r)$ is convex in $\pi_P$. 
\end{proposition}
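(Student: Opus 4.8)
The plan is to use the structural fact that, for each fixed reward function $r$, the map $\pi_P \mapsto Regret(\pi_P,r)$ is affine once policies are represented appropriately, and then to invoke the elementary fact that a pointwise supremum of convex functions is convex. Concretely, I would first fix $r\in R$ and write $Regret(\pi_P,r) = \left(\max_{\pi_A\in\Pi} U_r(\pi_A)\right) - U_r(\pi_P)$. The first summand does not depend on $\pi_P$; call it the constant $c_r$. Hence it suffices to show that $\pi_P \mapsto -U_r(\pi_P)$ is convex for each $r$, and then take the supremum over $r\in R$.

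The step that needs care is making precise the sense in which we form convex combinations of policies, because $U_r(\pi)=\mathbb{E}_{\tau\sim\pi}[\sum_t \gamma^t r(s^{(t)},a^{(t)})]$ is \emph{not} affine in the conditional distributions $\pi(\cdot\,|\,s)$ (the state-visitation frequencies depend nonlinearly on $\pi$). I would therefore identify each policy $\pi$ with its discounted state–action occupancy measure $\rho_\pi(s,a)=\sum_{t\ge 0}\gamma^t\,\Pr(s^{(t)}=s,\,a^{(t)}=a\mid\pi)$, using the standard bijection between policies and occupancy measures together with the fact that the set of feasible occupancy measures is a convex polytope (it is cut out by the linear Bellman-flow constraints). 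In this representation $U_r(\pi)=\sum_{s,a}\rho_\pi(s,a)\,r(s,a)$ is a linear functional of $\rho_\pi$, so $\pi_P\mapsto -U_r(\pi_P)$ and hence $\pi_P\mapsto Regret(\pi_P,r)=c_r - U_r(\pi_P)$ is affine — in particular convex — over the policy space so represented. Equivalently, one can phrase the whole argument in terms of ``mixture policies'' that randomize once at $t=0$ over which of two base policies to execute; such mixtures realize exactly the convex combinations of occupancy measures.

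Finally, $\underset{r\in R}{\max}\ Regret(\pi_P,r)$ is the pointwise supremum over $r\in R$ of the family $\{\,\pi_P\mapsto Regret(\pi_P,r)\,\}_{r\in R}$ of convex (indeed affine) functions, and a supremum of convex functions is convex, since its epigraph is the intersection of the convex epigraphs of the individual functions. This gives the claim. (If the supremum is attained this is also a genuine maximum of convex functions; attainment is not needed for convexity.)

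I expect the main obstacle to be essentially definitional rather than computational: one must state clearly that convexity is meant with respect to the occupancy-measure (equivalently, mixture-policy) geometry on $\Pi$, because the statement is false under the naive pointwise mixing of conditional action distributions. Once that convention is fixed, the affineness of $U_r$ in $\rho_{\pi_P}$ and the sup-of-convex principle make the argument immediate, so little else remains beyond recalling the policy/occupancy-measure correspondence already implicit in the GAIL-style development of Section~\ref{sec:prelim}.
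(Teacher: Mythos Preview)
Your proposal is correct and follows essentially the same approach as the paper: the paper verifies the convexity inequality directly by fixing $\pi_{P,1},\pi_{P,2}$, setting $\pi_{P,3}=\alpha\pi_{P,1}+(1-\alpha)\pi_{P,2}$, and using that the optimal $(r_3,\pi_{A,3})$ for $\pi_{P,3}$ is suboptimal for $\pi_{P,1},\pi_{P,2}$ together with $\alpha\,U_{r_3}(\pi_{P,1})+(1-\alpha)\,U_{r_3}(\pi_{P,2})=U_{r_3}(\pi_{P,3})$, which is exactly your ``affine per $r$, then pointwise supremum'' argument unpacked. Your discussion of the occupancy-measure/mixture-policy convention is a useful clarification that the paper leaves implicit when it writes $\pi_{P,3}=\alpha\pi_{P,1}+(1-\alpha)\pi_{P,2}$ and invokes linearity of $U_r$.
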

\begin{proof}
For any $\alpha\in[0, 1]$ and $\pi_{P,1}, \pi_{P,2}$, there exists a $\pi_{P,3}=\alpha \pi_{P,1} + (1-\alpha) \pi_{P,2}$. 
Let $r_1, \pi_{A,1}$ and $r_2, \pi_{A,2}$ be the optimal reward and antagonist policy for $\pi_{P,1}$ and $\pi_{P,2}$
Then $\alpha \cdot (\underset{r\in R}{\max}\ \underset{\pi_A\in\Pi}{\max}\ U_r(\pi_A)- U_r(\pi_{P,1}))+(1-\alpha)\cdot(\underset{r\in R}{\max}\ \underset{\pi_A\in\Pi}{\max}\ U_r(\pi_A)- U_r(\pi_{P,2}))=\alpha (U_{r_1}(\pi_{A,1}) - U_{r_1}(\pi_{P,1})) + (1 - \alpha)(U_{r_2}(\pi_{A,2}) - U_{r_2}(\pi_{P,2}))\geq \alpha (U_{r_3}(\pi_{A,3}) - U_{r_3}(\pi_{P,1})) + (1 -\alpha)(U_{r_2}(\pi_{A,3}) - U_{r_3}(\pi_{P,2}))=U_{r_3}(\pi_{A,3}) - U_{r_3}(\pi_{P,3})$. 
Therefore, $\underset{r\in R}{\max}\ \underset{\pi_A\in\Pi}{\max}\ U_r(\pi_A)- U_r(\pi_P)$ is convex in $\pi_P$.
\end{proof}

 \subsection{Details of Example 1}\label{subsec:app_a_7}
We reiterate the content of the example and Figure \ref{fig:exp_fig0} here for the reader's convenience.

\begin{example}\label{exp_1}
Figure \ref{fig:app_a_1} shows an illustrative example of how PAGAR-based IL mitigates reward misalignment in IRL-based IL.
The task requires that \textit{a policy must visit $s_2$ and $s_6$ with no less than $0.5$ probability within $5$ steps}, i.e. $Prob(s_2|\pi)\geq 0.5 \wedge Prob(s_6|\pi)\geq 0.5$ where $Prob(s|\pi)$ is the probability of $\pi$ generating a trajectory that contains $s$ within the first $5$ steps.
We derive that a successful policy must choose $a_2$ at $s_0$ with a probability within $[\frac{1}{2},\frac{125}{188}]$ as follows.

The trajectories that reach $s_6$ after choosing $a_2$ at $s_0$ include: $(s_0,a_2,s_2,s_6), (s_0,a_2, s_2, s_2, s_6), (s_0, a_2, s_2,s_2,s_2, s_6)$.
The total probability equals $Prob(s_6|\pi; s_0,a_2)=\frac{1}{5} + \frac{1}{5}^2 + \frac{1}{5}^3=\frac{31}{125}$. 
Then the total probability of reaching $s_6$ equals
$Prob(s_6|\pi)=(1-\pi(a_2|s_0)) + \frac{31}{125}\cdot \pi(a_2|s_0)$.
For $Prob(s_6|\pi)$ to be no less than $0.5$, $\pi(a_2|s_0)$ must be no greater than $\frac{125}{188}$.

The reward function hypothesis space is $\{r_\omega|r_\omega(s,a)=\omega \cdot r_1(s,a)+(1-\omega)\cdot r_2(s,a)\}$ where $\omega\in[0, 1]$ is a parameter, $r_1, r_2$ are two features.
Specifically, $r_1(s,a)$ equals $1$ if $s=s_2$ and equals $0$ otherwise,  and $r_2(s,a)$ equals $1$ if $s=s_6$ and equals $0$ otherwise.
Given the demonstrations and the MDP, the maximum negative MaxEnt IRL loss $\delta^*\approx 2.8$ corresponds to the optimal parameter 
 $\omega^*=1$.
 This is computed based on Eq.6 in \cite{maxentirl}. 
The discount factor is $\gamma=0.99$.
When computing the normalization term $Z$ in Eq.4 of \cite{maxentirl}, we only consider the trajectories within $5$ steps. 
In Figure \ref{fig:app_a_1}, we further show how the MaxEnt IRL loss changes with $\omega$. 
 The optimal policy under $r_{\omega^*}$ chooses $a_2$ at $s_0$ with probability $1$, thus failing to accomplish the task. 
The optimal protagonist policy $\pi_P=MinimaxRegret(R_{E,\delta})$ also chooses $a_2$ at $s_0$ with probability $1$ when $\delta$ is close to its maximum $\delta^*$.
However, $\pi_P(a_2|s_2)$ decreases as $\delta$ decreases.
It turns out that for any $\delta < 1.2$ the optimal protagonist policy can succeed in the task.
\end{example}
\begin{figure}[tph!]
\hfill
\includegraphics[width=0.24\linewidth]{figures/fig1.png}
\hfill
\includegraphics[width=0.37\linewidth]{figures/fig3.png}
\hfill
\includegraphics[width=0.37\linewidth]{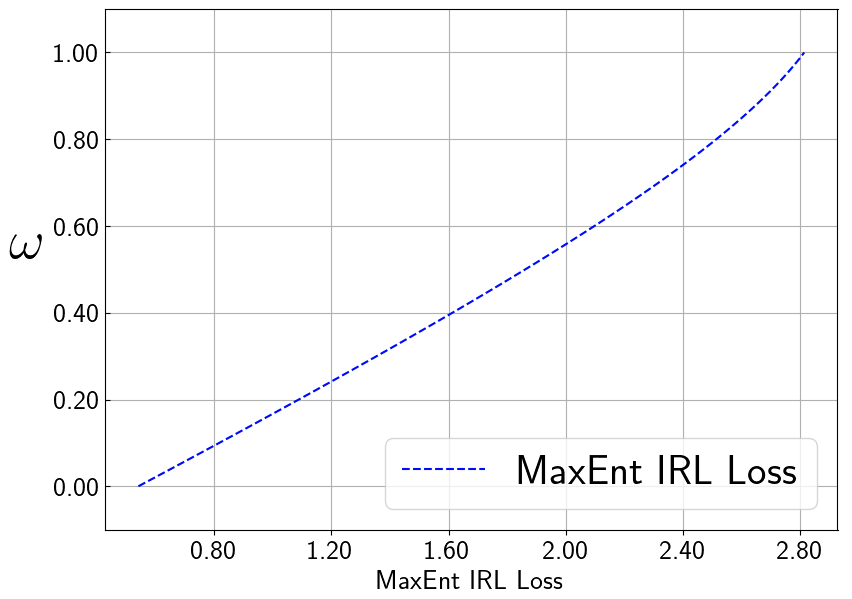}
\hfill
\caption{\textbf{Left:} Consider an MDP where there are two available actions $a_1,a_2$ at initial state $s_0$. 
In other states, actions make no difference: the transition probabilities are either annotated at the transition edges or equal $1$ by default. 
States $s_3$ and $s_6$ are terminal states. 
Expert demonstrations are in $E$.
\textbf{Middle}: x-axis indicates the MaxEnt IRL loss bound $\delta$ for $R_{E,\delta}$ as defined in Section \ref{subsec:app_a_1}. 
The y-axis indicates the probability of the protagonist policy learned via $MinimaxRegret(R_{E,\delta})$ choosing $a_2$ at $s_0$.
The red curve shows how different $\delta$'s lead to different protagonist policies.
The blue dashed curve is for reference, showing the optimal policy under the optimal reward learned via MaxEnt IRL.
\textbf{Right}: The curve shows how the MaxEnt IRL Loss changes with $\omega$.
} \label{fig:app_a_1}
\end{figure}
\section{Approach to Solving MinimaxRegret}\label{sec:app_b}
In this section, we develop a series of theories that lead to two bounds of the Protagonist Antagonist Induced Regret.
By using those bounds, we formulate objective functions for solving Imitation Learning problems with PAGAR.

\subsection{Protagonist Antagonist Induced Regret Bounds}\label{subsec:app_b_1}

Our theories are inspired by the on-policy policy improvement methods in \cite{trpo}. 
The theories in \cite{trpo} are under the setting where entropy regularizer is not considered.
In our implementation, we always consider entropy regularized RL of which the objective is to learn a policy that maximizes $J_{RL}(\pi;r)=U_r(\pi) + \mathcal{H}(\pi)$.   
Also, since we use GAN-based IRL algorithms, the learned reward function $r$ as proved by \cite{airl} is a distribution.
Moreover, it is also proved in \cite{airl} that a policy $\pi$ being optimal under $r$ indicates that $\log \pi\equiv r\equiv  \mathcal{A}_\pi$. 
We omit the proof and let the reader refer to \cite{airl} for details.
Although all our theories are about the relationship between the Protagonist Antagonist Induced Regret and the soft advantage function $\mathcal{A}_\pi$, the equivalence between $\mathcal{A}_\pi$ and $r$ allows us to use the theories to formulate our reward optimization objective functions. 
To start off, we denote the reward function to be optimized as $r$.
Given the intermediate learned reward function $r$, we study the Protagonist Antagonist Induced Regret between two policies $\pi_1$ and $\pi_2$.

\begin{lemma}\label{lm:app_b_1}
Given a reward function $r$ and a pair of policies $\pi_1$  and $\pi_2$, 
 \begin{eqnarray}
 U_r(\pi_1) - U_r(\pi_2)=\underset{\tau\sim\pi_1}{\mathbb{E}}\left[\sum^\infty_{t=0}\gamma^t \mathcal{A}_{\pi_2}(s^{(t)}, a^{(t)})
 \right] + \underset{\tau\sim \pi}{\mathbb{E}}\left[\sum^\infty_{t=0} \gamma^t \mathcal{H}\left(\pi_2(\cdot|s^{(t)})\right)\right]
 \end{eqnarray}
\end{lemma}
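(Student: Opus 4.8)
This is the entropy-regularized (``soft'') version of the classical performance difference lemma, and I would establish it by the standard telescoping argument, keeping careful track of the entropy terms. (I read the second expectation on the right-hand side as being taken over $\tau\sim\pi_2$, so that it coincides with $\mathcal{H}(\pi_2)$ in the notation of Section~\ref{sec:prelim}; one checks, e.g.\ by setting $\pi_1=\pi_2$ and using $\underset{a\sim\pi_2}{\mathbb{E}}[\mathcal{A}_{\pi_2}(s,a)]=-\mathcal{H}(\pi_2(\cdot|s))$, that this is the intended reading, since both sides then vanish.)

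First I would expand the soft advantage via its definition together with the soft Bellman equation for $\mathcal{Q}_{\pi_2}$, namely $\mathcal{A}_{\pi_2}(s,a)=r(s,a)+\gamma\,\underset{s'\sim\mathcal{P}(\cdot|s,a)}{\mathbb{E}}[\mathcal{V}_{\pi_2}(s')]-\mathcal{V}_{\pi_2}(s)$. Plugging this into $\underset{\tau\sim\pi_1}{\mathbb{E}}\big[\sum^\infty_{t=0}\gamma^t\mathcal{A}_{\pi_2}(s^{(t)},a^{(t)})\big]$ and separating the sum, the reward contributions collect into $U_r(\pi_1)$, while the value-function contributions telescope: by the tower property the term $\gamma^{t+1}\,\underset{s^{(t+1)}}{\mathbb{E}}[\mathcal{V}_{\pi_2}(s^{(t+1)})]$ appearing at index $t$ cancels the $-\gamma^{t+1}\mathcal{V}_{\pi_2}(s^{(t+1)})$ contribution appearing at index $t+1$. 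Since $\gamma<1$ and $\mathcal{V}_{\pi_2}$ is bounded (finite action space / finite effective horizon), the tail $\gamma^N\mathcal{V}_{\pi_2}(s^{(N)})$ vanishes, leaving $\underset{\tau\sim\pi_1}{\mathbb{E}}\big[\sum^\infty_{t=0}\gamma^t\mathcal{A}_{\pi_2}(s^{(t)},a^{(t)})\big]=U_r(\pi_1)-\underset{s^{(0)}\sim d_0}{\mathbb{E}}[\mathcal{V}_{\pi_2}(s^{(0)})]$.

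It then remains to identify $\underset{s^{(0)}\sim d_0}{\mathbb{E}}[\mathcal{V}_{\pi_2}(s^{(0)})]$ with $U_r(\pi_2)+\mathcal{H}(\pi_2)$. For this I would unroll the recursion $\mathcal{V}_{\pi_2}(s)=\underset{a\sim\pi_2(\cdot|s)}{\mathbb{E}}\big[r(s,a)+\gamma\,\underset{s'\sim\mathcal{P}(\cdot|s,a)}{\mathbb{E}}[\mathcal{V}_{\pi_2}(s')]\big]+\mathcal{H}(\pi_2(\cdot|s))$, which expresses $\mathcal{V}_{\pi_2}(s^{(0)})$ as the $\pi_2$-expected discounted sum of $r(s^{(t)},a^{(t)})+\mathcal{H}(\pi_2(\cdot|s^{(t)}))$ along trajectories started at $s^{(0)}$; averaging over $s^{(0)}\sim d_0$ and invoking the definitions of $U_r(\cdot)$ and $\mathcal{H}(\cdot)$ from Section~\ref{sec:prelim} gives the claim. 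Substituting this back and rearranging yields $U_r(\pi_1)-U_r(\pi_2)=\underset{\tau\sim\pi_1}{\mathbb{E}}\big[\sum^\infty_{t=0}\gamma^t\mathcal{A}_{\pi_2}(s^{(t)},a^{(t)})\big]+\mathcal{H}(\pi_2)$, the asserted identity. The only point requiring any care is justifying the interchange of $\mathbb{E}$ with the infinite sum and the vanishing of the telescoping tail, and both are immediate under $\gamma<1$ with bounded per-step quantities (trivially so in the finite-horizon setting the paper works in); I do not foresee a genuine obstacle.
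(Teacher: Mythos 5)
Your proposal is correct and follows essentially the same route as the paper's proof: expand $\mathcal{A}_{\pi_2}$ via the soft Bellman equation, telescope the value terms along $\tau\sim\pi_1$ to obtain $U_r(\pi_1)-\underset{s^{(0)}\sim d_0}{\mathbb{E}}[\mathcal{V}_{\pi_2}(s^{(0)})]$, and then unroll $\mathcal{V}_{\pi_2}$ to identify that expectation with $U_r(\pi_2)+\mathcal{H}(\pi_2)$. Your reading of the second expectation as being over $\tau\sim\pi_2$ (so that it equals $\mathcal{H}(\pi_2)$) is indeed the intended one, as the paper's own derivation confirms.
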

\begin{proof}
This proof follows the proof of Lemma 1 in \cite{trpo} where RL is not entropy-regularized. For entropy-regularized RL, since $\mathcal{A}_\pi(s,a^{(t)})=\underset{s'\sim \mathcal{P}(\cdot|s,a^{(t)})}{\mathbb{E}}\left[r(s, a^{(t)}) + \gamma \mathcal{V}_\pi(s') - \mathcal{V}_\pi(s)\right]$, 
\begin{eqnarray}
    &&\underset{\tau \sim \pi_1}{\mathbb{E}}\left[\sum^\infty_{t=0} \gamma^t \mathcal{A}_{\pi_2}(s^{(t)}, a^{(t)})\right]\nonumber\\
    &=&\underset{\tau \sim \pi_1}{\mathbb{E}}\left[\sum^\infty_{t=0} \gamma^t\left(r(s^{(t+1)}, a^{(t+1)}) + \gamma \mathcal{V}_{\pi_2}(s^{(t+1)}) - \mathcal{V}_{\pi_2}(s^{(t)})\right)\right]\nonumber\\
    &=&\underset{\tau \sim \pi_1}{\mathbb{E}}\left[\sum^\infty_{t=0} \gamma^t r(s^{(t)}, a^{(t)}) -\mathcal{V}_{\pi_2}(s^{(0)})\right]\nonumber\\
    &=&\underset{\tau \sim \pi_1}{\mathbb{E}}\left[\sum^\infty_{t=0} \gamma^t r(s^{(t)}, a^{(t)})\right] -\underset{s^{(0)}\sim d_0}{\mathbb{E}}\left[\mathcal{V}_{\pi_2}(s^{(0)})\right]\nonumber\\
    &=&\underset{\tau \sim \pi_1}{\mathbb{E}}\left[\sum^\infty_{t=0} \gamma^t r(s^{(t)}, a^{(t)})\right] -\underset{\tau \sim \pi_2}{\mathbb{E}}\left[\sum^\infty_{t=0} \gamma^t r(s^{(t)}, a^{(t)}) + \mathcal{H}\left(\pi_2(\cdot|s^{(t)})\right)\right]\nonumber\\
    &=& U_r (\pi_1) - U_r(\pi_2) - \underset{\tau\sim \pi_2}{\mathbb{E}}\left[\sum^\infty_{t=0} \gamma^t \mathcal{H}\left(\pi_2(\cdot|s^{(t)})\right)\right]\nonumber\\
    &=& U_r (\pi_1) - U_r(\pi_2) - \mathcal{H}(\pi_2)\nonumber
\end{eqnarray}
\end{proof}

\begin{remark}\label{rm:app_b_rm1}
Lemma \ref{lm:app_b_1} confirms that $\underset{\tau \sim \pi}{\mathbb{E}}\left[\sum^\infty_{t=0} \gamma^t \mathcal{A}_{\pi}(s^{(t)}, a^{(t)})\right] = U_r(\pi) - U_r(\pi) + \mathcal{H}(\pi)=\mathcal{H}(\pi)$.
\end{remark}

We follow \cite{trpo} and denote $\Delta \mathcal{A}(s)=
\underset{a\sim\pi_1(\cdot|s)}{\mathbb{E}}\left[\mathcal{A}_{\pi_2}(s,a)\right] - \underset{a\sim\pi_2(\cdot|s)}{\mathbb{E}}\left[\mathcal{A}_{\pi_2}(s,a)\right] $ as the difference between the expected advantages of following $\pi_2$  after choosing an action respectively by following policy $\pi_1$ and $\pi_2$ at any state $s$. 
 Although the setting of \cite{trpo} differs from ours by having the expected advantage $\underset{a\sim\pi_2(\cdot|s)}{\mathbb{E}}\left[\mathcal{A}_{\pi_2}(s,a)\right]$ equal to $0$ due to the absence of entropy regularization, the following definition and lemmas from \cite{trpo} remain valid in our setting.

\begin{definition}~\cite{trpo}, the protagonist policy $\pi_1$ and the antagonist policy $\pi_2)$ are $\alpha$-coupled if they defines a joint distribution over $(a, \tilde{a})\in \mathbb{A}\times \mathbb{A}$, such that $Prob(a \neq \tilde{a}|s)\leq \alpha$ for all $s$.
\end{definition}

\begin{lemma}~\cite{trpo}
Given that the protagonist policy $\pi_1$ and the antagonist policy $\pi_2$ are $\alpha$-coupled, then for all state $s$,
\begin{eqnarray}
|\Delta \mathcal{A}(s)|\leq  2\alpha \underset{a}{\max}|\mathcal{A}_{\pi_2}(s,a)|
\end{eqnarray}
\end{lemma}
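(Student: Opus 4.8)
The plan is to use the $\alpha$-coupling directly. By definition of $\alpha$-coupling there is a joint distribution over pairs $(a,\tilde a)\in\mathbb{A}\times\mathbb{A}$ whose marginals are $\pi_1(\cdot|s)$ and $\pi_2(\cdot|s)$ respectively and for which $Prob(a\neq\tilde a\mid s)\le\alpha$. The first step is to rewrite $\Delta\mathcal{A}(s)$ as a single expectation under this joint distribution,
\[
\Delta\mathcal{A}(s)=\underset{(a,\tilde a)}{\mathbb{E}}\big[\mathcal{A}_{\pi_2}(s,a)-\mathcal{A}_{\pi_2}(s,\tilde a)\big],
\]
which is legitimate precisely because the $a$-marginal is $\pi_1(\cdot|s)$ and the $\tilde a$-marginal is $\pi_2(\cdot|s)$, so each of the two expectations in the definition of $\Delta\mathcal{A}(s)$ is recovered term by term.

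Next I would split the expectation over the events $\{a=\tilde a\}$ and $\{a\neq\tilde a\}$. On the former the integrand is identically $0$, so only the latter contributes; on the latter the triangle inequality gives $|\mathcal{A}_{\pi_2}(s,a)-\mathcal{A}_{\pi_2}(s,\tilde a)|\le 2\,\underset{a'}{\max}\,|\mathcal{A}_{\pi_2}(s,a')|$. Passing absolute values through the expectation and then bounding the probability of $\{a\neq\tilde a\}$ by $\alpha$ yields
\[
|\Delta\mathcal{A}(s)|\le 2\,\underset{a'}{\max}\,|\mathcal{A}_{\pi_2}(s,a')|\cdot Prob(a\neq\tilde a\mid s)\le 2\alpha\,\underset{a'}{\max}\,|\mathcal{A}_{\pi_2}(s,a')|,
\]
which is the claim.

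There is essentially no obstacle here: this is the verbatim argument of the corresponding lemma in \cite{trpo}. The only points to be careful about are that the coupling's marginals are exactly $\pi_1(\cdot|s)$ and $\pi_2(\cdot|s)$ (so the rewriting in the first display is valid) and that $\underset{a'}{\max}\,|\mathcal{A}_{\pi_2}(s,a')|$ is attained, which holds in the paper's discrete-action setting. Note in particular that, unlike in \cite{trpo}, here $\underset{a\sim\pi_2(\cdot|s)}{\mathbb{E}}[\mathcal{A}_{\pi_2}(s,a)]$ need not vanish because of the entropy regularizer, but this quantity plays no role in the bound above.
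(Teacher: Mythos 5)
Your proof is correct and is exactly the coupling argument from \cite{trpo} that the paper invokes by citation without reproducing: rewrite $\Delta\mathcal{A}(s)$ as an expectation of $\mathcal{A}_{\pi_2}(s,a)-\mathcal{A}_{\pi_2}(s,\tilde a)$ under the coupled joint distribution, note the integrand vanishes on $\{a=\tilde a\}$, and bound the remaining event's probability by $\alpha$. Your remark that $\underset{a\sim\pi_2(\cdot|s)}{\mathbb{E}}[\mathcal{A}_{\pi_2}(s,a)]$ need not vanish under entropy regularization, yet drops out of the bound because the coupling treats both terms symmetrically, is the one adaptation needed here and you handle it correctly.
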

 
\begin{lemma}\label{lm:app_b_2}~\cite{trpo}
Given that the protagonist policy $\pi_1$ and the antagonist policy $\pi_2$ are $\alpha$-coupled, then
\begin{eqnarray}
\left|\underset{s^{(t)}\sim \pi_1}{\mathbb{E}}\left[\Delta \mathcal{A}(s^{(t)})\right] - \underset{s^{(t)}\sim \pi_2}{\mathbb{E}}\left[\Delta \mathcal{A}(s^{(t)})\right]\right|\leq  4\alpha(1- (1-\alpha)^t) \underset{s,a}{\max}|\mathcal{A}_{\pi_2}(s,a)|
\end{eqnarray}
\end{lemma}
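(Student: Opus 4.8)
The plan is to reproduce the coupling argument of Schulman et al.\ (the original TRPO proof of this statement), checking that the entropy regularization present in our setting does not interfere. Since $\pi_1$ and $\pi_2$ are $\alpha$-coupled, I would first lift the per-state coupling to a coupling of entire trajectories: generate a pair $(\tau_1,\tau_2)$ jointly by, at each step $i$, drawing $(a^{(i)},\tilde a^{(i)})$ from the joint distribution over $\mathbb{A}\times\mathbb{A}$ guaranteed by $\alpha$-coupling (so $Prob(a^{(i)}\neq\tilde a^{(i)}\mid s)\leq\alpha$), having $\pi_1$ execute $a^{(i)}$ and $\pi_2$ execute $\tilde a^{(i)}$, and coupling the transition noise so that identical action prefixes produce identical state prefixes. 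Under this joint law the time-$t$ marginal of $s_1^{(t)}$ is the on-policy distribution $s^{(t)}\sim\pi_1$, and likewise for $\pi_2$. Let $n_t$ count the steps $i<t$ at which $a^{(i)}\neq\tilde a^{(i)}$, so that $\{n_t=0\}$ is exactly the event that the two runs coincide through time $t$, and on this event $s_1^{(t)}=s_2^{(t)}$ almost surely.

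The key probabilistic fact I would record is $Prob(n_t=0)\geq(1-\alpha)^t$, hence $Prob(n_t\geq1)\leq 1-(1-\alpha)^t$. I would then split the difference of expectations along this event: writing both expectations with respect to the coupled measure,
\[
\underset{s^{(t)}\sim\pi_1}{\mathbb{E}}[\Delta\mathcal{A}(s^{(t)})]-\underset{s^{(t)}\sim\pi_2}{\mathbb{E}}[\Delta\mathcal{A}(s^{(t)})]
=\mathbb{E}\big[\big(\Delta\mathcal{A}(s_1^{(t)})-\Delta\mathcal{A}(s_2^{(t)})\big)\mathbf{1}_{\{n_t\geq1\}}\big],
\]
since on $\{n_t=0\}$ the two terms cancel exactly. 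Bounding each of $\Delta\mathcal{A}(s_1^{(t)})$ and $\Delta\mathcal{A}(s_2^{(t)})$ crudely by $\max_s|\Delta\mathcal{A}(s)|$ gives an overall bound of $2\,Prob(n_t\geq1)\,\max_s|\Delta\mathcal{A}(s)|\leq 2\,(1-(1-\alpha)^t)\,\max_s|\Delta\mathcal{A}(s)|$.

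Finally I would invoke the preceding lemma, which states $|\Delta\mathcal{A}(s)|\leq 2\alpha\max_a|\mathcal{A}_{\pi_2}(s,a)|\leq 2\alpha\max_{s,a}|\mathcal{A}_{\pi_2}(s,a)|$, to conclude that the quantity is at most $2\cdot(1-(1-\alpha)^t)\cdot 2\alpha\max_{s,a}|\mathcal{A}_{\pi_2}(s,a)| = 4\alpha(1-(1-\alpha)^t)\max_{s,a}|\mathcal{A}_{\pi_2}(s,a)|$, which is the claimed inequality. I would remark that entropy regularization is harmless here: $\Delta\mathcal{A}$ is defined in terms of the \emph{soft} advantage $\mathcal{A}_{\pi_2}$, and the coupling argument never uses the fact that $\mathbb{E}_{a\sim\pi_2}[\mathcal{A}_{\pi_2}(s,a)]$ vanishes — which is exactly why the paper redefines $\Delta\mathcal{A}$ to subtract that (no longer zero) term.

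The main obstacle I anticipate is the bookkeeping of the trajectory coupling: one must verify that the coupled measure has the correct time-$t$ marginals (on-policy state distributions for $\pi_1$ and $\pi_2$ respectively) and that "agreement through time $t$" genuinely forces $s_1^{(t)}=s_2^{(t)}$, which requires coupling the MDP transition randomness as well as the action choices. Once that is set up carefully, the remaining estimate is a one-line chain of inequalities, and the bound $Prob(n_t\geq1)\leq 1-(1-\alpha)^t$ follows from a union-type bound over the $t$ steps.
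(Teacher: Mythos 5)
Your proof is correct and is essentially the argument the paper relies on: the paper does not reprove this lemma but cites it directly from \cite{trpo}, and your coupling-plus-$n_t$ decomposition (condition on the event that the coupled action sequences agree through step $t$, cancel the $n_t=0$ contribution, bound the $n_t\geq 1$ contribution by $2\,\mathrm{Prob}(n_t\geq 1)\max_s|\Delta\mathcal{A}(s)|$, then apply the preceding lemma $|\Delta\mathcal{A}(s)|\leq 2\alpha\max_a|\mathcal{A}_{\pi_2}(s,a)|$) is exactly the original TRPO proof and the same technique the paper uses for its neighboring Lemma~\ref{lm:app_b_3}. One cosmetic quibble: the estimate $\mathrm{Prob}(n_t=0)\geq(1-\alpha)^t$ comes from chaining conditional agreement probabilities, not from a union bound, but you state the correct inequality so nothing is affected.
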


\begin{lemma}\label{lm:app_b_3}
Given that the protagonist policy $\pi_1$ and the antagonist policy $\pi_2$ are $\alpha$-coupled, then
\begin{eqnarray}
\underset{\substack{s^{(t)}\sim \pi_1\\a^{(t)}\sim \pi_2}}{\mathbb{E}}\left[\mathcal{A}_{\pi_2}(s^{(t)},a^{(t)})\right] - \underset{\substack{s^{(t)}\sim \pi_2\\a^{(t)}\sim \pi_2}}{\mathbb{E}}\left[ \mathcal{A}_{\pi_2}(s^{(t)},a^{(t)})\right] \leq 2(1 - (1-\alpha)^t)\underset{(s,a)}{\max}|\mathcal{A}_{\pi_2}(s,a)|
\end{eqnarray}
\end{lemma}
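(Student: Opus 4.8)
The plan is to reproduce the trajectory-coupling argument of \cite{trpo} — the same device that underlies Lemma \ref{lm:app_b_2} — specialized to the present situation where in \emph{both} expectations the action $a^{(t)}$ is sampled from $\pi_2$; this is precisely what replaces the factor $4\alpha$ of Lemma \ref{lm:app_b_2} by the constant $2$. First I would abbreviate $\bar{\mathcal{A}}(s):=\mathbb{E}_{a\sim\pi_2(\cdot|s)}[\mathcal{A}_{\pi_2}(s,a)]$ and $\epsilon:=\max_{(s,a)}|\mathcal{A}_{\pi_2}(s,a)|$. By the tower property the left-hand side equals $\mathbb{E}_{s^{(t)}\sim\pi_1}[\bar{\mathcal{A}}(s^{(t)})]-\mathbb{E}_{s^{(t)}\sim\pi_2}[\bar{\mathcal{A}}(s^{(t)})]$, so it suffices to control a difference of expectations of the single function $\bar{\mathcal{A}}$ under the time-$t$ state marginals of $\pi_1$ and $\pi_2$; moreover $|\bar{\mathcal{A}}(s)|\le\max_a|\mathcal{A}_{\pi_2}(s,a)|\le\epsilon$ for every $s$, being an average of values bounded by $\epsilon$.

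Next I would build the standard coupling of the two trajectory distributions. Using the $\alpha$-coupling of $\pi_1$ and $\pi_2$, generate a pair of trajectories $(s^{(0)}_1,a^{(0)}_1,s^{(1)}_1,\dots)$ and $(s^{(0)}_2,a^{(0)}_2,s^{(1)}_2,\dots)$ from the common $d_0$ and transition kernel $\mathcal{P}$, drawing at every step in which the two state-processes still coincide the pair $(a^{(t')}_1,a^{(t')}_2)$ from the $\alpha$-coupled joint law, so that $\Pr[a^{(t')}_1\ne a^{(t')}_2\mid s^{(t')}_1=s^{(t')}_2]\le\alpha$, and coupling them arbitrarily once the states have diverged. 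The marginal of $s^{(t)}_1$ is then exactly the $\pi_1$-induced state distribution at step $t$, and likewise for $s^{(t)}_2$ and $\pi_2$. Letting $n_t$ be the number of steps $t'<t$ at which $a^{(t')}_1\ne a^{(t')}_2$, one has $\Pr[n_t=0]\ge(1-\alpha)^t$, hence $\Pr[s^{(t)}_1\ne s^{(t)}_2]\le 1-(1-\alpha)^t$.

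Finally I would split the expectation on the event that the coupled states agree: under the coupling $\mathbb{E}_{s^{(t)}\sim\pi_1}[\bar{\mathcal{A}}(s^{(t)})]-\mathbb{E}_{s^{(t)}\sim\pi_2}[\bar{\mathcal{A}}(s^{(t)})]=\mathbb{E}\big[\bar{\mathcal{A}}(s^{(t)}_1)-\bar{\mathcal{A}}(s^{(t)}_2)\big]$, the integrand vanishes on $\{s^{(t)}_1=s^{(t)}_2\}$ and is at most $|\bar{\mathcal{A}}(s^{(t)}_1)|+|\bar{\mathcal{A}}(s^{(t)}_2)|\le 2\epsilon$ in absolute value on its complement, so the difference is at most $\Pr[s^{(t)}_1\ne s^{(t)}_2]\cdot 2\epsilon\le 2\big(1-(1-\alpha)^t\big)\epsilon$; this is actually a two-sided bound and a fortiori gives the stated inequality. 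There is no genuine obstacle: the only points requiring care are (i) arranging the coupling so that the time-$t$ state marginals really are those induced by $\pi_1$ and $\pi_2$, and (ii) the uniform bound $|\bar{\mathcal{A}}|\le\epsilon$ — both immediate — with everything else being the bookkeeping already done in \cite{trpo} and reused in Lemma \ref{lm:app_b_2}.
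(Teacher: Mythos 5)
Your proposal is correct and follows essentially the same route as the paper's proof: both use the TRPO-style coupling, condition on whether $\pi_1$ and $\pi_2$ have disagreed before time $t$ (so the agreement terms cancel), and bound the remainder by $2\epsilon\cdot P(\text{disagreement})\le 2(1-(1-\alpha)^t)\epsilon$. Your write-up is if anything slightly cleaner, since reducing the left-hand side via the tower property to a difference of expectations of the single bounded function $\bar{\mathcal{A}}$ makes the cancellation and the two-sided nature of the bound explicit.
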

 \begin{proof}
    The proof is similar to that of Lemma \ref{lm:app_b_2} in \cite{trpo}. Let $n_t$ be the number of times that $a^{(t')}\sim \pi_1$ does not equal $a^{(t')}\sim \pi_2$ for $t'< t$, i.e., the number of times that $\pi_1$ and $\pi_2$ disagree before timestep $t$. Then for $s^{(t)}\sim \pi_1$, we have the following.
    \begin{eqnarray}
        &&\underset{s^{(t)}\sim \pi_1}{\mathbb{E}}\left[\underset{a^{(t)}\sim \pi_2}{\mathbb{E}}\left[\mathcal{A}_{\pi_2}(s^{(t)},a^{(t)})\right] \right]\nonumber\\
        &=& P(n_t=0)  \underset{\substack{s^{(t)}\sim \pi_1\\ n_t=0}}{\mathbb{E}}\left[\underset{a^{(t)}\sim\pi_2}{\mathbb{E}}\left[\mathcal{A}_{\pi_2}(s^{(t)},a^{(t)})\right] \right] + P(n_t > 0)\underset{\substack{s^{(t)}\sim \pi_1\\n_t > 0}}{\mathbb{E}}\left[\underset{a^{(t)}\sim\pi_2}{\mathbb{E}}\left[\mathcal{A}_{\pi_2}(s^{(t)},a^{(t)})\right] \right]\nonumber
    \end{eqnarray}
    The expectation decomposes similarly for $s^{(t)}\sim\pi_2$.
    \begin{eqnarray}
        &&\underset{\substack{s^{(t)}\sim \pi_2\\a^{(t)}\sim \pi_2}}{\mathbb{E}}\left[ \mathcal{A}_{\pi_2}(s^{(t)},a^{(t)})\right]\nonumber\\
        =&&P(n_t=0)\underset{\substack{s^{(t)}\sim \pi_2\\a^{(t)}\sim \pi_2\\n_t=0}}{\mathbb{E}}\left[ \mathcal{A}_{\pi_2}(s^{(t)},a^{(t)})\right] + P(n_t>0)\underset{\substack{s^{(t)}\sim \pi_2\\a^{(t)}\sim \pi_2\\n_t > 0}}{\mathbb{E}}\left[ \mathcal{A}_{\pi_2}(s^{(t)},a^{(t)})\right]\nonumber
    \end{eqnarray}
    When computing $\underset{s^{(t)}\sim \pi_1}{\mathbb{E}}\left[\underset{a^{(t)}\sim \pi_2}{\mathbb{E}}\left[\mathcal{A}_{\pi_2}(s^{(t)},a^{(t)})\right] \right]-\underset{\substack{s^{(t)}\sim \pi_2\\a^{(t)}\sim \pi_2}}{\mathbb{E}}\left[ \mathcal{A}_{\pi_2}(s^{(t)},a^{(t)})\right]$, the terms with $n_t=0$ cancel each other because $n_t=0$ indicates that $\pi_1$ and $\pi_2$ agreed on all timesteps less than $t$. That leads to the following.
    \begin{eqnarray}
        &&\underset{s^{(t)}\sim \pi_1}{\mathbb{E}}\left[\underset{a^{(t)}\sim \pi_2}{\mathbb{E}}\left[\mathcal{A}_{\pi_2}(s^{(t)},a^{(t)})\right] \right]-\underset{\substack{s^{(t)}\sim \pi_2\\a^{(t)}\sim \pi_2}}{\mathbb{E}}\left[ \mathcal{A}_{\pi_2}(s^{(t)},a^{(t)})\right]\nonumber\\
        =&&P(n_t > 0)\underset{\substack{s^{(t)}\sim \pi_1\\n_t > 0}}{\mathbb{E}}\left[\underset{a^{(t)}\sim \pi_2}{\mathbb{E}}\left[\mathcal{A}_{\pi_2}(s^{(t)},a^{(t)})\right] \right] - P(n_t>0)\underset{\substack{s^{(t)}\sim \pi_2\\a^{(t)}\sim \pi_2\\n_t > 0}}{\mathbb{E}}\left[ \mathcal{A}_{\pi_2}(s^{(t)},a^{(t)})\right] \nonumber
    \end{eqnarray}
    By definition of $\alpha$, the probability of $\pi_1$ and $\pi_2$ agreeing at timestep $t'$ is no less than $1 - \alpha$. Hence, $P(n_t > 0)\leq 1 - (1- \alpha^t)^t$.  
    Hence, we have the following bound.
    
    \begin{eqnarray}
        &&\left|\underset{s^{(t)}\sim \pi_1}{\mathbb{E}}\left[\underset{a^{(t)}\sim \pi_2}{\mathbb{E}}\left[\mathcal{A}_{\pi_2}(s^{(t)},a^{(t)})\right] \right]-\underset{\substack{s^{(t)}\sim \pi_2\\a^{(t)}\sim \pi_2}}{\mathbb{E}}\left[ \mathcal{A}_{\pi_2}(s^{(t)},a^{(t)})\right]\right|\nonumber\\
        =&&\left|P(n_t > 0)\underset{\substack{s^{(t)}\sim \pi_1\\n_t > 0}}{\mathbb{E}}\left[\underset{a^{(t)}\sim \pi_2}{\mathbb{E}}\left[\mathcal{A}_{\pi_2}(s^{(t)},a^{(t)})\right] \right] - P(n_t>0)\underset{\substack{s^{(t)}\sim \pi_2\\a^{(t)}\sim \pi_2\\n_t > 0}}{\mathbb{E}}\left[ \mathcal{A}_{\pi_2}(s^{(t)},a^{(t)})\right]\right| \nonumber\\
        \leq &&P(n_t > 0)\left(\left|\underset{\substack{s^{(t)}\sim \pi_1\\a^{(t)}\sim \pi_2\\n_t\geq 0}}{\mathbb{E}}\left[\mathcal{A}_{\pi_2}(s^{(t)},a^{(t)})\right]\right| + \left|\underset{\substack{s^{(t)}\sim \pi_2\\a^{(t)}\sim \pi_2\\n_t > 0}}{\mathbb{E}}\left[ \mathcal{A}_{\pi_2}(s^{(t)},a^{(t)})\right]\right| \right)\nonumber\\
        \leq&&2(1 - (1-\alpha)^t)\underset{(s,a)}{\max}|\mathcal{A}_{\pi_2}(s,a)|   \label{eq:app_b_1}
    \end{eqnarray}
\end{proof}

The preceding lemmas lead to the proof for Theorem \ref{th:pagar2_1} in the main text.
 
\noindent{\textbf{Theorem  {\ref{th:pagar2_1}.}} Suppose that $\pi_2$ is the optimal policy in terms of entropy regularized RL under $r$. Let $\alpha = \underset{s}{\max}\ D_{TV}(\pi_1(\cdot|s), \pi_2(\cdot|s))$, $\epsilon = \underset{s,a}{\max}\ |\mathcal{A}_{\pi_2}(s,a^{(t)})|$, and $\Delta\mathcal{A}(s)=\underset{a\sim\pi_1}{\mathbb{E}}\left[\mathcal{A}_{\pi_2}(s,a)\right] - \underset{a\sim\pi_2}{\mathbb{E}}\left[\mathcal{A}_{\pi_2}(s,a)\right]$. For any policy $\pi_1$, the following bounds hold. 
\begin{eqnarray}
    \left|U_r(\pi_1) -U_r(\pi_2) - \sum^\infty_{t=0}\gamma^t\underset{s^{(t)}\sim\pi_1}{\mathbb{E}}\left[\Delta\mathcal{A}(s^{(t)})\right]\right|&\leq&\frac{2\alpha\gamma\epsilon}{(1-\gamma)^2}\label{eq:app_b_8}\\
   \left|U_r(\pi_1)-U_r(\pi_2) - \sum^\infty_{t=0}\gamma^t\underset{s^{(t)}\sim\pi_2}{\mathbb{E}}\left[\Delta\mathcal{A}(s^{(t)})\right]\right|&\leq& \frac{2\alpha\gamma(2\alpha+1)\epsilon}{(1-\gamma)^2} \label{eq:app_b_9}
\end{eqnarray}
}
\begin{proof}
We first leverage Lemma \ref{lm:app_b_1} to derive Eq.\ref{eq:app_b_2}.
Note that  since $\pi_2$ is optimal under $r$, Remark \ref{rm:app_b_rm1} confirmed that $\mathcal{H}(\pi_2)=-\sum^\infty_{t=0}\gamma^t\underset{s^{(t)}\sim\pi_2}{\mathbb{E}}\left[\underset{a^{(t)}\sim\pi_2}{\mathbb{E}}\left[\mathcal{A}_{\pi_2}(s^{(t)},a^{(t)})\right]\right]$.
\begin{eqnarray}
&&U_r(\pi_1) -U_r(\pi_2)\nonumber\\
&=& \left(U_r(\pi_1) -U_r(\pi_2) - \mathcal{H}(\pi_2)\right) + \mathcal{H}(\pi_2)\nonumber\\
&=& \underset{\tau\sim\pi_1}{\mathbb{E}}\left[\sum^\infty_{t=0}\gamma^t \mathcal{A}_{\pi_2}(s^{(t)}, a^{(t)})
 \right] + \mathcal{H}(\pi_2)\nonumber\\
&=& \underset{\tau\sim\pi_1}{\mathbb{E}}\left[\sum^\infty_{t=0}\gamma^t \mathcal{A}_{\pi_2}(s^{(t)}, a^{(t)})
 \right]-\sum^\infty_{t=0}\gamma^t\underset{s^{(t)}\sim\pi_2}{\mathbb{E}}\left[\underset{a^{(t)}\sim\pi_2}{\mathbb{E}}\left[\mathcal{A}_{\pi_2}(s^{(t)},a^{(t)})\right]\right]\nonumber\\
&=&\sum^\infty_{t=0}\gamma^t\underset{s^{(t)}\sim\pi_1}{\mathbb{E}}\left[\underset{a^{(t)}\sim\pi_1}{\mathbb{E}}\left[\mathcal{A}_{\pi_2}(s^{(t)},a^{(t)})\right] - \underset{a^{(t)}\sim\pi_2}{\mathbb{E}}\left[\mathcal{A}_{\pi_2}(s^{(t)},a^{(t)})\right] \right] + \nonumber\\
&& \sum^\infty_{t=0}\gamma^t\left(\underset{s^{(t)}\sim\pi_1}{\mathbb{E}}\left[\underset{a^{(t)}\sim\pi_2}{\mathbb{E}}\left[\mathcal{A}_{\pi_2}(s^{(t)},a^{(t)})\right]\right]-\underset{s^{(t)}\sim\pi_2}{\mathbb{E}}\left[\underset{a^{(t)}\sim\pi_2}{\mathbb{E}}\left[\mathcal{A}_{\pi_2}(s^{(t)},a^{(t)})\right]\right] \right)\nonumber\\
&=&\sum^\infty_{t=0}\gamma^t\underset{s^{(t)}\sim\pi_1}{\mathbb{E}}\left[\Delta\mathcal{A}(s^{(t)})\right] +\nonumber\\
&& \sum^\infty_{t=0}\gamma^t\left(\underset{s^{(t)}\sim\pi_1}{\mathbb{E}}\left[\underset{a^{(t)}\sim\pi_2}{\mathbb{E}}\left[\mathcal{A}_{\pi_2}(s^{(t)},a^{(t)})\right]\right]-\underset{s^{(t)}\sim\pi_2}{\mathbb{E}}\left[\underset{a^{(t)}\sim\pi_2}{\mathbb{E}}\left[\mathcal{A}_{\pi_2}(s^{(t)},a^{(t)})\right]\right] \right)\label{eq:app_b_2}
\end{eqnarray}

We switch terms between Eq.\ref{eq:app_b_2} and $U_r(\pi_1)-U_r(\pi_2)$, then use Lemma \ref{lm:app_b_3} to derive Eq.\ref{eq:app_b_7}.
\begin{eqnarray}
&&\left|U_r(\pi_1) -U_r(\pi_2) - \sum^\infty_{t=0}\gamma^t\underset{s^{(t)}\sim\pi_1}{\mathbb{E}}\left[\Delta\mathcal{A}(s^{(t)})\right]\right|\nonumber\\
&=&\left|\sum^\infty_{t=0}\gamma^t\left(\underset{s^{(t)}\sim\pi_1}{\mathbb{E}}\left[\underset{a^{(t)}\sim\pi_2}{\mathbb{E}}\left[\mathcal{A}_{\pi_2}(s^{(t)},a^{(t)})\right]\right]-\underset{s^{(t)}\sim\pi_2}{\mathbb{E}}\left[\underset{a^{(t)}\sim\pi_2}{\mathbb{E}}\left[\mathcal{A}_{\pi_2}(s^{(t)},a^{(t)})\right]\right] \right)\right|\nonumber\\
&\leq & \sum^\infty_{t=0}\gamma^t \cdot 2\underset{(s,a)}{\max}|\mathcal{A}_{\pi_2}(s,a)|\cdot (1 - (1-\alpha)^t)\leq \frac{2\alpha\gamma\underset{(s,a)}{\max}|\mathcal{A}_{\pi_2}(s,a)|}{(1-\gamma)^2}\label{eq:app_b_7}
\end{eqnarray}
Alternatively, we can expand $U_r(\pi_2)-U_r(\pi_1)$ into Eq.\ref{eq:app_b_3}.
During the process, $\mathcal{H}(\pi_2)$ is converted into $-\sum^\infty_{t=0}\gamma^t\underset{s^{(t)}\sim\pi_2}{\mathbb{E}}\left[\underset{a^{(t)}\sim\pi_2}{\mathbb{E}}\left[\mathcal{A}_{\pi_2}(s^{(t)},a^{(t)})\right]\right]$.
\begin{eqnarray}
&&U_r(\pi_1) -U_r(\pi_2)\nonumber\\
&=& \left(U_r(\pi_1) -U_r(\pi_2) - \mathcal{H}(\pi_2)\right)+\mathcal{H}(\pi_2)\nonumber\\
&=&\underset{\tau\sim\pi_1}{\mathbb{E}}\left[\sum^\infty_{t=0}\gamma^t \mathcal{A}_{\pi_2}(s^{(t)}, a^{(t)})
 \right]+\mathcal{H}(\pi_2)\nonumber\\
&=&\sum^\infty_{t=0}\gamma^t\underset{s^{(t)}\sim\pi_1}{\mathbb{E}}\left[\underset{a^{(t)}\sim\pi_1}{\mathbb{E}} \left[\mathcal{A}_{\pi_2}(s^{(t)}, a^{(t)})
 \right]\right]+\mathcal{H}(\pi_2)\nonumber\\
 &=&\sum^\infty_{t=0}\gamma^t\underset{s^{(t)}\sim\pi_1}{\mathbb{E}}\left[\Delta A(s^{(t)}) +\underset{a^{(t)}\sim\pi_2}{\mathbb{E}} \left[\mathcal{A}_{\pi_2}(s^{(t)}, a^{(t)})
 \right]\right]+\mathcal{H}(\pi_2)\nonumber\\
&=& \sum^\infty_{t=0}\gamma^t\underset{s^{(t)}\sim\pi_2}{\mathbb{E}}\left[
 \underset{a^{(t)}\sim\pi_1}{\mathbb{E}}\left[\mathcal{A}_{\pi_2}(s^{(t)},a^{(t)})\right]-
 \underset{a^{(t)}\sim\pi_2}{\mathbb{E}}\left[\mathcal{A}_{\pi_2}(s^{(t)}, a^{(t)})\right] - \Delta \mathcal{A}(s^{(t)})\right] + \nonumber\\
&&\underset{s^{(t)}\sim \pi_1}{\mathbb{E}}\left[ \Delta \mathcal{A}(s^{(t)}) +  \underset{a^{(t)}\sim\pi_2}{\mathbb{E}} \left[\mathcal{A}_{\pi_2}(s^{(t)}, a^{(t)})
 \right]\right] -  \underset{\substack{s^{(t)}\sim \pi_2\\a^{(t)}\sim\pi_2}}{\mathbb{E}} \left[\mathcal{A}_{\pi_2}(s^{(t)}, a^{(t)})
 \right]\nonumber\\
 &=&  \sum^\infty_{t=0}\gamma^t\left(\underset{s^{(t)}\sim \pi_1}{\mathbb{E}} \left[\underset{a^{(t)}\sim\pi_2}{\mathbb{E}}\left[\mathcal{A}_{\pi_2}(s^{(t)}, a^{(t)} )
 \right]\right]- 2 \underset{s^{(t)}\sim\pi_2}{\mathbb{E}}\left[  \underset{a^{(t)}\sim\pi_2}{\mathbb{E}}\left[\mathcal{A}_{\pi_2}(s^{(t)}, a^{(t)})\right]\right]\right)+ \nonumber\\
  &&\sum^\infty_{t=0}\gamma^t\left(\underset{s^{(t)}\sim\pi_2}{\mathbb{E}}\left[\underset{a^{(t)}\sim\pi_1}{\mathbb{E}}\left[\mathcal{A}_{\pi_2}(s^{(t)},a^{(t)})\right]  \right] - (\underset{s^{(t)}\sim \pi_2}{\mathbb{E}}\left[\Delta \mathcal{A}(s^{(t)})\right] - \underset{s^{(t)}\sim \pi_1}{\mathbb{E}}\left[\Delta \mathcal{A}(s^{(t)})\right])\right)\nonumber\\\label{eq:app_b_3}
 \end{eqnarray}
We switch terms between Eq.\ref{eq:app_b_3} and $U_r(\pi_1)-U_r(\pi_2)$, then base on Lemma \ref{lm:app_b_2} and \ref{lm:app_b_3} to derive the inequality in Eq.\ref{eq:app_b_4}.
\begin{eqnarray}
&&\left|U_r(\pi_1)-U_r(\pi_2) - \sum^\infty_{t=0}\gamma^t\underset{s^{(t)}\sim\pi_2}{\mathbb{E}}\left[\Delta\mathcal{A}_{\pi}(s^{(t)},a^{(t)})\right]\right|\nonumber\\
&=&\biggl|U_r(\pi_1) -U_r(\pi_2) - \nonumber\\
&&\qquad  \sum^\infty_{t=0}\gamma^t\left(\underset{s^{(t)}\sim\pi_2}{\mathbb{E}}\left[\underset{a^{(t)}\sim\pi_1}{\mathbb{E}}\left[\mathcal{A}_{\pi_2}(s^{(t)},a^{(t)})\right]  \right] -\underset{s^{(t)}\sim\pi_2}{\mathbb{E}}\left[  \underset{a^{(t)}\sim\pi_2}{\mathbb{E}}\left[\mathcal{A}_{\pi_2}(s^{(t)}, a^{(t)})\right]\right]\right)\biggr|\nonumber\\
&=& \biggl|\sum^\infty_{t=0}\gamma^t \left(\underset{s^{(t)}\sim \pi_2}{\mathbb{E}}\left[\Delta \mathcal{A}(s^{(t)})\right] - \underset{s^{(t)}\sim \pi_1}{\mathbb{E}}\left[\Delta \mathcal{A}(s^{(t)})\right]\right)-\nonumber\\
 &&\qquad \sum^\infty_{t=0}\gamma^t\left(\underset{s^{(t)}\sim \pi_1}{\mathbb{E}} \left[\underset{a^{(t)}\sim\pi_2}{\mathbb{E}}\left[\mathcal{A}_{\pi_2}(s^{(t)}, a^{(t)} )
 \right]\right]- \underset{s^{(t)}\sim\pi_2}{\mathbb{E}}\left[  \underset{a^{(t)}\sim\pi_2}{\mathbb{E}}\left[\mathcal{A}_{\pi_2}(s^{(t)}, a^{(t)})\right]\right]\right)  \biggr| \nonumber\\
&\leq&   \left|\sum^\infty_{t=0}\gamma^t \left(\underset{s^{(t)}\sim \pi_2}{\mathbb{E}}\left[\Delta \mathcal{A}(s^{(t)})\right] - \underset{s^{(t)}\sim \pi_1}{\mathbb{E}}\left[\Delta \mathcal{A}(s^{(t)})\right]\right)\right| + \nonumber\\
 &&\qquad \left|\sum^\infty_{t=0}\gamma^t\left(\underset{s^{(t)}\sim \pi_1}{\mathbb{E}} \left[\underset{a^{(t)}\sim\pi_2}{\mathbb{E}}\left[\mathcal{A}_{\pi_2}(s^{(t)}, a^{(t)} )
 \right]\right]- \underset{s^{(t)}\sim\pi_2}{\mathbb{E}}\left[  \underset{a^{(t)}\sim\pi_2}{\mathbb{E}}\left[\mathcal{A}_{\pi_2}(s^{(t)}, a^{(t)})\right]\right]\right)  \right| \nonumber\\
 &\leq & \sum^\infty_{t=0}\gamma^t \left((1- (1-\alpha)^t) (4\alpha\underset{s,a}{\max}|\mathcal{A}_{\pi_2}(s,a)| + 2\underset{(s,a)}{\max}|\mathcal{A}_{\pi_2}(s,a)|)\right)\nonumber\\
&\leq &\frac{2\alpha\gamma(2\alpha+1)\underset{s,a}{\max}|\mathcal{A}_{\pi_2}(s,a)|}{(1-\gamma)^2} \label{eq:app_b_4}
\end{eqnarray}

It is stated in \cite{trpo} that $\underset{s}{\max}\ D_{TV}(\pi_2(\cdot|s), \pi_1(\cdot|s))\leq \alpha$.
Hence, by letting $\alpha:= \underset{s}{\max}\ D_{TV}(\pi_2(\cdot|s), \pi_1(\cdot|s))$, Eq.\ref{eq:app_b_2} and \ref{eq:app_b_4} still hold.
Then,  we have proved Theorem \ref{th:pagar2_1}.
\end{proof}

\subsection{Objective Functions of Reward Optimization}\label{subsec:app_b_2}
To derive $J_{R,1}$ and $J_{R,2}$, we let $\pi_1=\pi_P$ and $\pi_2=\pi_A$. 
Then based on Eq.\ref{eq:app_b_8} and \ref{eq:app_b_9} we derive the following upper-bounds of $U_r(\pi_P)-U_r(\pi_A)$.
\begin{eqnarray}
  U_r(\pi_P) -U_r(\pi_A)&\leq& \sum^\infty_{t=0}\gamma^t\underset{s^{(t)}\sim\pi_P}{\mathbb{E}}\left[\Delta\mathcal{A}(s^{(t)})\right] + \frac{2\alpha\gamma(2\alpha+1)\epsilon}{(1-\gamma)^2}\label{eq:app_b_10}\\
  U_r(\pi_P) - U_r(\pi_A)&\geq& \sum^\infty_{t=0}\gamma^t\underset{s^{(t)}\sim\pi_A}{\mathbb{E}}\left[\Delta\mathcal{A}(s^{(t)})\right] - \frac{2\alpha\gamma\epsilon}{(1-\gamma)^2}\label{eq:app_b_11}
\end{eqnarray}
By our assumption that $\pi_A$ is optimal under $r$, we have $\mathcal{A}_{\pi_A}\equiv r$~\cite{airl}.
This equivalence enables us to replace $\mathcal{A}_{\pi_A}$'s in $\Delta\mathcal{A}$ with $r$. 
As for the $\frac{2\alpha\gamma(2\alpha+1)\epsilon}{(1-\gamma)^2}$ and $\frac{2\alpha\gamma\epsilon}{(1-\gamma)^2}$ terms, 
since the objective is to maximize $U_r(\pi_A)-U_r(\pi_B)$, we heuristically estimate the $\epsilon$ in Eq.\ref{eq:app_b_10} by using the samples from $\pi_P$ and the $\epsilon$ in Eq.\ref{eq:app_b_11} by using the samples from $\pi_A$.
As a result we have the objective functions defined as Eq.\ref{eq:app_b_12} and \ref{eq:app_b_13} where $\delta_1(s,a)=\frac{\pi_P(a^{(t)}|s^{(t)})}{\pi_A(a^{(t)}|s^{(t)})}$ and $\delta_2=\frac{\pi_A(a^{(t)}|s^{(t)})}{\pi_P(a^{(t)}|s^{(t)})}$ are the importance sampling probability ratio derived from the definition of $\Delta\mathcal{A}$; $C_1 \propto - \frac{\gamma \hat{\alpha}}{(1-\gamma)}$ and $C_2\propto\frac{\gamma \hat{\alpha}}{(1-\gamma)}$ where $\hat{\alpha}$ is either an estimated maximal KL-divergence between $\pi_A$ and $\pi_B$ since $D_{KL}\geq D_{TV}^2$ according to \cite{trpo}, or an estimated maximal $D_{TV}^2$ depending on whether the reward function is Gaussian or Categorical. 
We also note that for finite horizon tasks, we compute the average rewards instead of the discounted accumulated rewards in Eq.\ref{eq:app_b_13} and \ref{eq:app_b_12}.
\begin{eqnarray}
    &J_{R,1}(r;\pi_P, \pi_A):=  \underset{\tau\sim \pi_A}{\mathbb{E}}\left[\sum^\infty_{t=0}\gamma^t\left(\delta_1(s^{(t)},a^{(t)})- 1\right)\cdot r(s^{(t)},a^{(t)})\right] +C_1 \underset{(s,a)\sim \pi_A}{\max}|r(s,a)|&\ \   \label{eq:app_b_12}\\
    &J_{R,2}(r; \pi_P, \pi_A) := \underset{\tau\sim \pi_P}{\mathbb{E}}\left[\sum^\infty_{t=0}\gamma^t\left(1 - \delta_2(s^{(t)}, a^{(t)})\right) \cdot r(s^{(t)},a^{(t)})\right] + C_2 \underset{(s,a)\sim \pi_P}{\max}|r(s,a)|&\ \ \label{eq:app_b_13}
\end{eqnarray}

Beside $J_{R, 1}, J_{R, 2}$, we additionally use two more objective functions based on the derived bounds. W $J_{R,r}(r;\pi_A, \pi_P)$.
By denoting the optimal policy under $r$ as $\pi^*$, $\alpha^*=\underset{s\in\mathbb{S}}{\max}\ D_{TV}(\pi^*(\cdot|s), \pi_A(\cdot|s)$, $\epsilon^*=\underset{(s,a^{(t)})}{\max}|\mathcal{A}_{\pi^*}(s,a^{(t)})|$, and $\Delta\mathcal{A}_A^*(s)=\underset{a\sim\pi_A}{\mathbb{E}}\left[\mathcal{A}_{\pi^*}(s,a)\right] - \underset{a\sim\pi^*}{\mathbb{E}}\left[\mathcal{A}_{\pi^*}(s,a)\right]$,  we have the following.

\begin{eqnarray}
&&  U_r(\pi_P) -   U_r(\pi^*)\nonumber\\
&=& U_r(\pi_P) - U_r(\pi_A) + U_r(\pi_A) - U_r(\pi^*) \nonumber\\
&\leq& U_r(\pi_P)-U_r(\pi_A) + \sum^\infty_{t=0}\gamma^t\underset{s^{(t)}\sim\pi_A}{\mathbb{E}}\left[\Delta\mathcal{A}_A^*(s^{(t)})\right] +\frac{2\alpha^*\gamma\epsilon^*}{(1-\gamma)^2}\nonumber\\
&=& U_r(\pi_P)-\sum^\infty_{t=0}\gamma^t\underset{s^{(t)}\sim\pi_A}{\mathbb{E}}\left[\underset{a^{(t)}\sim\pi_A}{\mathbb{E}}\left[r(s^{(t)},a^{(t)})\right]\right] + \nonumber\\ 
&&\qquad \sum^\infty_{t=0}\gamma^t\underset{s^{(t)}\sim\pi_A}{\mathbb{E}}\left[\underset{a^{(t)}\sim\pi_A}{\mathbb{E}}\left[\mathcal{A}_{\pi^*}(s^{(t)},a^{(t)})\right] - \underset{a^{(t)}\sim\pi^*}{\mathbb{E}}\left[\mathcal{A}_{\pi^*}(s^{(t)},a^{(t)})\right]\right] + \frac{2\alpha^*\gamma\epsilon^*}{(1-\gamma)^2}\nonumber\\
&=& U_r(\pi_P) -\sum^\infty_{t=0}\gamma^t\underset{s^{(t)}\sim\pi_A}{\mathbb{E}}\left[  \underset{a^{(t)}\sim\pi^*}{\mathbb{E}}\left[\mathcal{A}_{\pi^*}(s^{(t)},a^{(t)})\right]\right]+ \frac{2\alpha^*\gamma\epsilon^*}{(1-\gamma)^2}\nonumber\\
&=&\underset{\tau\sim \pi_P}{\mathbb{E}}\left[\sum^\infty_{t=0}\gamma^t r(s^{(t)},a^{(t)})\right] -\underset{\tau\sim \pi_A}{\mathbb{E}}\left[\sum^\infty_{t=0}\gamma^t\frac{\exp(r(s^{(t)},a^{(t)}))}{\pi_A(a^{(t)}|s^{(t)})}r(s^{(t)},a^{(t)})\right]+\frac{2\alpha^*\gamma\epsilon^*}{(1-\gamma)^2}\qquad
\label{eq:app_b_5}
\end{eqnarray}

Let $\delta_3=\frac{\exp(r(s^{(t)},a^{(t)}))}{\pi_A(a^{(t)}|s^{(t)})}$ be the importance sampling probability ratio. 
It is suggested in \cite{ppo} that instead of directly optimizing the objective function Eq.\ref{eq:app_b_5}, optimizing a surrogate objective function as in Eq.\ref{eq:app_b_6}, which is an upper-bound of Eq.\ref{eq:app_b_5}, with some small $\delta\in(0, 1)$ can be much less expensive and still effective.

\begin{eqnarray}
    &J_{R,3}(r;\pi_P, \pi_A):=\underset{\tau\sim \pi_P}{\mathbb{E}}\left[\sum^\infty_{t=0}\gamma^t r(s^{(t)},a^{(t)})\right] -\qquad\qquad\qquad\qquad\qquad\qquad\qquad\qquad\quad & \nonumber\\
    &\underset{\tau\sim \pi_A}{\mathbb{E}}\left[\sum^\infty_{t=0}\gamma^t \min\left(\delta_3 \cdot r(s^{(t)},a^{(t)}), clip(\delta_3, 1-\delta, 1 + \delta)\cdot r(s^{(t)},a^{(t)})\right) \right] & \label{eq:app_b_6}
\end{eqnarray}

Alternatively, we let $\Delta\mathcal{A}_P^*(s)=\underset{a\sim\pi_P}{\mathbb{E}}\left[\mathcal{A}_{\pi^*}(s,a)\right] - \underset{a\sim\pi^*}{\mathbb{E}}\left[\mathcal{A}_{\pi^*}(s,a)\right]$.
The according to Eq.\ref{eq:app_b_10}, we have the following.
\begin{eqnarray}
  &&U_r(\pi_P) -U_r(\pi^*)\nonumber\\
  &\leq& \sum^\infty_{t=0}\gamma^t\underset{s^{(t)}\sim\pi_P}{\mathbb{E}}\left[\Delta\mathcal{A}_P^*(s^{(t)})\right] + \frac{2\alpha^*\gamma(2\alpha^*+1)\epsilon^*}{(1-\gamma)^2}\nonumber\\
  &=& \sum^\infty_{t=0}\gamma^t\underset{s^{(t)}\sim\pi_P}{\mathbb{E}}\left[\underset{a^{(t)}\sim\pi_P}{\mathbb{E}}\left[\mathcal{A}_{\pi^*}(s^{(t)},a^{(t)})\right] - \underset{a^{(t)}\sim\pi^*}{\mathbb{E}}\left[\mathcal{A}_{\pi^*}(s^{(t)},a)^{(t)}\right]\right] + \frac{2\alpha^*\gamma(2\alpha^*+1)\epsilon^*}{(1-\gamma)^2}\nonumber\\\label{app_b_7}
\end{eqnarray}

Then a new objective function $J_{R,4}$ is formulated in Eq.\ref{app_b_8} where $\delta_4 = \frac{\exp(r(s^{(t)},a^{(t)}))}{\pi_P(a^{(t)}|s^{(t)})}$.
\begin{eqnarray}
    &J_{R,4}(r;\pi_P, \pi_A):=\underset{\tau\sim \pi_P}{\mathbb{E}}\left[\sum^\infty_{t=0}\gamma^t r(s^{(t)},a^{(t)})\right] -\qquad\qquad\qquad\qquad\qquad\qquad\qquad\qquad\quad & \nonumber\\
    &\underset{\tau\sim \pi_P}{\mathbb{E}}\left[\sum^\infty_{t=0}\gamma^t \min\left(\delta_4 \cdot r(s^{(t)},a^{(t)}), clip(\delta_4, 1-\delta, 1 + \delta)\cdot r(s^{(t)},a^{(t)})\right) \right] & \label{app_b_8}
\end{eqnarray}

\subsection{Incorporating IRL Algorithms}\label{subsec:app_b_3}

In our implementation, we combine PAGAR with GAIL and VAIL, respectively. 
When PAGAR is combined with GAIL, the meta-algorithm Algorithm \ref{alg:pagar2_1} becomes Algorithm \ref{alg:app_b_1}.
When PAGAR is combined with VAIL, it becomes Algorithm \ref{alg:app_b_2}.
Both of the two algorithms are GAN-based IRL, indicating that both algorithms use Eq.\ref{eq:prelm_1} as the IRL objective function. 
In our implementation, we use a neural network to approximate $D$, the discriminator in Eq.\ref{eq:prelm_1}.
To get the reward function $r$, we follow \cite{airl} and denote $r(s,a) = \log \left(\frac{\pi_A(a|s)}{D(s,a)} - \pi_A(a|s)\right)$ as mentioned in Section \ref{sec:intro}.
Hence, the only difference between Algorithm \ref{alg:app_b_1} and Algorithm \ref{alg:pagar2_1} is in the representation of the reward function.
Regarding VAIL, since it additionally learns a  representation for the state-action pairs, a bottleneck constraint $J_{IC}(D)\leq i_c$ is added where the bottleneck $J_{IC}$ is estimated from policy roll-outs.
VAIL introduces a Lagrangian parameter $\beta$ to integrate $J_{IC}(D) - i_c$ in the objective function. 
As a result its objective function becomes $J_{IRL}( r) + \beta\cdot ({J_{IC}(D)} - i_c)$.
VAIL not only learns the policy and the discriminator but also optimizes $\beta$. 
In our case, we utilize the samples from both protagonist and antagonist policies to optimize $\beta$ as in line 10, where we follow \cite{vail} by using projected gradient descent with a step size $\delta$

\begin{algorithm}[tb]
\caption{GAIL w/ PAGAR}
\label{alg:app_b_1}
\textbf{Input}: Expert demonstration $E$, discriminator loss bound $\delta$, initial protagonist policy $\pi_P$, antagonist policy $\pi_A$, discriminator $D$ (representing $r(s,a) = \log \left(\frac{\pi_A(a|s)}{D(s,a)} - \pi_A(a|s)\right)$), Lagrangian parameter $\lambda$, iteration number $i=0$, maximum iteration number $N$ \\
\textbf{Output}: $\pi_P$ 
\begin{algorithmic}[1] 
\WHILE {iteration number $i<N$}
\STATE Sample trajectory sets $\mathbb{D}_A\sim \pi_A$ and $\mathbb{D}_P \sim \pi_P$
\STATE Estimate $J_{RL}(\pi_A;r)$ with $\mathbb{D}_A$
\STATE Optimize $\pi_A$ to maximize $J_{RL}(\pi_A;r)$.
\STATE Estimate $J_{RL}(\pi_P;r)$ with $\mathbb{D}_P$; $J_{\pi_A}(\pi_P; \pi_A, r)$ with $\mathbb{D}_P$ and $\mathbb{D}_A$;
\STATE Optimize $\pi_P$ to maximize $J_{RL}(\pi_P; r) + J_{\pi_A}(\pi_P; \pi_A, r)$.
\STATE Estimate $J_{PAGAR}(r; \pi_P, \pi_A)$ with $\mathbb{D}_P$ and $\mathbb{D}_A$
\STATE Estimate $J_{IRL}(\pi_A; r)$ with $\mathbb{D}_A$ and $E$ by following the IRL algorithm
\STATE Optimize $D$ to minimize $J_{PAGAR}(r; \pi_P, \pi_A) + \lambda \cdot max(J_{IRL}(r) + \delta, 0)$
\ENDWHILE
\STATE \textbf{return} $\pi_P$ 
\end{algorithmic}
\end{algorithm}

\begin{algorithm}[tb]
\caption{VAIL w/ PAGAR}
\label{alg:app_b_2}
\textbf{Input}: Expert demonstration $E$, discriminator loss bound $\delta$, initial protagonist policy $\pi_P$, antagonist policy $\pi_A$, discriminator $D$ (representing $r(s,a) = \log \left(\frac{\pi_A(a|s)}{D(s,a)} - \pi_A(a|s)\right)$), Lagrangian parameter $\lambda$ for PAGAR, iteration number $i=0$, maximum iteration number $N$, Lagrangian parameter $\beta$ for bottleneck constraint, bounds on the bottleneck penalty $i_c$, learning rate $\mu$.  \\
\textbf{Output}: $\pi_P$ 
\begin{algorithmic}[1] 
\WHILE {iteration number $i<N$}
\STATE Sample trajectory sets $\mathbb{D}_A\sim \pi_A$ and $\mathbb{D}_P \sim \pi_P$
\STATE Estimate $J_{RL}(\pi_A;r)$ with $\mathbb{D}_A$
\STATE Optimize $\pi_A$ to maximize $J_{RL}(\pi_A;r)$.
\STATE Estimate $J_{RL}(\pi_P;r)$ with $\mathbb{D}_P$; $J_{\pi_A}(\pi_P; \pi_A, r)$ with $\mathbb{D}_P$ and $\mathbb{D}_A$;
\STATE Optimize $\pi_P$ to maximize $J_{RL}(\pi_P; r) + J_{\pi_A}(\pi_P; \pi_A, r)$.
\STATE Estimate $J_{PAGAR}(r; \pi_P, \pi_A)$ with $\mathbb{D}_P$ and $\mathbb{D}_A$
\STATE Estimate $J_{IRL}(\pi_A; r)$ with $\mathbb{D}_A$ and $E$ by following the IRL algorithm
\STATE Estimate $J_{IC}(D)$ with $\mathbb{D}_A, \mathbb{D}_P$ and $E$
\STATE Optimize $D$ to minimize $J_{PAGAR}(r; \pi_P, \pi_A) + \lambda \cdot max(J_{IRL}(r)+\delta, 0) + \beta\cdot J_{IC}(D)$
\STATE Update $\beta:= \max \left (0, \beta - \mu\cdot (\frac{J_{IC}(D)}{3} - i_c)\right)$
\ENDWHILE
\STATE \textbf{return} $\pi_P$ 
\end{algorithmic}
\end{algorithm}

In our implementation, depending on the difficulty of the benchmarks, we choose to maintain $\lambda$ as a constant or update $\lambda$ with the IRL loss $J_{IRL}( r)$ in most of the continuous control tasks. 
In \textit{HalfCheetah-v2} and all the maze navigation tasks, we update $\lambda$ by introducing a hyperparameter $\mu$.  
As described in the maintext, we treat $\delta$ as the target IRL loss of $J_{IRL}( r)$, i.e., $J_{IRL}( r)\leq \delta$.
In all the maze navigation tasks, we initialize $\lambda$ with some constant $\lambda_0$ and update $\lambda$ by $\lambda:= \lambda \cdot \exp(\mu\cdot(J_{IRL}( r) - \delta))$ after every iteration. 
In \textit{HalfCheetah-v2}, we update $\lambda$ by $\lambda:= max(\lambda_0, \lambda \cdot \exp(\mu\cdot(J_{IRL}( r) - \delta)))$ to avoid $\lambda$ being too small.
Besides, we use PPO~\cite{ppo} to train all policies in Algorithm \ref{alg:app_b_1} and \ref{alg:app_b_2}. 
\section{Experiment Details}\label{sec:app_c}
This section presents some details of the experiments and additional results.
\subsection{Experimental Details}\label{subsec:app_c_1}
\noindent\textbf{Network Architectures}. Our algorithm involves a protagonist policy $\pi_P$,  and an antagonist policy $\pi_A$. 
In our implementation, the two policies have the same structures.
Each structure contains two neural networks, an actor network, and a critic network. 
When associated with GAN-based IRL, we use a discriminator $D$ to represent the reward function as mentioned in Appendix \ref{subsec:app_b_3}. 
\begin{itemize}
    \item \textbf{Protagonist and Antagonist policies}. We prepare two versions of actor-critic networks, a fully connected network (FCN) version, and a CNN version, respectively, for the Mujoco and Mini-Grid benchmarks. 
    The FCN version, the actor and critic networks have $3$ layers.
    Each hidden layer has $100$ neurons and a $tanh$ activation function. 
    The output layer output the mean and standard deviation of the actions.
    In the CNN version, the actor and critic networks share $3$ convolutional layers, each having $5$, $2$, $2$ filters, $2\times2$ kernel size, and $ReLU$ activation function. 
    Then $2$ FCNs are used to simulate the actor and critic networks.
    The FCNs have one hidden layer, of which the sizes are $64$.
    
    \item \textbf{Discriminator $D$ for PAGAR-based GAIL in Algorithm \ref{alg:app_b_1}}. 
    We prepare two versions of discriminator networks, an FCN version and a CNN version, respectively, for the Mujoco and Mini-Grid benchmarks. 
    The FCN version has $3$ linear layers.
    Each hidden layer has $100$ neurons and a $tanh$ activation function. 
    The output layer uses the $Sigmoid$ function to output the confidence.
    In the CNN version, the actor and critic networks share $3$ convolutional layers, each having $5$, $2$, $2$ filters, $2\times2$ kernel size, and $ReLU$ activation function. 
    The last convolutional layer is concatenated with an FCN with one hidden layer with $64$ neurons and $tanh$ activation function. 
    The output layer uses the $Sigmoid$ function as the activation function.
    \item \textbf{Discriminator $D$ for PAGAR-based VAIL in Algorithm \ref{alg:app_b_2}}. 
    We prepare two versions of discriminator networks, an FCN version and a CNN version, respectively, for the Mujoco and Mini-Grid benchmarks. 
    The FCN version uses $3$ linear layers to generate the mean and standard deviation of the embedding of the input. 
    Then a two-layer FCN takes a sampled embedding vector as input and outputs the confidence.
    The hidden layer in this FCN has $100$ neurons and a $tanh$ activation function. 
    The output layer uses the $Sigmoid$ function to output the confidence.
    In the CNN version, the actor and critic networks share $3$ convolutional layers, each having $5$, $2$, $2$ filters, $2\times2$ kernel size, and $ReLU$ activation function. 
    The last convolutional layer is concatenated with a two-layer FCN. 
    The hidden layer has $64$ neurons and uses $tanh$ as the activation function.
    The output layer uses the $Sigmoid$ function as the activation function.
    \end{itemize}
    
    \noindent{\bf{Hyperparameters}}
       The hyperparameters that appear in Algorithm \ref{alg:app_b_2} and \ref{alg:app_b_2} are summarized in Table \ref{tab:app_c_1} where we use  {\tt{N/A}} to indicate using $\delta^*$, in which case we let $\mu =0$.  
       Otherwise, the values of $\mu$ and $\delta$ vary depending on the task and IRL algorithm.
       The parameter $\lambda_0$ is the initial value of $\lambda$ as explained in Appendix \ref{subsec:app_b_3}.

\begin{table}[ht]
\centering
\begin{tabular}{r|c|c}
    Parameter & Continuous Control Domain & Partially Observable Domain\\
    Policy training batch size & 64  & 256 \\
    Discount factor  & 0.99 & 0.99 \\
    GAE parameter & 0.95 & 0.95 \\
    PPO clipping parameter & 0.2 & 0.2 \\
    $\lambda_0$ & 1e3 &  1e3 \\
    $\sigma$ & 0.2 & 0.2 \\
    $i_c$ & 0.5 & 0.5 \\
    $\beta$ & 0.0 & 0.0 \\
    $\mu$ & VAIL(HalfCheetah): 0.5; others: 0.0 & VAIL: 1.0; GAIL: 1.0\\
    $\delta$ & VAIL(HalfCheetah): 1.0; others: N/A & VAIL: 0.8; GAIL: 1.2\\
\end{tabular}
\caption{Hyperparameters used in the training processes}
\label{tab:app_c_1}
\end{table}

\textbf{Expert Demonstrations.} Our expert demonstrations all achieve high rewards in the task. 
The number of trajectories and the average trajectory total rewards are listed in Table \ref{tab:app_c_2}.

\begin{table}
\begin{tabular}{r|c|c}
    Task & Number of Trajectories & Average Tot.Rewards\\
     Walker2d-v2 & 10 & 4133 \\
     HalfCheetah-v2 & 100 & 1798 \\
     Hopper-v2 & 100 & 3586 \\
     InvertedPendulum-v2 & 10 & 1000\\
     Swimmer-v2 & 10 & 122 \\
     DoorKey-6x6-v0 &10 & 0.92\\
     SimpleCrossingS9N1-v0 & 10 & 0.93\\ 
\end{tabular}
\caption{The number of demonstrated trajectories and the average trajectory rewards}
\label{tab:app_c_2}
\end{table}

\subsection{Additional Results}\label{subsec:app_c_2}
 
We append the results in three Mujoco benchmarks: \textit{Hopper-v2}, \textit{InvertedPendulum-v2} and \textit{Swimmer-v2} in Figure \ref{fig:app_c_1}. 
Algorithm \ref{alg:pagar2_1} performs similarly to VAIL and GAIL in those two benchmarks. 
IQ-learn does not perform well in Walker2d-v2 but performs better than ours and other baselines by a large margin.

\begin{figure}[htb]
\begin{subfigure}[b]{0.32\linewidth}
        \includegraphics[width=1.1\linewidth]{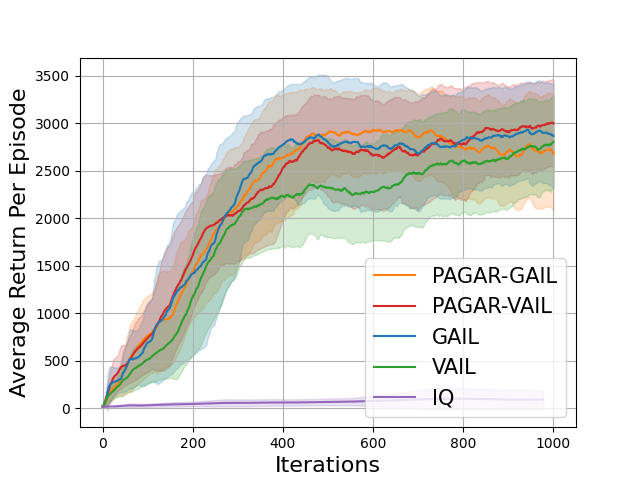}
        \caption{Hopper-v2}
    \end{subfigure} 
\hfill
\begin{subfigure}[b]{0.32\linewidth}
        \includegraphics[width=1.1\linewidth]{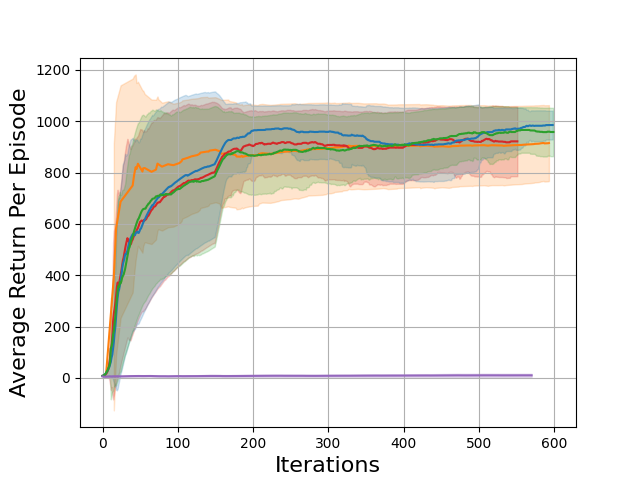}
        \caption{InvertedPendulum-v2}
    \end{subfigure}    
\hfill
\begin{subfigure}[b]{0.32\linewidth}
        \includegraphics[width=1.1\linewidth]{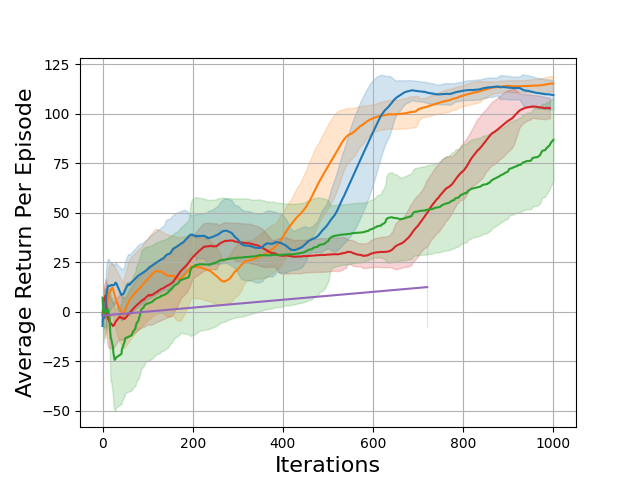}
        \caption{Swimmer-v2}
    \end{subfigure}%
\hfill
\caption{Comparing Algorithm \ref{alg:pagar2_1} with baselines.  The suffix after each `{\tt{PAGAR-}}' indicates which IRL algorithm is utilized in Algorithm 1. The $y$ axis is the average return per step. The $x$ axis is the number of iterations in GAIL, VAIL, and ours. The policy is executed between each iteration for $2048$ timesteps for sample collection.
One exception is that IQ-learn updates the policy at every timestep, making its actual number of iterations $2048$ times larger than indicated in the figures. 
}\label{fig:app_c_1}
\end{figure}

\subsection{Influence of Reward Hypothesis Space}\label{subsec:app_c_3}

In addition to the \textit{DoorKey-6x6-v0} environment, we also tested PAGAR-GAIL and GAIL in \textit{SimpleCrossingS9N2-v0} environment.
The results are shown in Figure \ref{fig:app_c_2}. 

\begin{figure}[htb]
\begin{subfigure}[b]{0.45\linewidth}
        \includegraphics[width=1.1\linewidth]{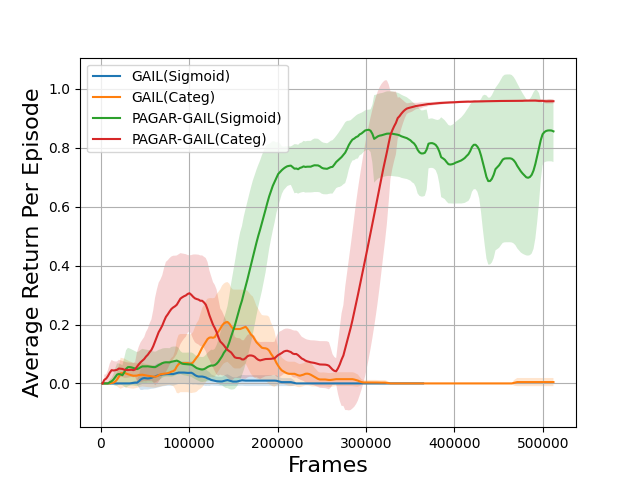}
        \caption{MiniGrid-DoorKey-6x6-v0}
    \end{subfigure}    
\hfill
\begin{subfigure}[b]{0.45\linewidth}
        \includegraphics[width=1.1\linewidth]{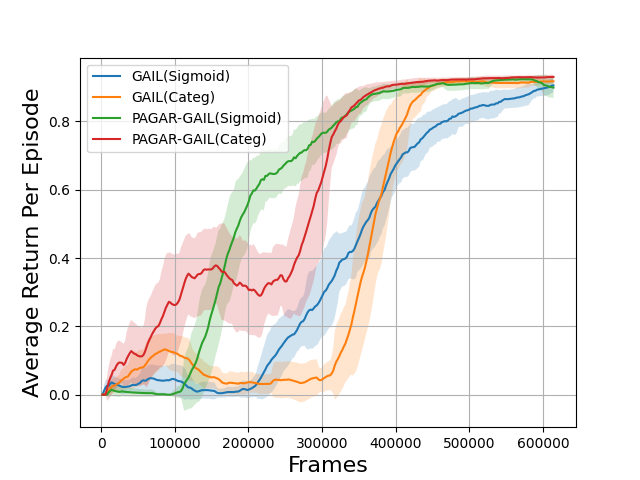}
        \caption{MiniGrid-SimpleCrossingS9N2-v0}
    \end{subfigure}%
\hfill
\caption{Comparing Algorithm \ref{alg:pagar2_1} with baselines.  The prefix `{\tt{protagonist\_GAIL}}' indicates that the IRL algorithm utilized in Algorithm 1 is the same as in GAIL. The `{\tt{\_Sigmoid}}' and `{\tt{\_Categ}}' suffixes indicate whether the output layer of the discriminator is using the $Sigmoid$ function or Categorical distribution. The $x$ axis is the number of sampled frames.  The $y$ axis is the average return per episode. 
}\label{fig:app_c_2}
\end{figure}


\end{document}